\newcommand{\cmark}{\ding{51}}%
\newcommand{\xmark}{\ding{55}}%
\newtheorem{theorem}{Theorem}
\newtheorem{lemma}{Lemma}
\newtheorem{corollary}{Corollary}
\newcommand{\R}{\mathbb{R}}
\def\eg{{e.g.}}
\newcounter{MYtempeqncnt}
\begin{document}
\title{Observability-Enhanced Target Motion Estimation via Bearing-Box: Theory and MAV Applications}

\author{Yin Zhang, Zian Ning, Shiyu Zhao
\thanks{This research work was supported by the National Natural Science Foundation of China (Grant No. 62473320). (Corresponding author: Shiyu Zhao)}
\thanks{Yin Zhang, Zian Ning, and Shiyu Zhao are with the Windy Lab, Department of Artificial Intelligence, Westlake University, Hangzhou, China (e-mail: \{zhangyin, ningzian, zhaoshiyu\}@westlake.edu.cn).}
\thanks{Yin Zhang is also with the College of Computer Science and Technology, Zhejiang University, Hangzhou, China.}
}
\maketitle
\begin{abstract}
Monocular vision-based target motion estimation is a fundamental challenge in numerous applications. This work introduces a novel bearing-box approach that fully leverages modern 3D detection measurements that are widely available nowadays but have not been well explored for motion estimation so far. Unlike existing methods that rely on restrictive assumptions such as isotropic target shape and lateral motion, our bearing-box estimator can estimate both the target's motion and its physical size without these assumptions by exploiting the information buried in a 3D bounding box. When applied to multi-rotor micro aerial vehicles (MAVs), the estimator yields an interesting advantage: it further removes the need for higher-order motion assumptions by exploiting the unique coupling between MAV's acceleration and thrust. This is particularly significant, as higher-order motion assumptions are widely believed to be necessary in state-of-the-art bearing-based estimators. We support our claims with rigorous observability analyses and extensive experimental validation, demonstrating the estimator’s superior performance in real-world scenarios.
\end{abstract}
\begin{IEEEkeywords}
Target motion estimation, observability analysis, bearing-only measurements, pseudo-linear estimator
\end{IEEEkeywords}


\section{Introduction}
This paper studies the problem of estimating the motion of targets based on monocular visual measurements. This is a fundamental problem in various tasks such as autonomous driving \cite{huang2023differentiable}, see-and-avoid systems \cite{leong2021vision, li2021fast}, and dynamic SLAM \cite{yang2019cubeslam, qiu2019tracking}.
Our present work is particularly motivated by the aerial target pursuit task \cite{li2022three, vrba2024onboard, ning2024real}, where one micro aerial vehicle (MAV) pursues another flying MAV using onboard visual sensing.
Estimating the motion of the target MAV with unknown sizes from a monocular camera is a core problem in aerial target pursuit. Compared to other objects, such as ground vehicles or pedestrians, MAV targets can fly in the 3D space with high maneuverability, posing great challenges for motion estimation methods.

One classic method for vision-based motion estimation is to model the monocular visual measurement as a \emph{bearing} vector pointing from the camera to the target \cite{li2022three, chwa2015range}. Specifically, once a target has been detected in an image, its bearing vector relative to the camera can be calculated based on its pixel coordinate and the camera model. Then, the problem can be formulated as a \emph{bearing-only} target motion estimation problem, which has been extensively studied since the 1980s \cite{hepner1990observability, pham1993some, huang2011bearing, he2019trajectory}.

One critical limitation of the bearing-only approach is its restricted observability condition: The target's motion can be estimated only if two conditions are satisfied \cite{he2019trajectory, li2022three}. The first condition is that the observer must have \emph{higher-order} motion than the target. For example, if the target moves with a constant velocity, the observer must have a time-varying velocity. The second condition is that the high-order motion must have a non-zero component in the \emph{lateral} direction that is orthogonal to the bearing vector. For example, if the observer moves along the bearing direction toward the target, the target's motion cannot be recovered due to the lack of lateral maneuvers.
As a result, many researchers have studied to optimize lateral higher-order maneuvers of the observer to enhance observability \cite{roh2018trajectory, yang2022trajectory}.

The requirements of lateral higher-order motion may be \emph{difficult} to satisfy in many practical tasks. For example, in tasks like aerial target pursuit, where the target may maneuver fast, it is already challenging to follow the target, not to mention having additional lateral maneuvers. Moreover, higher-order lateral motion is energy-consuming, which is a critical problem for micro vehicles with limited batteries. In many tasks like autonomous driving, it is unsafe or infeasible for the observer to maneuver in the lateral direction.
Therefore, it is important to study how to enhance the observability \emph{without lateral higher-order maneuvers} for the observer.

To that end, it is necessary to study whether we can enhance observability by better exploiting the monocular visual measurements available nowadays without introducing additional sensors.
The recent work \cite{griffin2021depth} shows that 2D bounding boxes, which are standard visual detection results, can be better exploited to enhance observability. The work in \cite{griffin2021depth} uses this information to estimate the depth of an object even if the observer moves along the bearing vector (hence without lateral motion). This is significant since the observer's lateral maneuver is no longer required. However, one \emph{limitation} of this work is that it assumes that the target is stationary and the observer translates without rotating. The recent work \cite{ning2024bearing} overcomes this limitation by proposing a new \emph{bearing-angle} method based on 2D boxes.

\begin{table*}[htbp]\color{black}
\centering
\renewcommand{\arraystretch}{1.3}
\begin{tabular}{c|c|c|c|c}
\hline \hline
Estimator & \makecell{No requirement of \\higher-order motion } & \makecell{No requirement of\\  lateral motion } & \makecell{No requirement of \\  isotropic shape } & \makecell{Required vision \\ measurements}  \\ \hline
Bearing-only \cite{li2022three, sui2024unbiased, he2019trajectory, wang2024three} & \xmark & \xmark & \cmark & Point   \\ \hline
Bearing-angle \cite{ning2024bearing, griffin2021depth, griffin2020video}  & \xmark & \cmark &  \xmark & 2D box \\\hline
Bearing-box (Ours)  & \xmark   &  \cmark & \cmark & 3D box  \\ \hline 
Bearing-box-MAV (Ours) &  \cmark   &  \cmark & \cmark  & 3D box  \\\hline\hline
\end{tabular}
\caption{The comparisons and observability conditions of different estimators. The proposed method makes a huge breakthrough in the bearing-based motion estimation area. The observability condition of the proposed method is relaxed.}
\label{Adv}
\end{table*}

However, both \cite{griffin2021depth} and \cite{ning2024bearing} rely on an \emph{isotropic shape assumption}: that is, the target's physical size is invariant to the viewing angle of the observer, meaning that the target has a spherical shape. This assumption may be \emph{restricted} because many objects like vehicles, pedestrians, or multicopters cannot be well approximated by spherical shapes. In the present work, we aim to \emph{remove this assumption}. To do that, we must first analyze why this assumption is required in \cite{griffin2021depth, ning2024bearing}. If no prior information on the target's shape is available, it is natural to assume that the target is a sphere so that its physical size can be quantified as the sphere's diameter.

After understanding the fundamental reason for the isotropic shape assumption, we know that this assumption can be removed by better exploiting the target's shape information buried in the existing visual measurements.
In particular, many monocular 3D target detection algorithms have been proposed recently \cite{lin2022keypoint, lin2022single}. These algorithms can output a 3D bounding box that tightly surrounds the target object with normalized dimensions. The 3D bounding box not only indicates the target's bearing but also carries information about the target's shape and attitude. Although more and more 3D detection algorithms have been proposed, they have not yet been well applied to the motion estimation problem. Due to the unknown sizes of the moving targets, the scale ambiguity of 3D detection is another issue, but it can be overcome by combining the proposed estimator. 

Another limitation of existing works \cite{griffin2021depth, li2022three, ning2024bearing} is that they still rely on the \emph{higher-order motion of the observer}. This requirement is difficult to satisfy, especially when meeting targets with high mobility. Therefore, estimating the motion of highly maneuverable MAV targets has been a longstanding challenge. Existing methods pay attention to MAVs with a constant speed \cite{sui2024unbiased, li2022three} and have not explored how to fully exploit the altitude information buried in a 3D bounding box to enhance the observability of high-maneuverable MAVs. Besides, the values of the thrust and mass are not known for non-cooperative MAVs, posing novel challenges for their motion estimation.

In this paper, we show that the 3D visual measurements can be used to greatly enhance observability for both common objects and MAVs.
We propose a novel \emph{bearing-box} approach that can fully utilize 3D object detection measurements. As shown in Table~\ref{Adv}, compared to the state-of-the-art ones, the bearing-box approach not only has strong observability so that the lateral motion is not required but also can successfully remove the isotropic shape assumption required in \cite{griffin2021depth, ning2024bearing}. More importantly, we show that the attitude information buried in 3D detection measurements can greatly enhance the motion estimation of co-planar multicopter MAVs with unknown sizes so that the high-order motion of the observer is not required, and high-maneuverable MAVs could be successfully localized.

The core novelty of this paper lies in the idea that we can significantly enhance the system observability and avoid restricted assumptions in the existing methods by exploring the 3D detection measurements that are already widely available but have not been fully exploited so far. The technical contributions are summarized as follows.

1) The first technical novelty of the bearing-box approach is the proper modeling of the information carried by 3D detection results. That is, to properly formulate the measurement and establish its relationship to the target's state. Since the absolute size of the target is unknown, we formulate the information modeling as \emph{normalized depth estimation}, which is a unique problem emerging in the context of motion estimation based on 3D detection. With the properly formulated vision measurements, an estimator is established based on the pseudo-linear Kalman filtering framework.

Furthermore, necessary and sufficient observability conditions are analyzed. It is shown that the target's motion and size are observable as long as the observer has higher-order motion than the target. Compared to the classic bearing-only approach \cite{li2022three, sui2024unbiased, he2019trajectory, wang2024three}, the lateral motion is no longer required. Compared to the state-of-the-art methods \cite{griffin2021depth, ning2024bearing}, this approach models the target's shape as a cuboid, which can describe a much wider range of objects, such as vehicles, pedestrians, and multicopters.

2) We further extend the bearing-box approach to handle co-planar multicopter MAVs. By noticing that attitude and acceleration are strongly coupled for MAVs, we propose a new formulation of the attitude information buried in the 3D detection measurements.
It is notable that the coupling between attitude and acceleration is unique for MAV targets but not for many other objects. With the attitude measurements, the bearing-box approach can simultaneously estimate the position, velocity, acceleration, and size of the target MAV. 

Necessary and sufficient observability conditions are provided. Surprisingly, we find that the target's state becomes observable even if the observer has \emph{lower-order} motion than the target. For example, observability can be ensured even when the observer is stationary while the MAV target is maneuvering. This is significant because it greatly weakens the requirement for the observer's motion and makes the estimation of maneuverable MAVs possible. The fundamental reason for this result is that the attitude measurement, which is strongly coupled with the target's acceleration, already carries high-order motion information of the target. Hence, the high-order motion of the observer is no longer required.

Finally, the proposed bearing-box approach has been verified by simulation and real-world experiments for ground and aerial vehicles. Our method exhibits superior performance compared to the classic bearing-only methods and the latest methods based on 2D boxes.

The organization of this paper is as follows. Section~\ref{relatedwork} shows the related work. The problem statement is presented in Section~\ref{sec:state}. The basic estimator and extension to MAVs are presented in Section~\ref{Estimator} and Section~\ref{MAVEstimator}, respectively. Their observability conditions are derived in Section~\ref{AnalysisSec}. The experiments conducted on common objects and MAVs are shown in Section~\ref{sec:commonexp} and Section~\ref{SecMAVExp}, respectively.


\section{Related Work}
\label{relatedwork}
\subsection{Bearing-Only Target Motion Estimators}

The bearing-only target motion estimation was initially designed for ship localization on the ocean \cite{aidala1983utilization, sabet2016optimal}. With the rapid development of RGB-based cameras, bearing-only approaches have become popular in vision-based motion estimation tasks in recent years \cite{dias2015decentralized, he2019trajectory, calkins2021bearing}. The bearing-only estimators can be classified into two categories.

The first category is based on Kalman filtering. Since the bearing measurement is a highly nonlinear function of the target's position, the estimation is often unstable if the extended Kalman filter (EKF) is used \cite{sabet2016optimal}. The works in \cite{ martin2020cramer, xiourouppa2025theoretical} propose modified polar EKFs and UKFs to improve estimation stability. A more flexible and popular method to enhance the stability is pseudo-linear Kalman filtering, which transforms nonlinear measurement equations into pseudo-linear to avoid the linearization errors \cite{he2018three, li2022three, ning2024bearing, zhang2025closed}. The key is to describe the mathematical relations between the state and the measurements in a pseudo-linear way. 

The second category contains other non-Kalman-filtering methods. For example, some works \cite{brehard2007hierarchical, sun2023adaptive} utilize particle filters to deal with the nonlinearity and non-Gaussian distribution issues. Maximum likelihood estimation (MLE) is known for its asymptotic characteristics and is close to the Cramer-Rao Lower Bound (CRLB) when there are infinite observation numbers \cite{naseri2021novel, lowney2024target}. Surveys and comparisons can be found in \cite{arulampalam2000comparison, gadsden2009comparison}. In general, these methods have much higher computational costs than the first category.

\subsection{Observability Analysis of Bearing-Based Estimation}

Observability analysis aims to determine under what conditions the target's state can be recovered \cite{jauffret2017observability}. There are two common methods for observability analysis.

The first is calculating the \emph{rank} of the observation matrix based on Kalman’s observability
criterion \cite{northardt2022observability}. For bearing-only estimation, the works in \cite{nardone1981observability, le1997discrete} establish the first-order system's observation matrix based on discrete-time observations and calculate its rank to obtain the observability conditions. The results show that the observer must have lateral motion and nonzero acceleration. The work in \cite{ferdowsi2006observability} proves that when the target is moving with a constant speed in the 3D space, at least three observations are needed. The work in \cite{yang2022trajectory} shows that the static target is observable when there are three observations, if the angular bias is considered. However, the conclusions are restricted to the case where the target has a constant speed or acceleration \cite{jauffret2017observability}.

The second category extends the target motion to \emph{$n$th-order dynamics} by solving linear equations.
The work in \cite{fogel1988nth} describes the target's motion as $n$th-order dynamics to obtain generalized observability conditions for 2D scenarios. Then, the work in \cite{song1996observability} extends the method into 3D space and derives the same conclusions that the observer should be in $(n+1)$th-order dynamics when the target is in $n$th-order dynamics. The number of required observations has also been analyzed in the bearing-angle estimator \cite{ning2024bearing}.

This paper adopts both approaches to analyze the proposed estimator's observability conditions comprehensively.

\subsection{3D Monocular Vision Detection}
3D monocular target detection algorithms can be classified into \emph{instance-level} \cite{hu2021wide, liu2022gen6d} and \emph{category-level} \cite{lin2022keypoint, lin2022single}. 

An instance-level algorithm can only work for specific objects. Since the physical size of the specific object is usually known in advance, such an algorithm can also output the object's depth by embedding a PnP pose estimation component \cite{lepetit2009ep}. However, the depth output is unreliable if there are multiple objects with similar appearance but different sizes, such as adult and child pedestrians. As demonstrated in Fig.~\ref{CubeResults}, the Gen6D method \cite{liu2022gen6d} can output accurate attitude but wrong depth results on multiple cubes with similar appearances but different sizes. Some works in \cite{wen2024foundationpose, liu2024line} aim to estimate the 6D pose of the targets first and then track the pose based on the CAD model with known sizes. The depth ambiguity issue can be resolved further by combining our estimation method. 

\begin{figure}[t!]
\centering
\subfigure[Objects with similar appearances but different sizes;]{
\label{CubeSet}
\includegraphics[width=0.97\linewidth]{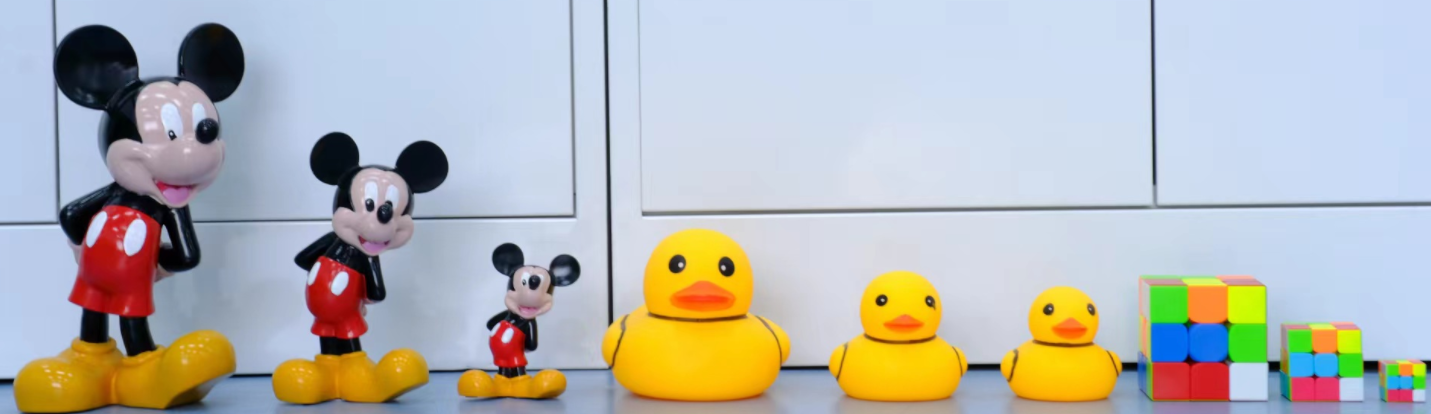}}
\subfigure[Large cube (9.0 cm);]{
\centering
\includegraphics[width=0.31\linewidth]{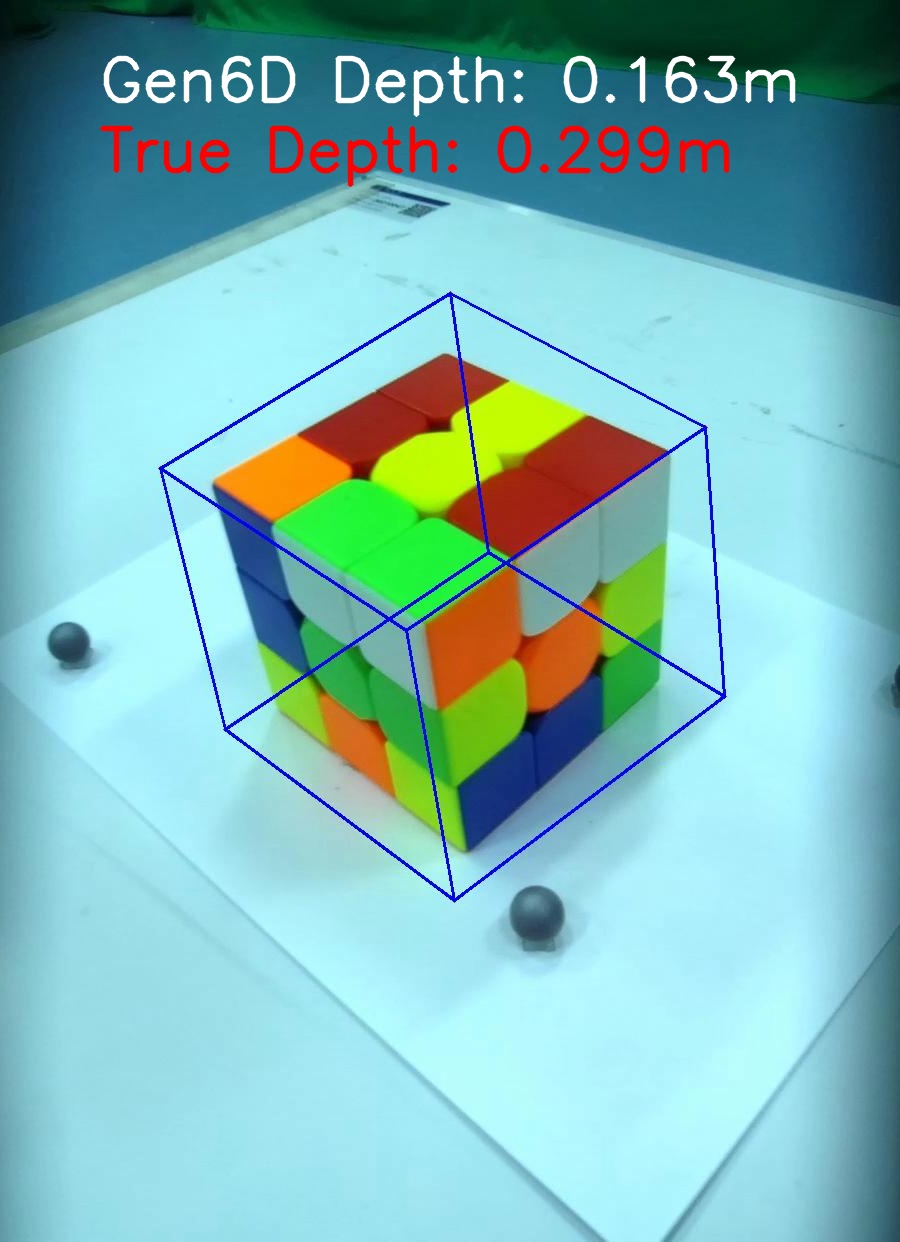}}
\subfigure[Medium (5.6 cm);]{
\centering
\includegraphics[width=0.31\linewidth]{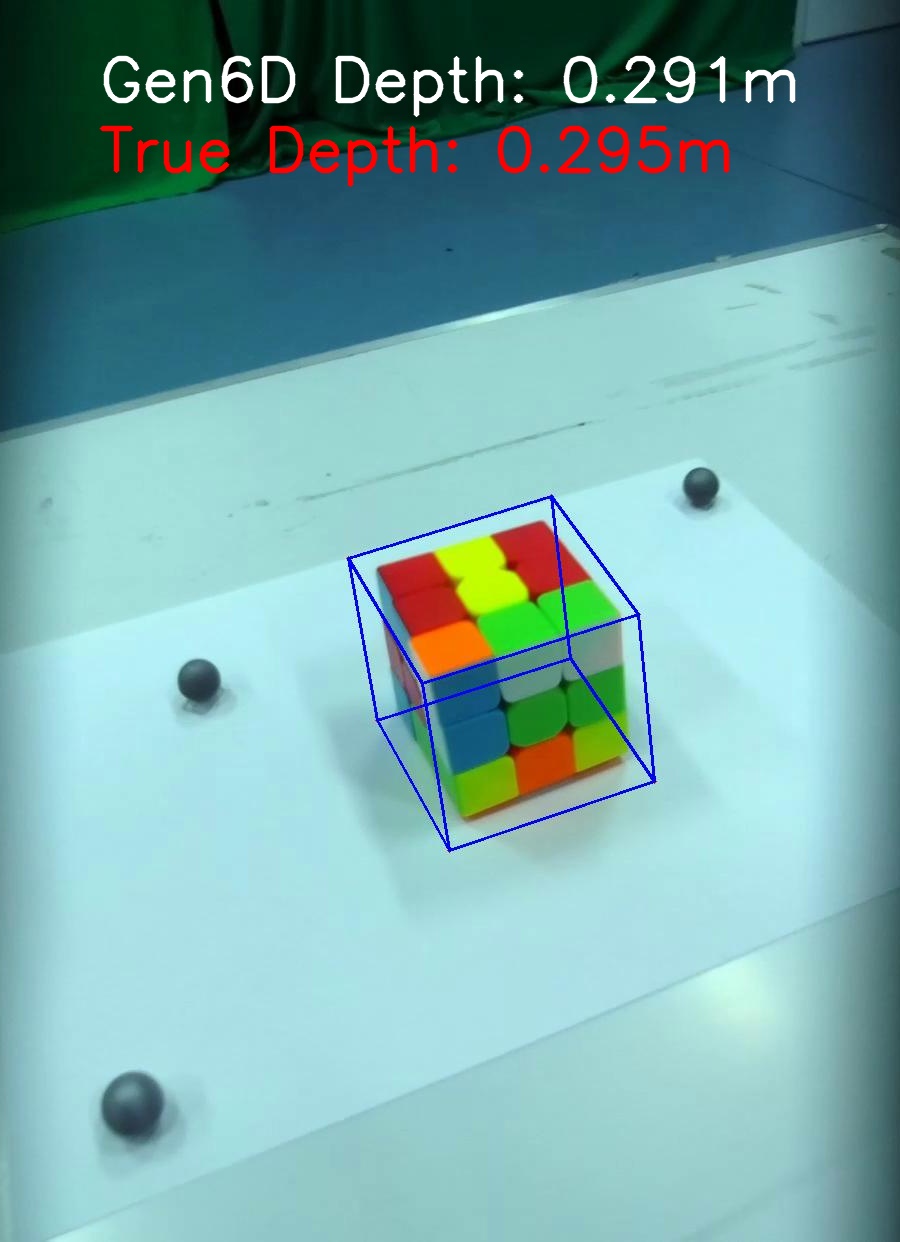}}
\subfigure[Small cube (3.0 cm);]{
\centering
\includegraphics[width=0.31\linewidth]{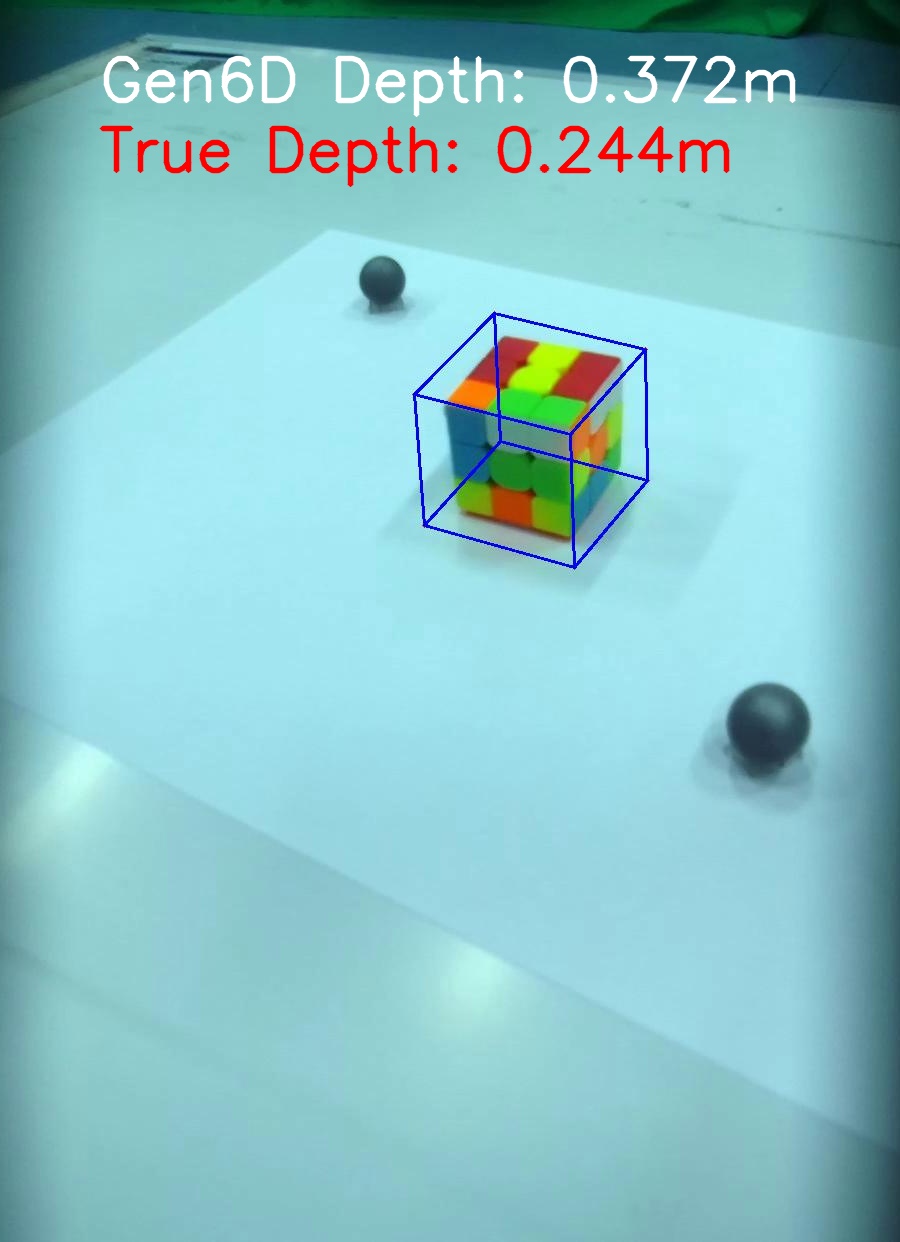}}
\caption{The experiment of existing pose estimation algorithms illustrates that monocular-based methods cannot deal with objects with similar appearances but different sizes. The Gen6D method \cite{liu2022gen6d} is trained on the medium cube and thus works ineffectively on the other two cubes. }
\label{CubeResults}
\end{figure}

By contrast, monocular category-level algorithms can detect different categories of objects with diverse appearances. These algorithms do not estimate the depth or object's physical size but provide the normalized dimensions \cite{lin2022keypoint, lin2022single}. With the help of additional information such as the camera's height \cite{ yang2019cubeslam} or category-level size (\eg, CAD or average mesh models and average dimensions), the depth can also be further calculated. However, the absolute depth estimation is unreliable when the camera's height is unknown or the objects' size is inaccurate. Some works propose to use multi-view constraints to optimize the poses, but are more suitable for static targets \cite{yang2024mv}, while this paper focuses on moving objects. As a result, we only adopt the ratios between the target's dimensions in three orthogonal directions that category-level algorithms can provide. 

In summary, both the instance-level and the category-level pose estimation methods meet the scale ambiguity problem when dealing with objects with similar appearances but different sizes due to the particularity of the monocular camera. The actual size and motion of the target can be estimated by combining our method. 


\section{Problem Statement} \label{sec:state}

Consider a target object moving in the 3D space. Suppose the target object can be approximated as a 3D cuboid (see Fig.~\ref{WHL}). Denote $\mathbf{p}_o^w,\mathbf{v}_o^w\in\R^3$ as the true (unknown) position and velocity of the object expressed in the world frame. Let $\{\ell_i\}_{i=1}^3 \in \R$ be the true (unknown) dimensions of the 3D cuboid, and $\{\mathbf{p}_i^o\}_{i=1}^8 \in \R^3$ be the true (unknown) coordinates of its eight vertices expressed in the object frame. Without loss of generality, define
\begin{align}
\mathbf{p}_1^o=\left[\frac{\ell_1}{2},\frac{\ell_2}{2},\frac{\ell_3}{2}\right]^\textup{T}.
\end{align}
$\mathbf{p}_2^o,\mathbf{p}_3^o,\dots,\mathbf{p}_8^o$ can be defined similarly.

Suppose a moving monocular camera observes the target.
Let $\mathbf{p}_c^w\in\R^3$ be the position of the camera expressed in the world frame, and $\mathbf{R}_c^w\in\R^{3\times 3}$ the rotation from the camera frame to the world frame. Suppose that $\mathbf{p}_c^w$ and $\mathbf{R}_c^w$ can be measured by either a local navigation system as in \cite{leutenegger2015keyframe} or an external global positioning system.

\begin{figure}[t]
\centering\includegraphics[width=0.7\linewidth]{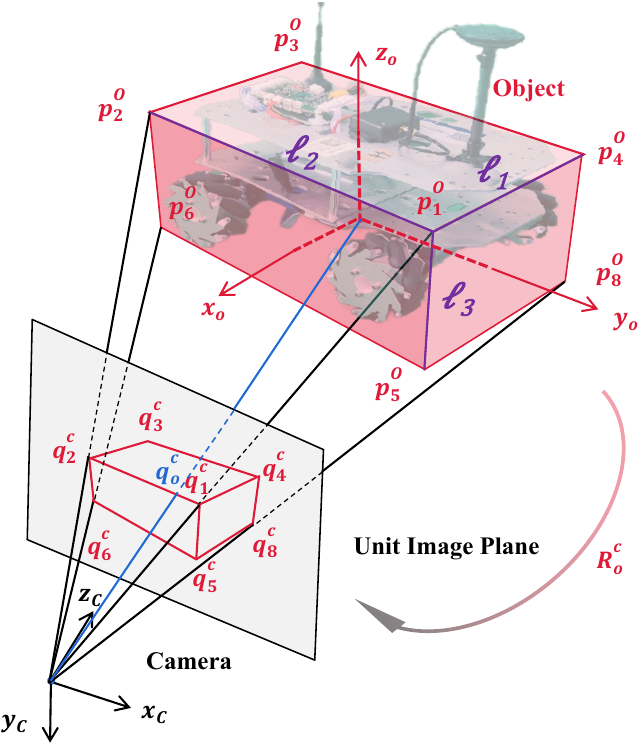}
\caption{The projection of the 3D box on the unit image plane. $\mathbf{R}_o^c$, $\mathbf{q}_o^c$ and $\mathbf{q}_i^c$ can be obtained from the 3D detection algorithms.}
\label{WHL}
\end{figure}

\begin{figure*}[t!]
\centering\includegraphics[width=0.99\textwidth]{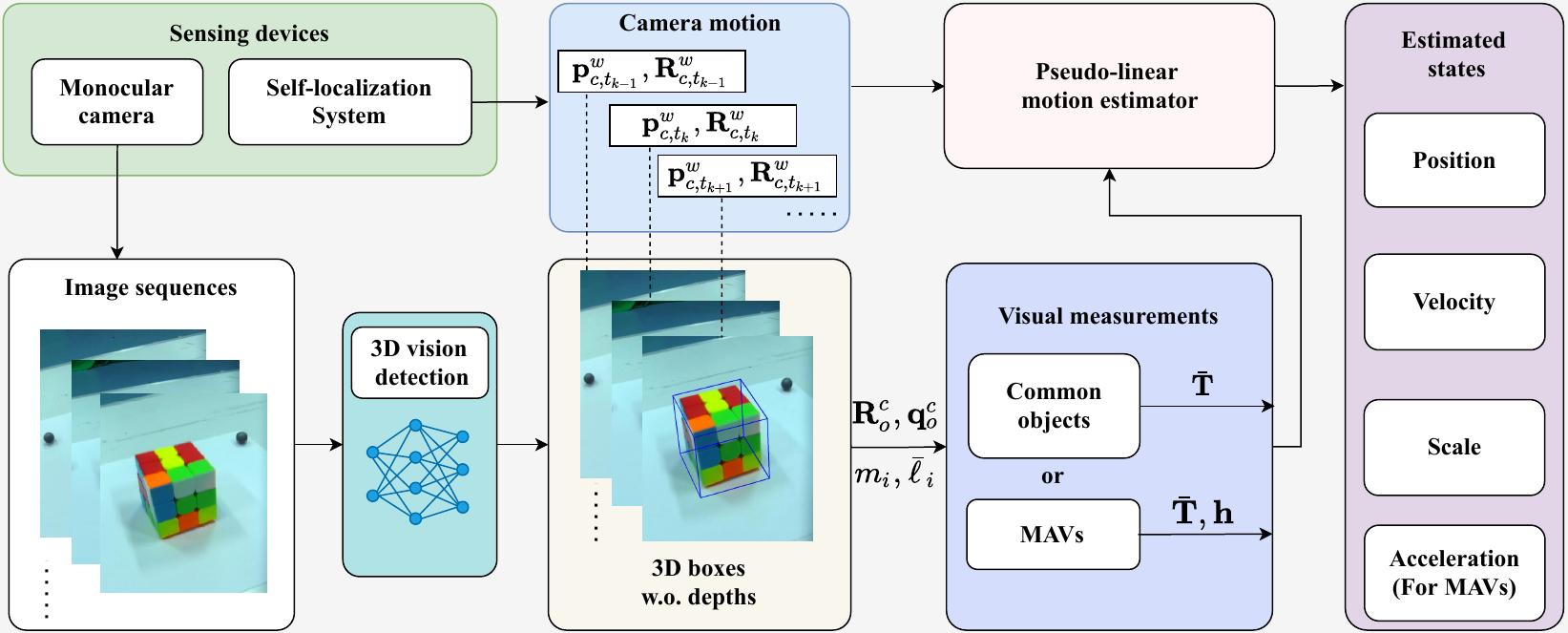}
\caption{The framework of the proposed bearing-box motion estimator for both common objects and MAVs. The rotation $\textbf{R}_o^c$, normalized dimensions $l_i$, projected vectors $\mathbf{q}_i^c$ could be obtained from 3D detection algorithms. The thrust direction $\textbf{h}$ is additionally measured for target MAVs.}
\label{MainIdea}
\end{figure*}

The outputs that we can obtain from a 3D detection algorithm \cite{hu2021wide, liu2022gen6d, lin2022single} are summarized as follows.
\begin{itemize}[leftmargin=*]
\item[1)] $\bar{\ell}_1,\bar{\ell}_2,\bar{\ell}_3 \in\R^1$: the normalized dimensions of the object cuboid. In essence, it tells the relative ratios of the side lengths of the object's cuboid.
    Without loss of generality, we can set $\bar{\ell}_1=1$;
\item[2)] $\mathbf{R}_o^c\in\R^{3\times3}$: the rotation from the object frame to the camera frame. It tells the attitude of the target, which will be particularly useful when we deal with MAVs.
\item[3)] $\{\mathbf{q}_i^c\}_{i=1}^8\in\R^3$: the projection of the eight vertices of the object cuboid onto the unit image plane. In fact, $\{\mathbf{q}_i^c\}_{i=1}^8\in\R^3$ are calculated from the pixel coordinates $\{\mathbf{m}_i=(m_{ix},m_{iy})\}_{i=1}^8$ that correspond to the eight vertices of the object cuboid. We can convert $\{\mathbf{m}_i\}_{i=1}^8$ to $\{\mathbf{q}_i^c\}_{i=1}^8$ by using
\begin{align}
    \mathbf{q}_i^c = \left[\frac{m_{ix}-c_x}{f_x},\frac{m_{iy}-c_y}{f_y}, 1\right]^\textup{T},
\label{q_i^c_proj}
\end{align}
where $(f_x,f_y)$ is the focal length, and $(c_x, c_y)$ is the pixel coordinate of the image center.
\item[4)] $\mathbf{q}_o^c\in\R^3$: the projection of the object's center point onto the unit image plane. It represents the bearing vector pointing from the camera's center to the object's center and can be calculated similarly by \eqref{q_i^c_proj}. Different from the unit form, $q_o^c$ contains both the bearing and length information. 
\end{itemize}

It is worth mentioning that the four types of measurements mentioned above are dependent on each other. Some can be implied from the others. The reason that we list them all is to clarify what quantities will be used in our method. A 3D vision detection algorithm is not required to output all the measurements simultaneously. 

Our goal is to estimate the position $\mathbf{p}_o^w$ and velocity $\mathbf{v}_o^w$ of the target as well as its physical size $\{\ell_i\}_{i=1}^3$ based on 1) the camera's own position $\mathbf{p}_c^w$ and attitude $\mathbf{R}_c^w$ and 2) the 3D detection measurements $\{\bar{\ell}_i\}_{i=1}^3$, $\mathbf{R}_o^c$, $\mathbf{q}_o^c$, and $\{\mathbf{q}_i^c\}_{i=1}^8$. When we study MAVs in Section~\ref{MAVEstimator}, its acceleration will also be estimated. The overall system structure is shown in Fig.~\ref{MainIdea}. 

\section{Bearing-Box Estimator}

\label{Estimator}

This section proposes the bearing-box estimator that can exploit the 3D detection measurements. The necessary and sufficient observability condition is also presented.

\subsection{Visual Measurement Modeling from 3D Box}

First of all, we derive the measurement equation that establishes the relationship between the 3D vision measurements and the target's state.

Since $\{\bar{\ell}_i\}_{i=1}^3$ are the normalized dimensions, there exists an unknown scale factor $\alpha$ such that
\begin{align}
    \bar{\ell}_i=\frac{1}{\alpha}\ell_i,\quad i=1,2,3.
\end{align}
Since $\bar{\ell}_1=1$, we have $\alpha=\ell_1$. As a result, $\alpha$ corresponds to a side length of the object cuboid.
Define
\begin{align}
\bar{\mathbf{p}}_1^o\doteq \left[\frac{\bar{\ell_1}}{2},\frac{\bar{\ell_2}}{2},\frac{\bar{\ell_3}}{2}\right]^\textup{T}=\frac{1}{\alpha}\mathbf{p}_1^o
\end{align}
as the normalized version of $\mathbf{p}_1^o$. Similarly, we have
\begin{align}\label{eq_normalizedpo}
\bar{\mathbf{p}}_i^o=\frac{1}{\alpha}\mathbf{p}_i^o,\quad i=2,\dots,8.
\end{align}
Up to now, $\{\bar{\ell}_i\}_{i=1}^3$ has been transformed to $\{\bar{\mathbf{p}}_i^o\}_{i=1}^8$, which will be used for estimation as follows.

Denote $\mathbf{p}_o^c\in\R^3$ as the position of the object in the camera frame.
If $\mathbf{p}_o^c$ can be estimated, the depth of the target can be immediately obtained. However, it is \emph{impossible} to estimate $\mathbf{p}_o^c$ using a single image since $\{\mathbf{p}_i^o\}_{i=1}^8$ are unknown.
The fundamental idea here is to model the visual measurements as the following normalized value:
\begin{align}
\bar{\mathbf{p}}_o^c\doteq \frac{\mathbf{p}_o^c}{\alpha} \in \R^3.
\end{align}
The following lemma shows that we can obtain $\bar{\mathbf{p}}_o^c$ using a single image, although neither $\mathbf{p}_o^c$ nor $\alpha$ can be estimated.

\begin{lemma}[(Normalized depth estimation)]
\label{lemmat_o}
Let $\mathbf{Q}_i\doteq \mathbf{I}-\mathbf{q}_i^c\mathbf{e}_3^\textup{T}$, where $\mathbf{q}_i^c$ is the projection of $\mathbf{p}_i^c$ on the unit image plane. Then, it can be calculated that
\begin{align}
	\bar{\mathbf{p}}_o^c = -\left(\sum_{i=1}^8 \mathbf{Q}_i^\textup{T}\mathbf{Q}_i\right)^{-1} \left(\sum_{i=1}^8 \mathbf{Q}_i^\textup{T} \mathbf{Q}_i \mathbf{R}_o^c \bar{\mathbf{p}}_i^o\right),
\label{distance}
\end{align}
where $\bar{\mathbf{p}}_i^o$ is given in \eqref{eq_normalizedpo}. The matrix $\sum_{i=1}^8 \mathbf{Q}_i^\textup{T}\mathbf{Q}_i$ is always non-singular since $\{\mathbf{q}_i^c\}_{i=1}^8$ are non-collinear.
\end{lemma}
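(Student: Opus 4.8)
The plan is to set up the geometric projection constraint for each vertex and then solve the resulting overdetermined linear system in the least-squares sense. First I would write down, for each vertex $i$, the relation between its camera-frame coordinate $\mathbf{p}_i^c$ and its object-frame coordinate: $\mathbf{p}_i^c = \mathbf{p}_o^c + \mathbf{R}_o^c \mathbf{p}_i^o$. Dividing through by the scale $\alpha$ and using the normalized quantities gives $\bar{\mathbf{p}}_i^c = \bar{\mathbf{p}}_o^c + \mathbf{R}_o^c \bar{\mathbf{p}}_i^o$, where the right-hand side is entirely known except for the unknown $\bar{\mathbf{p}}_o^c$ we wish to recover. The projection onto the unit image plane says that $\mathbf{p}_i^c$ is a (positive) scalar multiple of $\mathbf{q}_i^c$; equivalently, the component of $\mathbf{p}_i^c$ orthogonal to $\mathbf{q}_i^c$ — after normalizing the third coordinate — vanishes. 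Concretely, if $\mathbf{q}_i^c$ has third entry $1$ (by \eqref{q_i^c_proj}), then $\mathbf{p}_i^c = (\mathbf{e}_3^\textup{T}\mathbf{p}_i^c)\,\mathbf{q}_i^c$, so $(\mathbf{I} - \mathbf{q}_i^c\mathbf{e}_3^\textup{T})\mathbf{p}_i^c = \mathbf{0}$, i.e. $\mathbf{Q}_i \mathbf{p}_i^c = \mathbf{0}$. The same identity holds after dividing by $\alpha$: $\mathbf{Q}_i \bar{\mathbf{p}}_i^c = \mathbf{0}$.

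Next I would substitute $\bar{\mathbf{p}}_i^c = \bar{\mathbf{p}}_o^c + \mathbf{R}_o^c \bar{\mathbf{p}}_i^o$ into $\mathbf{Q}_i \bar{\mathbf{p}}_i^c = \mathbf{0}$ to get, for each $i$, the linear equation $\mathbf{Q}_i \bar{\mathbf{p}}_o^c = -\mathbf{Q}_i \mathbf{R}_o^c \bar{\mathbf{p}}_i^o$ in the single unknown vector $\bar{\mathbf{p}}_o^c \in \R^3$. Stacking these eight $3$-dimensional equations yields an overdetermined system $\mathbf{A}\bar{\mathbf{p}}_o^c = \mathbf{b}$ with $\mathbf{A} = [\mathbf{Q}_1^\textup{T},\dots,\mathbf{Q}_8^\textup{T}]^\textup{T}$. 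In the noise-free case all equations are consistent, but to write a clean closed form (and to be robust to measurement noise) I would take the normal-equations/least-squares solution $\bar{\mathbf{p}}_o^c = (\mathbf{A}^\textup{T}\mathbf{A})^{-1}\mathbf{A}^\textup{T}\mathbf{b}$, which upon expanding $\mathbf{A}^\textup{T}\mathbf{A} = \sum_i \mathbf{Q}_i^\textup{T}\mathbf{Q}_i$ and $\mathbf{A}^\textup{T}\mathbf{b} = -\sum_i \mathbf{Q}_i^\textup{T}\mathbf{Q}_i\mathbf{R}_o^c\bar{\mathbf{p}}_i^o$ is exactly \eqref{distance}. (Here one uses $\mathbf{Q}_i^\textup{T}(-\mathbf{Q}_i\mathbf{R}_o^c\bar{\mathbf{p}}_i^o) = -\mathbf{Q}_i^\textup{T}\mathbf{Q}_i\mathbf{R}_o^c\bar{\mathbf{p}}_i^o$, matching the stated formula.)

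The remaining — and only nontrivial — step is to establish invertibility of $\sum_{i=1}^8 \mathbf{Q}_i^\textup{T}\mathbf{Q}_i$. Since each $\mathbf{Q}_i^\textup{T}\mathbf{Q}_i$ is positive semidefinite, the sum is positive definite iff the common kernel $\bigcap_i \Null(\mathbf{Q}_i)$ is trivial. Now $\mathbf{Q}_i = \mathbf{I} - \mathbf{q}_i^c \mathbf{e}_3^\textup{T}$ has a one-dimensional kernel spanned by $\mathbf{q}_i^c$: indeed $\mathbf{Q}_i \mathbf{x} = \mathbf{0}$ forces $\mathbf{x} = (\mathbf{e}_3^\textup{T}\mathbf{x})\mathbf{q}_i^c$, which is a multiple of $\mathbf{q}_i^c$ (and the multiplier is nonzero unless $\mathbf{x}=\mathbf{0}$, since $\mathbf{e}_3^\textup{T}\mathbf{q}_i^c = 1$). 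Hence a nonzero vector in the common kernel would have to be simultaneously proportional to every $\mathbf{q}_i^c$, i.e. all the $\mathbf{q}_i^c$ would be collinear — contradicting the stated non-collinearity of the projected vertices (a genuine 3D cuboid does not project to a single line). Therefore the kernel is trivial and the matrix is invertible. I expect this kernel argument to be the main point to get right; the rest is bookkeeping with the projection relation. It is worth remarking that, strictly, one only needs \emph{two} non-collinear $\mathbf{q}_i^c$ to span $\R^3$ together with the $\mathbf{e}_3$-direction structure — but invoking non-collinearity of the full vertex set is the cleanest sufficient condition to state.
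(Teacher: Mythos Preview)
Your proposal is correct and follows essentially the same route as the paper: derive $\mathbf{Q}_i\mathbf{p}_i^c=\mathbf{0}$ from the unit-image-plane projection, substitute the rigid-body relation and divide by $\alpha$ to obtain the linear constraints $\mathbf{Q}_i\bar{\mathbf{p}}_o^c=-\mathbf{Q}_i\mathbf{R}_o^c\bar{\mathbf{p}}_i^o$, stack them and solve via the normal equations, then argue invertibility by noting $\Null(\mathbf{Q}_i)=\myspan\{\mathbf{q}_i^c\}$ so the common kernel is trivial unless all $\mathbf{q}_i^c$ are collinear. Your kernel/positive-semidefinite framing is slightly more explicit than the paper's, but the argument is the same.
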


\begin{proof}
Since $\mathbf{p}_i^c \in \R^3$ is the coordinate of $\mathbf{p}_i^o$ expressed in the camera frame, we have
\begin{align}\label{eq_picpio}
	\mathbf{p}_i^c = \mathbf{R}_o^c\mathbf{p}_i^o+\mathbf{p}_o^c,
\end{align}
where $\mathbf{R}_o^c$ and $\mathbf{p}_o^c$ are the rotation and translation from the object frame to the camera frame. $\mathbf{q}_i^c$ is the perspective projection of $\mathbf{p}_i^c$ onto the unit image plane. It holds that
\begin{align}
	\mathbf{q}_i^c = \frac{\mathbf{p}_i^c}{\mathbf{e}_3^\textup{T} \mathbf{p}_i^c},
\label{q_i^c}
\end{align}
where $\mathbf{e}_3=[0, 0, 1]^\textup{T}\in\R^3$ is the third column of the $3\times3$ identity matrix. Geometrically, $\mathbf{e}_3^\textup{T} \mathbf{p}_i^c$ corresponds to the distance of the 3D point along the optical axis.
Multiplying $\mathbf{e}_3^\textup{T}\mathbf{p}_i^c$ on both sides of \eqref{q_i^c} yields
\begin{align}
(\mathbf{I}-\mathbf{q}_i^c\mathbf{e}_3^\textup{T})\mathbf{p}_i^c = 0,
\end{align}
where $\mathbf{I}$ is the identity matrix.
Substituting $\mathbf{p}_i^c = \mathbf{R}_o^c\mathbf{p}_i^o+\mathbf{p}_o^c$ in \eqref{eq_picpio} into the above equation gives
\begin{align}\label{eq_bearingEquation}
	(\mathbf{I}-\mathbf{q}_i^c\mathbf{e}_3^\textup{T})\mathbf{R}_o^c\mathbf{p}_i^o=-(\mathbf{I}-\mathbf{q}_i^c\mathbf{e}_3^\textup{T})\mathbf{p}_o^c.
\end{align}
Let $\mathbf{Q}_i\doteq \mathbf{I}-\mathbf{q}_i^c\mathbf{e}_3^\textup{T} \in \R^{3\times3}$. $\mathbf{Q}_i$ represents the projection matrix on the image plane of $\mathbf{p}_i^c$. Then, \eqref{eq_bearingEquation} becomes
\begin{align}
\mathbf{Q}_i\mathbf{R}_o^c\mathbf{p}_i^o=-\mathbf{Q}_i\mathbf{p}_o^c.
\label{Q_iR_o^c}
\end{align}
Since $\mathbf{p}_i^o=\alpha \bar{\mathbf{p}}_i^o$, dividing $\alpha$ on both sides of \eqref{Q_iR_o^c} yields
\begin{align}
\mathbf{Q}_i \mathbf{R}_o^c\bar{\mathbf{p}}_i^o=-\mathbf{Q}_i\bar{\mathbf{p}}_o^c,
\label{RealTranslation}
\end{align}
where $\bar{\mathbf{p}}_o^c$ is the only unknown to be calculated.
Since $\mathbf{Q}_i$ is singular, we are not able to directly calculate $\bar{\mathbf{p}}_o^c$ from \eqref{RealTranslation}. However, since \eqref{RealTranslation} is valid for $i=1,\dots,8$, we can obtain a set of linear equations of $\bar{\mathbf{p}}_o^c$:
\begin{align}\label{LS_T_o_c}
\left [ \begin{array}{c}
\mathbf{Q}_1 \\
\mathbf{Q}_2 \\
\vdots \\
\mathbf{Q}_8
\end{array} \right] \bar{\mathbf{p}}_o^c = -
\left [ \begin{array}{c}
\mathbf{Q}_1 \mathbf{R}_o^c\bar{\mathbf{p}}_1^o \\
\mathbf{Q}_2 \mathbf{R}_o^c\bar{\mathbf{p}}_2^o \\
\vdots \\
\mathbf{Q}_8 \mathbf{R}_o^c\bar{\mathbf{p}}_8^o
\end{array} \right]
\end{align}
The least-squares solution of \eqref{LS_T_o_c} is
\begin{align}
	\bar{\mathbf{p}}_o^c = -\left(\sum_{i=1}^8 \mathbf{Q}_i^\textup{T}\mathbf{Q}_i\right)^{-1} \left(\sum_{i=1}^8 \mathbf{Q}_i^\textup{T} \mathbf{Q}_i \mathbf{R}_o^c \bar{\mathbf{p}}_i^o\right).
\end{align}
All the information provided by the eight projected points could be fully utilized by this approach.

The above equation requires $\sum_{i=1}^8 \mathbf{Q}_i^\textup{T}\mathbf{Q}_i$ being non-singular. It can be verified that the null space of $\mathbf{Q}_i=\mathbf{I}-\mathbf{q}_i^c\mathbf{e}_3^\textup{T}$ is the span of $\mathbf{q}_i^c$. Therefore, $\sum_{i=1}^8 \mathbf{Q}_i^\textup{T}\mathbf{Q}_i$ is singular if and only if $\{\mathbf{q}_i^c\}_{i=1}^8$ are collinear.
Since $\{\mathbf{q}_i^c\}_{i=1}^8$ are the eight vertices of a cuboid, they are non-collinear. Therefore, $\sum_{i=1}^8 \mathbf{Q}_i^\textup{T}\mathbf{Q}_i$ is always non-singular.
\end{proof}

The problem solved in Lemma~\ref{lemmat_o} is essentially a normalized pose estimation problem. That is, since we are not able to estimate $\mathbf{p}_o^c$, we can estimate the normalized version $\bar{\mathbf{p}}_o^c$ with the scaling factor as the physical size $\alpha$.

In summary, the 3D detection visual measurements are modeled as $\bar{\mathbf{p}}_o^c$, which will be used to estimate the target's state as shown in the next subsection.

\subsection{Motion Estimator}

We now present the motion estimator based on the framework of pseudo-linear Kalman filtering.

The target's state vector that we would like to estimate is
\begin{align}
	\mathbf{x}=\left[
	\begin{array}{ccc}
		\mathbf{p}_o^w \\ \mathbf{v}_o^w \\ \alpha \\
	\end{array}
	\right]\in\R^{7},
\end{align}
where $\mathbf{p}_o^w$ and $\mathbf{v}_o^w$ are the position and velocity of the target in the world frame, and $\alpha$ represents its physical size.

\subsubsection{Pseudo-Linear Measurement Equation}

The relationship between $\bar{\mathbf{p}}_o^c$ and the target's state is
\begin{align}
\mathbf{p}_o^w-\mathbf{p}_c^w = \mathbf{T}_{oc}^w = \mathbf{R}_c^w\mathbf{p}_o^c=\mathbf{R}_c^w\bar{\mathbf{p}}_o^c \alpha,
\end{align}
where $\mathbf{p}_o^w$ and $\alpha$ are the unknowns to be estimated. $\mathbf{T}_{oc}^w \in \R^3$ represents the vector from the camera coordinate to the object coordinate in the world frame. Substituting
\begin{align}
    \bar{\mathbf{T}}_{oc}^w = \mathbf{R}_c^w\bar{\mathbf{p}}_o^c
\end{align}
into the above equation obtains
\begin{align}
    \mathbf{p}_o^w-\mathbf{p}_c^w = \bar{\mathbf{T}}_{oc}^w\alpha.
\end{align}
Then, it can be rewritten in terms of the target's state vector:
\begin{align}
\mathbf{p}_c^w
=\mathbf{p}_o^w-\bar{\mathbf{T}}_{oc}^w\alpha 
=\begin{bmatrix}
    \mathbf{I}_{3\times3} & \mathbf{O}_{3\times3} & -\bar{\mathbf{T}}_{oc}^w
\end{bmatrix}
\begin{bmatrix}
    \mathbf{p}_o^w \\
    \mathbf{v}_o^w \\
    \alpha
\end{bmatrix}.
\label{Observation1_w}
\end{align}
We next derive the noisy version of \eqref{Observation1_w}.
Suppose that $\bar{\mathbf{T}}_{oc}^w$ obtained by \eqref{distance} is corrupted by a noise:
\begin{align}
\tilde{\bar{\mathbf{T}}}_{oc}^w = \bar{\mathbf{T}}_{oc}^w + \mathbf{\epsilon}_{\bar{T}},
\end{align}
where $\mathbf{\epsilon}_{\bar{T}} \sim \mathcal{N}(0, \mathbf{\Sigma}_{\bar{T}})$ with $\mathbf{\Sigma}_{\bar{T}} = \sigma_{\bar{T}}^2\mathbf{I}_{3 \times 3} $. Substituting $\bar{\mathbf{T}}_{oc}^w=\tilde{\bar{\mathbf{T}}}_{oc}^w-\mathbf{\epsilon}_{\bar{T}}$ into \eqref{Observation1_w} gives
\begin{align}
\mathbf{p}_c^w = \mathbf{p}_o^w - \tilde{\bar{\mathbf{T}}}_{oc}^w\alpha + \alpha\mathbf{\epsilon}_{\bar{T}}.
\end{align}
Then, the pseudo-linear measurement equation is
\begin{align}\label{eq_measurementEquation_noisy}
	\underbrace{\left[
		\begin{array}{c}
			\mathbf{p}_c^w\\
		\end{array}
		\right]}_{\mathbf{z}}=
	\underbrace{\left[
		\begin{array}{ccc}
			\mathbf{I} & \mathbf{O} & -\tilde{\bar{\mathbf{T}}}_{oc}^w\\
		\end{array}
		\right]}_{\mathbf{H}}
	\underbrace{\left[
		\begin{array}{c}
			\mathbf{p}_o^w \\
			\mathbf{v}_o^w \\
			\alpha \\
		\end{array}
		\right]}_{\mathbf{x}}
	+\underbrace{\left[
		\begin{array}{c}
			\alpha \mathbf{\epsilon}_{\bar{T}}\\
		\end{array}
		\right]}_{\mathbf{\epsilon}}.
\end{align}
At time step $k$, $\mathbf{z},\mathbf{H},\mathbf{x},\mathbf{\epsilon}$ become $\mathbf{z}_k,\mathbf{H}_k,\mathbf{x}_k,\mathbf{\epsilon}_k$, respectively.

Equation~\eqref{eq_measurementEquation_noisy} is called pseudo-linear because the matrix in front of the state vector depends on the measurement as well. Although the pseudo-linear transformation makes the noise non-Gaussian, the framework ensures stable estimation when using bearing measurements. It has been widely applied in bearing-based estimation problems \cite{aidala1982biased,lin2002comparison, he2018three, li2022three, ning2024bearing}.

\subsubsection{State Transition Equation}
At time $k$, the state vector in \eqref{eq_measurementEquation_noisy} becomes $\mathbf{x}_k$.
The state transition equation is:
\begin{align}\label{eq_stateTransitionEquation}
	\mathbf{x}_{k+1} = \mathbf{A}\mathbf{x}_{k} + \mathbf{w}_k,
\end{align}
where
\begin{align}
\mathbf{A} = \begin{bmatrix}
\mathbf{I}_{3 \times 3} & \Delta_t \mathbf{I}_{3 \times 3} & \mathbf{O}_{3\times 1} \\
\mathbf{O}_{3 \times 3} &  \mathbf{I}_{3 \times 3} & \mathbf{O}_{3\times 1} \\
\mathbf{O}_{1 \times 3} & \mathbf{O}_{1 \times 3} & 1 \\
\end{bmatrix}
\end{align}
and $\mathbf{w}_k$ is the process noise satisfying $\mathbf{w}_k \sim \mathcal{N}(0, \mathbf{W}_k)$ with
$
    \mathbf{W}_k = \mathrm{diag}(\sigma_p^2, \sigma_p^2, \sigma_p^2, \sigma_v^2, \sigma_v^2, \sigma_v^2, \sigma_\alpha^2) \in \mathbb{R}^{7\times 7}.
$
Here, $\sigma_p$, $\sigma_v$, and $\sigma_\alpha$ are the standard deviations of the corresponding noises. \eqref{eq_stateTransitionEquation} is a model widely used in motion estimation tasks \cite{li2022three, ning2024bearing}.
Note that the acceleration of the target is not considered in the model. We will show later in Section~\ref{MAVEstimator} that the acceleration will be estimated for MAV targets.

\subsubsection{Pseudo-Linear Kalman Filter}
With the state transition equation in \eqref{eq_stateTransitionEquation} and the measurement equation in \eqref{eq_measurementEquation_noisy}, we can utilize the standard Kalman filter to obtain an estimation algorithm. For a quick reference, we list the key steps as follows.
We use $\hat{\cdot}$ to denote an estimated version of a variable.
The prediction step is
\begin{align}\label{KFPred}
	\begin{split}
		\hat{\mathbf{x}}_k^{-} & = \mathbf{A}\hat{\mathbf{x}}_{k-1},   \\
		\mathbf{P}_k^{-} & =  \mathbf{A}\mathbf{P}_{k-1}\mathbf{A}^\textup{T} + \mathbf{W}_k, \\
	\end{split}
\end{align}
where $\hat{\mathbf{x}}_k^{-}$ and $\mathbf{P}_k^{-}$ are the prior estimations. The correction step is
\begin{align}\label{KFGain}
	\begin{split}
		\mathbf{K}_k & =  \mathbf{P}_k^{-}\mathbf{H}_k^\textup{T}(\mathbf{H}_k\mathbf{P}_k^{-}\mathbf{H}_k^\textup{T} + \mathbf{R}_k)^{\dagger}, \\
		\hat{\mathbf{x}}_k & =  \hat{\mathbf{x}}_k^{-} + \mathbf{K}_k(\mathbf{z}_k - \mathbf{H}_k\hat{\mathbf{x}}_k^{-}),\\
		\mathbf{P}_k & = (\mathbf{I}_{7 \times 7} - \mathbf{K}_k\mathbf{H}_k)\mathbf{P}_k^{-},
	\end{split}
\end{align}
where $\dagger$ represents the pseudo-inverse of the matrix.
The calculation of $\mathbf{\epsilon}_k$ relies on unknown $\alpha$. It can be replaced by its estimate $\hat{\alpha}$, which is a common technique \cite{li2022three, ning2024bearing}.

\subsection{Observability Condition}
For common targets, the target's motion is observable when the camera has higher-order motion than the target. Compared with the bearing-only method, it \emph{overcomes the requirement of lateral motion}. Compared with the bearing-angle method, it \emph{overcomes the isotropic assumption}. The bearing-box method can be applied to deal with objects with complex shapes. Related conclusions are shown in Corollary~\ref{theorem1}, Corollary~\ref{theorem2}, and Corollary~\ref{theorem3} in Section~\ref{AnalysisSec}.

\section{Extension to MAV Motion Estimation}
\label{MAVEstimator}
In this section, we extend the estimation method proposed in Section~\ref{Estimator} to co-planar multicopter MAVs. It is necessary to accurately estimate the motion of MAVs in many applications, such as aerial target pursuit. However, many existing methods \cite{li2022three, ning2024bearing} cannot handle maneuverable MAV targets.
In this section, we show that the observability can be significantly enhanced by properly utilizing the attitude information buried in the 3D detection measurements.

\subsection{Attitude Measurement Modeling for MAVs}

First of all, we need to model the relationship between the attitude information buried in 3D detection measurements and the target's state. This model will be used later for motion estimation and observability analysis.

Let $\mathbf{a}_o^w\in\R^3$ be the acceleration of the MAV target in the world frame. Suppose that $m\in\R^1$ and $f\in\R^1$ are the unknown mass and thrust force of the target, respectively. The gravity is parallel to the z-axis of the world frame. As illustrated in Fig.~\ref{MAVFig}, it can be seen that
\begin{align}\label{MAVma}
m\mathbf{a}_o^w=m\begin{bmatrix}
0 \\ 0 \\g
\end{bmatrix} + \mathbf{R}_o^w\begin{bmatrix}
0 \\ 0 \\-f
\end{bmatrix},
\end{align}
where $\mathbf{R}_o^w = \mathbf{R}_c^w\mathbf{R}_o^c$ is the attitude of the target.
Define
\begin{align}\mathbf{h}\doteq -\mathbf{R}_o^w\mathbf{e}_3.
\end{align}
Then, the unit vector $\mathbf{h} \in \mathbb{R}^3$ represents the direction of the thrust of the MAV target.
Thus, \eqref{MAVma} can be rewritten as
\begin{align}
m\mathbf{a}_o^w = mg\mathbf{e}_3 - f\mathbf{R}_o^w\mathbf{e}_3= mg\mathbf{e}_3 + f\mathbf{h}.
\label{DynamicModel}
\end{align}

\begin{figure}[t]
\centering
\subfigure{
\includegraphics[width=0.8\linewidth]{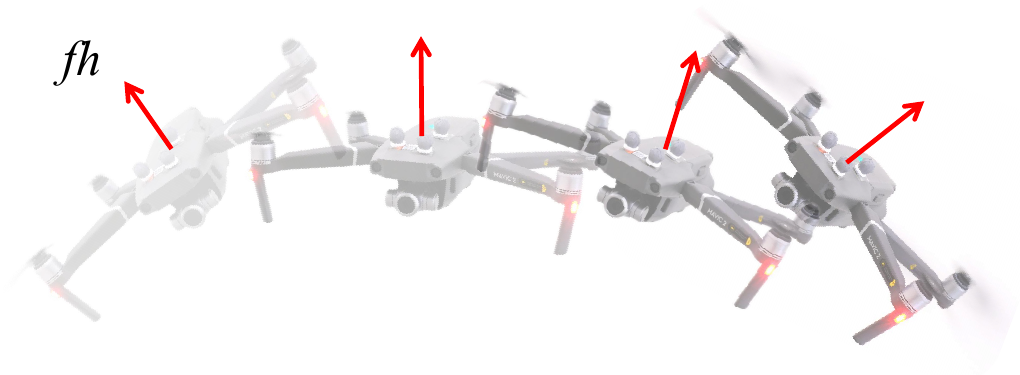}
}
\centering
\subfigure{
\includegraphics[width=0.8\linewidth]{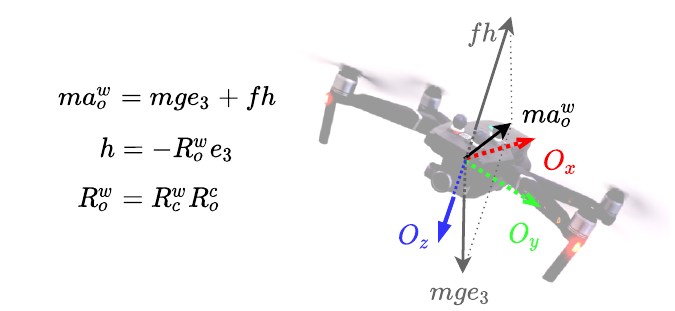}
}
\caption{An illustration of the relationship between the attitude and thrust of a quadcopter MAV. The bottom shows the dynamic model.}
\label{MAVFig}
\end{figure}

Note that $f$ and $m$ are \emph{unknown} since the target MAV is uncooperative. To eliminate them from the equation, we introduce a useful orthogonal projection matrix:
\begin{align}\mathbf{P}_h\doteq \mathbf{I}_{3\times3}-\mathbf{h}\mathbf{h}^\textup{T}.
\end{align}
Since $\mathbf{h}$ is a unit vector, $\mathbf{P}_h \in \R^{3\times 3}$ is a positive semi-definite matrix satisfying $\mathbf{P}_h^2=\mathbf{P}_h$ and $\mathbf{P}_h^\textup{T}=\mathbf{P}_h$. The geometric interpretation is that $\mathbf{P}_h\mathbf{x}$ is the projection of any vector $\mathbf{x}\in\R^3$ onto the orthogonal complement of $\mathbf{h}$. As a result, an important property is that $\mathbf{P}_h\mathbf{h}=0$.
In fact, this orthogonal projection matrix plays an important role in bearing-related problems \cite{zhao2019bearing}.

Multiplying $\mathbf{P}_h$ on both sides of \eqref{DynamicModel} gives
\begin{align}
m\mathbf{P}_h\mathbf{a}_o^w=m\mathbf{P}_hg\mathbf{e}_3 + f\mathbf{P}_h\mathbf{h}.
\end{align}
Since $\mathbf{P}_h\mathbf{h}=0$, we have
\begin{align}
\mathbf{P}_h\mathbf{a}_o^w=\mathbf{P}_hg\mathbf{e}_3.
\label{PoseMeasurement}
\end{align}

The measurement modeling will not be influenced by symmetry ambiguity problems met by some symmetry multicopter MAVs. As shown in Fig.~\ref{Symmetry}, for some MAVs whose four sides are the same, the estimation of the rotation may have four solutions. However, the symmetry problem will not influence our motion estimation results because the direction of the thrust and the value of $\bar{\mathbf{p}}_o^c$ are still correct. 

\begin{figure}[htbp]
\centering\includegraphics[width=0.5\textwidth]{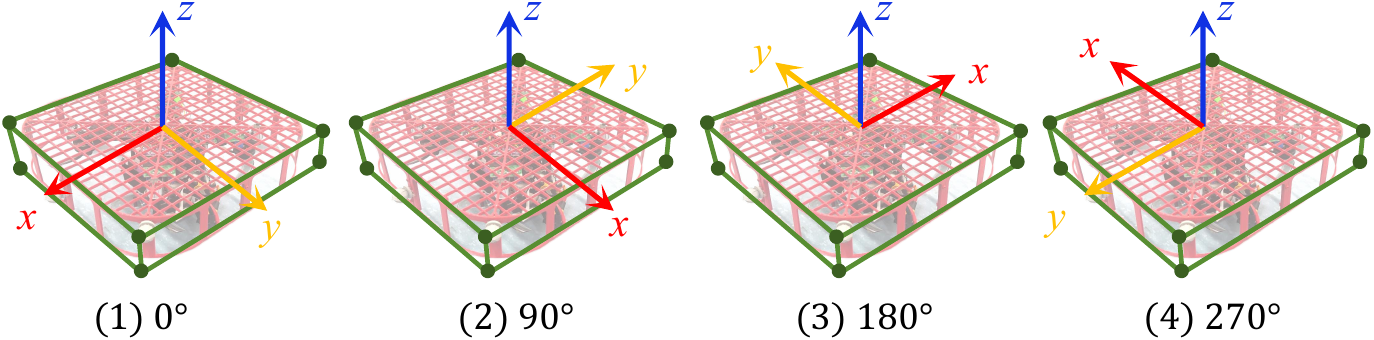}
\caption{The symmetry problem met by pose estimation algorithms. It will not influence our visual measurements $\bar{\mathbf{T}}_{oc}^w$ and $\mathbf{h}$.}
\label{Symmetry}
\end{figure}

In summary, \eqref{PoseMeasurement} presents an elegant way to describe the relationship between the target's attitude (i.e., $\mathbf{h}$) and the target's acceleration (i.e., $\mathbf{a}_o^w$). To the best of our knowledge, we have not seen this concise relationship in the literature. We will see that it plays an important role in the high-performance estimation of the target's motion.

\subsection{Motion Estimator}

\subsubsection{State Transition Equation}

Different from Section~\ref{Estimator}, the state of the target includes the acceleration of the target in addition to the position, velocity, and size. The reason is that the pose measurement has a mathematical relation with its acceleration. The state transition equation is
\begin{align}\label{statetransition2}
\setlength{\arraycolsep}{2.5pt}
\begin{bmatrix}
\mathbf{p}_{o,k+1}^w \\ \mathbf{v}_{o,k+1}^w \\ \mathbf{a}_{o,k+1}^w \\ \alpha
\end{bmatrix} \!= \!\begin{bmatrix}
\mathbf{I}_{3 \times 3} & \Delta_t \mathbf{I}_{3 \times 3} & \frac{\Delta_t^2}{2} \mathbf{I}_{3 \times 3} & \mathbf{O}_{3\times 1} \\
\mathbf{O}_{3 \times 3} &  \mathbf{I}_{3 \times 3} & \Delta_t \mathbf{I}_{3 \times 3} & \mathbf{O}_{3\times 1} \\
\mathbf{O}_{3 \times 3} & \mathbf{O}_{3 \times 3}& \mathbf{I}_{3 \times 3} & \mathbf{O} \\		
\mathbf{O}_{1 \times 3} & \mathbf{O}_{1 \times 3}& \mathbf{O}_{1 \times 3} & 1 \\
\end{bmatrix}\begin{bmatrix}
\mathbf{p}_{o,k}^w \\ \mathbf{v}_{o,k}^w \\ \mathbf{a}_{o,k}^w \\ \alpha
\end{bmatrix} \!+\! \mathbf{w}_k,
\end{align}
where $\mathbf{w}_k$ is the process noise satisfying $\mathbf{w}_k~\sim \mathcal{N}(0, \mathbf{W}_k)$ with $\mathbf{W}_k=\mathrm{diag}(0,0,0,\sigma_v^2,\sigma_v^2,\sigma_v^2,\sigma_a^2,\sigma_a^2,\sigma_a^2, \sigma_\alpha^2)$ and $\sigma_v$, $\sigma_a$, and $\sigma_\alpha$ are the standard deviations of corresponding measurement noises. 

\subsubsection{Measurement Equations}

The first measurement equation is the same as \eqref{eq_measurementEquation_noisy} in Section~\ref{Estimator} and omitted here.

The second measurement equation is based on the noisy version of \eqref{PoseMeasurement}. Suppose that $\mathbf{h}$ is corrupted by a noise:
\begin{align}
	\tilde{\mathbf{h}} =\mathbf{h} + \mathbf{\epsilon}_h,
 \label{noiseh}
\end{align}
where $\mathbf{\epsilon}_h \sim \mathcal{N}(0, \mathbf{\Sigma}_h)$ with $\mathbf{\Sigma}_h = \sigma_h^2\mathbf{I}_{3 \times 3}$.
The original noise over $\mathbf{h}$ should be productive rather than additive. Here, we approximately use an additive noise to describe for the sake of simplicity. Similar techniques have been applied
to handle noisy unit vectors \cite{bishop2010optimality, zhao2013optimal, li2022three, ning2024bearing}.

Since \eqref{DynamicModel} implies $\mathbf{a}_o^w - g\mathbf{e}_3 = \frac{f}{m}\mathbf{h}$, we know $\mathbf{a}_o^w - g\mathbf{e}_3$ is parallel to $\mathbf{h}$. As a result, we have
\begin{align}
    \mathbf{a}_o^w - g\mathbf{e}_3 = ||\mathbf{a}_o^w - g\mathbf{e}_3||\mathbf{h}.
    \label{DynamicModel2}
\end{align}
Multiplying both sides of \eqref{noiseh} by $\mathbf{P}_{\tilde{h}}||\mathbf{a}_o^w - g\mathbf{e}_3||$ gives
\begin{align}
    \mathbf{O}_{3 \times 1} = \mathbf{P}_{\tilde{h}}||\mathbf{a}_o^w - g\mathbf{e}_3||\mathbf{h} + ||\mathbf{a}_o^w-g\mathbf{e}_3||\mathbf{P}_{\tilde{h}}\mathbf{\epsilon}_h,
\label{noiseeq1}
\end{align}
where the left-hand side is zero because $\mathbf{P}_{\tilde{h}}\tilde{\mathbf{h}}=0$.
Substituting \eqref{DynamicModel2} into \eqref{noiseeq1} gives
\begin{align}\label{noiseequationmav2}
    \mathbf{P}_{\tilde{h}}g\mathbf{e}_3 = \mathbf{P}_{\tilde{h}}\mathbf{a}_o^w + ||\mathbf{a}_o^w-g\mathbf{e}_3||\mathbf{P}_{\tilde{h}}\mathbf{\epsilon}_h,
\end{align}
which is the second measurement equation that corresponds to the attitude information.

Combining \eqref{noiseequationmav2} and \eqref{eq_measurementEquation_noisy} gives the overall measurement equation for the estimator:
\begin{align}\label{eq_measurementEquation_noisy_mav}
	\underbrace{\!\left[\!
		\begin{array}{c}
			\mathbf{p}_c^w\\
			\mathbf{P}_{\tilde{h}}g\mathbf{e}_3 \\
		\end{array}
		\!\right]\!}_{\mathbf{z}} \!=\!
	\underbrace{\!\left[\!
 \setlength{\arraycolsep}{1pt}
		\begin{array}{cccc}
		\mathbf{I} & \mathbf{O} & \mathbf{O} & -\tilde{\bar{\mathbf{T}}}_{oc}^w \\
		\mathbf{O} & \mathbf{O} & \mathbf{P}_{\tilde{h}} & \mathbf{O} \\
		\end{array}
		\!\right]\!}_{\mathbf{H}}
	\underbrace{\!\left[\!
		\begin{array}{c}
			\mathbf{p}_o^w \\
			\mathbf{v}_o^w \\
                \mathbf{a}_o^w \\
			\alpha \\
		\end{array}
		\!\right]\!}_{\mathbf{x}} 
	\!+\! \underbrace{\!\left[\!
		\begin{array}{c}
			\alpha \mathbf{\epsilon}_{\bar{T}}\\
			||\mathbf{a}_o^w-g\mathbf{e}_3||\mathbf{P}_{\tilde{h}}\mathbf{\epsilon}_h\\
		\end{array}
		\!\right]\!}_{\mathbf{\epsilon}}.
\end{align}
The noise term $\mathbf{\epsilon}$ can be rewritten as
\begin{align}
	\mathbf{\epsilon} = \underbrace{\begin{bmatrix}
			\alpha \mathbf{I} & \mathbf{O}   \\
			\mathbf{O} & ||\mathbf{a}_o^w-g\mathbf{e}_3||\mathbf{P}_{\tilde{h}}
	\end{bmatrix}}_{V}
	\begin{bmatrix}
		\mathbf{\epsilon}_{\bar{T}} \\ \mathbf{\epsilon}_h
	\end{bmatrix}.
\end{align}
At time step $k$, the variables $\mathbf{z},\mathbf{H},\mathbf{x},\mathbf{\epsilon},\mathbf{V}$ become $\mathbf{z}_k,\mathbf{H}_k,\mathbf{x}_k,\mathbf{\epsilon}_k,\mathbf{V}_k$, respectively. Since $\mathbf{a}_o^w$ is to be estimated, it is replaced with its estimate to calculate $\mathbf{V}_k$.

\subsubsection{Pseudo-Linear Kalman Filter}

With the state transition equation in \eqref{statetransition2} and the measurement equation in \eqref{eq_measurementEquation_noisy_mav}, we can apply the standard Kalman filter to obtain an estimation algorithm. Details are given in \eqref{KFPred} and \eqref{KFGain}. $\mathbf{R}_k$ represents the covariance matrix of measurement noise. The calculation of $\mathbf{R}_k$ is $\mathbf{R}_k = \mathbf{V}_k\mathbf{\Sigma}_{A} \mathbf{V}_k^\textup{T}$, where
\begin{align}
    \mathbf{\Sigma}_{A} = \begin{bmatrix}
        \mathbf{\Sigma}_{\bar{T}} & \mathbf{O} \\
        \mathbf{O} & \mathbf{\Sigma}_{h}
    \end{bmatrix}.
\end{align}
Note that $\mathbf{V}_k$ relies on $\alpha$ and $\mathbf{a}_o^w$, which are unknown. We can replace them with their estimates to calculate $\mathbf{V}_k$.

\subsection{Observability Condition}
For MAV, the target's motion is observable when the relative acceleration between the observer and the target has a non-zero component orthogonal to the thrust of the MAV. Compared with the existing methods, it \emph{overcomes the requirement of higher-order motion}. The target MAV's motion is observable even when the camera is stationary. Related conclusions are shown in Theorem~\ref{theorem4}, Theorem~\ref{theorem5}, and Theorem~\ref{theorem6} under different settings in Section~\ref{AnalysisSec}.

\section{Observability Analysis}
\label{AnalysisSec}
This section shows that the basic measurement equation in \eqref{Observation1_w} and the additional measurement equation in \eqref{PoseMeasurement} can greatly enhance the observability. 
The observability conditions are derived to show whether and how it is enhanced. We first consider two special yet important cases where the velocity of the target or the acceleration of the target MAV is constant, and then analyze a generalized case where the target's trajectory can be described as a polynomial. Since the estimator for MAVs contains both measurement equations, we first derive the observability condition for MAVs in the form of theorems and then give the condition for general objects via corollaries.

\subsection{Motion Models: First-Order and Second-Order} 

The motion estimators in Section~\ref{Estimator} and \ref{MAVEstimator} have different state vectors because they are established by first-order and second-order models, respectively. The estimators are noise-driven and can deal with targets that have time-varying speeds or accelerations. However, the observability analysis should be derived based on data without noise. Therefore, we suppose that the target MAV moves with a constant acceleration $\mathbf{a}_o^w$ and the common target moves with a constant velocity $\mathbf{v}_o^w$ in this section. 

\begin{theorem}[Observability condition for MAVs based on second-order model]
\label{theorem4}
 When the target MAV moves with a constant acceleration, its motion is observable if either of the following conditions is satisfied.
 
(a) There exist at least 4 observations, and the jerk of the observer is nonzero for at least one time step;

(b) There exist at least 3 observations, and the relative acceleration between the observer and the target has a non-zero component orthogonal to the thrust of the MAV for at least one time step: $\mathbf{P}_h(\mathbf{a}_o^w-\mathbf{a}_c^w)\neq 0$.
\end{theorem}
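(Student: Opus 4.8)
I would treat the MAV estimator as the noise-free linear system $\mathbf{x}_{k+1}=\mathbf{A}\mathbf{x}_k$, $\mathbf{z}_k=\mathbf{H}_k\mathbf{x}_k$ with $\mathbf{x}=[\mathbf{p}_o^w;\mathbf{v}_o^w;\mathbf{a}_o^w;\alpha]\in\R^{10}$, $\mathbf{A}$ from \eqref{statetransition2} and $\mathbf{H}_k$ from \eqref{eq_measurementEquation_noisy_mav}; observability over $N$ samples then means the stacked matrix $\mathcal{O}_N=[\mathbf{H}_1;\ \mathbf{H}_2\mathbf{A};\ \dots;\ \mathbf{H}_N\mathbf{A}^{N-1}]$ has full column rank $10$. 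Using the closed form of the powers of the constant-acceleration transition matrix, with $t_k=(k-1)\Delta_t$,
\begin{equation}
\mathbf{H}_k\mathbf{A}^{k-1}=\begin{bmatrix}\mathbf{I} & t_k\mathbf{I} & \tfrac{t_k^2}{2}\mathbf{I} & -\bar{\mathbf{T}}_{oc}^w(t_k)\\ \mathbf{O} & \mathbf{O} & \mathbf{P}_h(t_k) & \mathbf{O}\end{bmatrix},
\end{equation}
so a null vector $\mathbf{v}=[\mathbf{a};\mathbf{b};\mathbf{c};d]$ of $\mathcal{O}_N$ is characterized by $\mathbf{a}+t_k\mathbf{b}+\tfrac{t_k^2}{2}\mathbf{c}=d\,\bar{\mathbf{T}}_{oc}^w(t_k)$ and $\mathbf{P}_h(t_k)\mathbf{c}=\mathbf{0}$ for all $k$. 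The dependence of $\mathbf{H}_k$ on the true trajectory is not circular: I would substitute the true kinematics $\alpha\,\bar{\mathbf{T}}_{oc}^w(t_k)=\mathbf{p}_o^w+t_k\mathbf{v}_o^w+\tfrac{t_k^2}{2}\mathbf{a}_o^w-\mathbf{p}_c^w(t_k)$ into these relations (equivalently, argue directly that no second state reproduces the same measurement stream). The structural fact to record first: since the target has constant acceleration, \eqref{DynamicModel2} pins the thrust direction $\mathbf{h}=(\mathbf{a}_o^w-g\mathbf{e}_3)/\|\mathbf{a}_o^w-g\mathbf{e}_3\|$ to be constant in time, so $\mathbf{P}_h(t_k)\equiv\mathbf{P}_h$ and the attitude rows collapse to $\mathbf{P}_h\mathbf{c}=\mathbf{0}$, i.e. $\mathbf{c}=\mu\mathbf{h}$ for a scalar $\mu$; substituting the true $\bar{\mathbf{T}}_{oc}^w(t_k)$ into the remaining rows then shows $\tfrac{d}{\alpha}\mathbf{p}_c^w(t_k)$ equals a fixed vector polynomial of degree $\le 2$ in $t_k$ at every sampled time.

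For part~(a): if $d\ne 0$, the previous line forces the observer's $N$ sampled positions onto a single polynomial curve of degree $\le 2$; for $N\ge 4$ this is exactly the vanishing of every (discrete) third difference of the observer's trajectory, contradicting the hypothesis that the observer has nonzero jerk at some step. Hence $d=0$, whence $\mathbf{a}+t_k\mathbf{b}+\tfrac{t_k^2}{2}\mathbf{c}=\mathbf{0}$ at $\ge 3$ distinct times; a vector polynomial of degree $\le 2$ with three roots is identically zero, so $\mathbf{a}=\mathbf{b}=\mathbf{c}=\mathbf{0}$ and $\mathbf{v}=\mathbf{0}$, i.e. $\mathcal{O}_N$ has full column rank.

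For part~(b): I would avoid assuming observer jerk and instead extract the relative acceleration. Taking a second divided difference over three consecutive equally spaced samples of the identity ``$\tfrac{d}{\alpha}\mathbf{p}_c^w(t_k)=$ quadratic'' eliminates $\mathbf{a}$ and $\mathbf{b}$ and yields $\tfrac{d}{\alpha}(\mathbf{a}_o^w-\mathbf{a}_c^w)=\mu\mathbf{h}$; applying $\mathbf{P}_h$ and using $\mathbf{P}_h\mathbf{h}=\mathbf{0}$ gives $d\,\mathbf{P}_h(\mathbf{a}_o^w-\mathbf{a}_c^w)=\mathbf{0}$. So if $\mathbf{P}_h(\mathbf{a}_o^w-\mathbf{a}_c^w)\neq\mathbf{0}$ at some time step then $d=0$, and the remaining argument is identical to part~(a), which needs only $\ge 3$ observations---the stated count.

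The part I expect to be delicate is the \emph{discrete} bookkeeping rather than the linear algebra: making rigorous that, for $N\ge 4$, the negation of ``the observer has nonzero jerk at some step'' is ``the $N$ sampled observer positions are interpolated by one polynomial of degree $\le 2$,'' and that the second divided difference used in part~(b) faithfully encodes the relative acceleration $\mathbf{a}_o^w-\mathbf{a}_c^w$. Accordingly I would phrase steps~(a) and~(b) via divided differences of $\mathbf{p}_c^w$ and $\bar{\mathbf{T}}_{oc}^w$ rather than literal time derivatives. The only auxiliary point worth a sentence is that $\bar{\mathbf{T}}_{oc}^w(t_k)$ is always well defined, which follows from the non-singularity established in Lemma~\ref{lemmat_o}.
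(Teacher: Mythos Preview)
Your proposal is correct and follows essentially the same route as the paper: both build the stacked measurement matrix over $N$ samples (the paper's $\mathbf{B}_A$ in \eqref{multiobservationmav} is exactly your $\mathcal{O}_N$ up to redundant attitude rows, since $\mathbf{P}_h$ is constant), note that constant target acceleration makes $\mathbf{P}_h$ time-invariant, and then split into two sub-cases via second divided differences of $\bar{\mathbf{T}}_{oc}^w$ to extract the relative acceleration. The only cosmetic difference is that the paper argues by explicit row reduction of $\mathbf{B}_A$ into the block-triangular form \eqref{DiffAll} and reads off the rank from the corner blocks $\rho_{t_k}-\rho_{t_{k-1}}$ and $\mathbf{P}_h\rho_{t_3}$, whereas you argue via the null space and the polynomial-interpolation contradiction; your caveat about the discrete bookkeeping (divided differences standing in for jerk and acceleration) is exactly the same looseness the paper carries in identifying $\rho_{t_3}$ with $(\mathbf{a}_{c}^w-\mathbf{a}_{o}^w)/\alpha$.
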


\begin{proof} Suppose there are $N$ observations obtained at $t_1, \dots, t_N$. For any time step $t_k\in\{t_1,\dots,t_N\}$, it follows from \eqref{Observation1_w} that
\begin{align}\label{eq_popc_tk}
	\mathbf{p}_{o,t_k}^w-\mathbf{p}_{c,t_k}^w = \bar{\mathbf{T}}_{oc,t_k}^w \alpha,
\end{align}
where $\bar{\mathbf{T}}_{oc,t_k}^w=\mathbf{R}_c^w\bar{\mathbf{T}}_{oc,t_k}^c$. Since the target MAV is moving with a constant acceleration $\mathbf{a}_o^w$, at any $t_k$ it holds that
\begin{align}\label{MotionTwoOrder}
    \mathbf{p}_{o,t_k}^w -  \mathbf{p}_{o,t_1}^w = \mathbf{v}_{o,t_1}^w(t_k - t_1)+\frac{1}{2}\mathbf{a}_o^w(t_k - t_1)^2.
\end{align}
Substituting \eqref{eq_popc_tk} into \eqref{MotionTwoOrder} yields:
\begin{align}
\begin{bmatrix}
\mathbf{I}_{3 \times 3} & \Delta{t_k}\mathbf{I}_{3 \times 3} & \frac{1}{2}(\Delta t_k)^2\mathbf{I}_{3\times 3} & -\bar{\mathbf{T}}_{oc,t_k}^w
\end{bmatrix} \begin{bmatrix}
\mathbf{p}_{o,t_1}^w \\
\mathbf{v}_{o,t_1}^w \\
\mathbf{a}_o^w \\
\alpha
\end{bmatrix} = \mathbf{p}_{c,t_k}^w,
\end{align}
where $\Delta t_k$ represents the time interval between $t_1$ and $t_k$.
Since $\mathbf{a}_o^w$ is constant, $\mathbf{h}$ and hence $\mathbf{P}_h$ are also constant.
Then, combining the equations for all observations $t_1,\dots,t_N$ gives
\begin{align} \label{multiobservationmav}
\setlength{\arraycolsep}{1.5pt}
\underbrace{
\begin{bmatrix}
\mathbf{I} & \mathbf{O} & \mathbf{O} &  -\bar{\mathbf{T}}_{oc,t_1}^w \\
\mathbf{I} & \Delta t_2\mathbf{I} & \frac{1}{2}(\Delta t_2)^2\mathbf{I} &  -\bar{\mathbf{T}}_{oc,t_2}^w \\
\mathbf{I} & \Delta t_3\mathbf{I} & \frac{1}{2}(\Delta t_3)^2\mathbf{I} &  -\bar{\mathbf{T}}_{oc,t_3}^w \\
\vdots & \vdots & \vdots & \vdots \\
\mathbf{I} & \Delta t_N\mathbf{I} & \frac{1}{2}(\Delta t_N)^2\mathbf{I} &  -\bar{\mathbf{T}}_{oc,t_N}^w \\
\mathbf{O}_{3 \times 3} & \mathbf{O}_{3 \times 3} & \mathbf{P}_{h} & \mathbf{O}_{3 \times 1}  \\
\end{bmatrix}}_{\mathbf{B}_A\in \mathbb{R}^{(3N+3) \times 10}}
\underbrace{
\begin{bmatrix}
\mathbf{p}_{o,t_1}^w \\
\mathbf{v}_{o,t_1}^w \\
\mathbf{a}_{o}^w \\
\alpha
\end{bmatrix}}_{\mathbf{x}_{t_1}\in \mathbb{R}^{10 \times 1}} =
\underbrace{
\begin{bmatrix}
\mathbf{p}_{c,t_1}^w \\ \mathbf{p}_{c,t_2}^w \\ \mathbf{p}_{c,t_3}^w \\ \vdots \\ \mathbf{p}_{c,t_N}^w \\ \mathbf{P}_{h}g\mathbf{e}_3
\end{bmatrix}}_{\mathbf{b}_A\in \mathbb{R}^{(3N+3) \times 1}}.
\end{align}
It is notable that the attitude information in ~\eqref{PoseMeasurement} only corresponds to the last row in \eqref{multiobservationmav}.

The target's state is observable when there is a unique solution of $\mathbf{x}_{t_1}$ of \eqref{multiobservationmav}.
First, the number of observations must satisfy $N \ge 3$ so that the system of \eqref{multiobservationmav} can be over-determined.
Then, the solution of \eqref{multiobservationmav} is unique when $\mathbf{B}_A$ has full column rank.

We next analyze when the column rank of $\mathbf{B}_A$ is full. After several row transformations, $\mathbf{B}_A$ can be rewritten as,
\begin{align}
\mathbf{B}_A\rightarrow
\left[\begin{array}{ccc;{2pt/2pt}c}
\mathbf{I} & \mathbf{O} & \mathbf{O} &  -\bar{\mathbf{T}}_{oc,t_1}^w \\
\mathbf{O} & \mathbf{I} & \frac{t_2-t_1}{2}\mathbf{I} &  \frac{\bar{\mathbf{T}}_{oc,t_1}^w-\bar{\mathbf{T}}_{oc,t_2}^w}{t_2-t_1} \\
\mathbf{O} & \mathbf{O} & \mathbf{I} &  \mathbf{\rho}_{t_3} \\
\hdashline [2pt/2pt]
\vdots & \vdots & \vdots & \vdots \\
\mathbf{O} & \mathbf{O} & \mathbf{O} & \mathbf{\rho}_{t_N}-\mathbf{\rho}_{t_{N-1}} \\
\hdashline [2pt/2pt]
\mathbf{O}_{3 \times 3} & \mathbf{O}_{3 \times 3} & \mathbf{P}_h & \mathbf{O}_{3 \times 3}
\end{array}\! \right].
\label{DiffAll}
\end{align}
To simplify the matrix, we use $\mathbf{\rho}_{t_3}\in \R^3$ to denote the element in the third row and last column of $\mathbf{B}_A$:
\begin{align}\label{rhodef}
    \mathbf{\rho}_{t_3} = \frac{2}{t_3-t_1}\left(\frac{\bar{\mathbf{T}}_{oc,t_2}^w-\bar{\mathbf{T}}_{oc,t_3}^w}{t_3-t_2}-\frac{\bar{\mathbf{T}}_{oc,t_1}^w-\bar{\mathbf{T}}_{oc,t_2}^w}{t_2-t_1}\right).
\end{align}
Similarly, we can define $\mathbf{\rho}_{t_4},\dots,\mathbf{\rho}_{t_N}$. Since $\mathbf{\rho}$ is important for the following proof, we need to discuss the meaning of $\mathbf{\rho}$. To that end,  $\mathbf{a}_o^w$ can be calculated by difference as
\begin{align}\label{AveAcc}
\mathbf{a}_{o,t_3}^w = \frac{2}{t_3-t_1}\left(\frac{\mathbf{p}^w_{c,t_3}-\mathbf{p}^w_{c,t_2}}{t_3-t_2} - \frac{\mathbf{p}^w_{c,t_2}-\mathbf{p}^w_{c,t_1}}{t_2-t_1}\right).
\end{align}
$\mathbf{a}_{c,t_3}^w$ can also be calculated similarly, and the expression is omitted here. The expression of \eqref{AveAcc} is similar to \eqref{rhodef} because \eqref{rhodef} implicitly includes the accelerations.
Substituting $\mathbf{a}_{o,t_3}^w$,  $\mathbf{a}_{c,t_3}^w$ and \eqref{eq_popc_tk} into \eqref{rhodef} gives
\begin{align}
\mathbf{\rho}_{t_3} = \frac{\mathbf{a}_{c,t_3}^w-\mathbf{a}_{o,t_3}^w}{\alpha}.
\end{align}
Therefore, $\mathbf{\rho}_{t_3}$ represents the relative acceleration between the target MAV and the observer. 

The matrix $\mathbf{B}_A$ has full column rank if either of the following sub-matrices has full column rank.

First, consider a sub-matrix $\mathbf{B}_U$ of $\mathbf{B}_A$ that excludes the last row of $\mathbf{B}_A$. $\mathbf{B}_U$ represents the observation matrix for \emph{common objects}. 
The top-left block of $\mathbf{B}_U$ is an identity matrix. Thus, $\mathbf{B}_U$ has full column rank if and only if $N\geq4$ and at least one element of the bottom-right block is nonzero: $\mathbf{\rho}_{t_k} - \mathbf{\rho}_{t_{k-1}}\neq0$. Since the acceleration of the target is constant, $\mathbf{a}_o^{t_3} = \mathbf{a}_o^{t_j}$ for any $t_j$. $\mathbf{\rho}_{t_k} - \mathbf{\rho}_{t_{k-1}}\neq0$ is equivalent to $\mathbf{a}_{c,t_k}^w \neq \mathbf{a}_{c,t_{k-1}}^w $, meaning the acceleration of the observer is time-varying or the jerk is nonzero for at least one step. 

When the target is modeled as a first-order model, its observability condition can be derived similarly and more simply, and the proof process is omitted here. The target is observable when the acceleration of the observer is nonzero and $N\geq3$, which is the conclusion in Corollary~\ref{theorem1}.

Second, consider a sub-matrix $\mathbf{B}_P$ of $\mathbf{B}_A$ that consists of the first three rows and the last row of $\mathbf{B}_A$. According to Appendix~A, $\mathbf{B}_P$ can be transformed into
\begin{align}
\setlength{\arraycolsep}{2.5pt}
\mathbf{B}_P\rightarrow \begin{bmatrix}
\mathbf{I}_{3 \times 3} & \mathbf{O}_{3 \times 3} &  \mathbf{O}_{3 \times 3} & -\bar{\mathbf{T}}_{oc,t_1}^w \\
\mathbf{O}_{3 \times 3} & \mathbf{I}_{3 \times 3} & \frac{t_2-t_1}{2}\mathbf{I}_{3 \times 3} & \frac{\bar{\mathbf{T}}_{oc,t_1}^w-\bar{\mathbf{T}}_{oc,t_2}^w}{t_2-t_1} \\
\mathbf{O}_{3 \times 3} &  \mathbf{O}_{3 \times 3} & \mathbf{I}_{3 \times 3} & \mathbf{\rho}_{t_3} \\
\mathbf{O}_{3 \times 3} & \mathbf{O}_{3 \times 3} &  \mathbf{O}_{3 \times 3} & \mathbf{P}_h\mathbf{\rho}_{t_3} \\
\end{bmatrix}.
\end{align}
It is clear that $\mathbf{B}_P$ has full column rank if and only if $N \geq 3$ and the bottom-right vector $\mathbf{P}_h\mathbf{\rho}_{t_3}\ne0$. Since
\begin{align}
   \mathbf{P}_h\mathbf{\rho}_{t_3} = \frac{1}{\alpha} \mathbf{P}_h(\mathbf{a}_{c,t_3}^w-\mathbf{a}_{o,t_3}^w),
\end{align}
the condition that $\mathbf{P}_h\mathbf{\rho}_{t_3}\ne0$ is equivalent to $\mathbf{P}_h(\mathbf{a}_{c,t_3}^w-\mathbf{a}_{o,t_3}^w)\neq 0$, meaning there exists at least one time step such that the relative acceleration should have a non-zero component orthogonal to the thrust of the MAV.
\end{proof}

Theorem~\ref{theorem4} suggests that the attitude-acceleration equation~\eqref{PoseMeasurement} greatly enhances the observability condition. Condition~(a) in Theorem~\ref{theorem4} is similar to Corollary~\ref{theorem1}. It still requires the observer to have higher-order motion than the target. However, if condition~(a) is not satisfied, the system is still observable as long as condition~(b) holds. Condition~(b) merely requires that the relative acceleration should not be parallel to the thrust vector $\mathbf{h}$, which may be valid even if the observer has \emph{lower-order} motion than the target.
For example, consider a special yet important case, where the observer is stationary so that $\mathbf{a}_c^w=0$. Then, $\mathbf{P}_h(\mathbf{a}_c^w-\mathbf{a}_o^w) \neq 0$ becomes $\mathbf{P}_h\mathbf{a}_o^w\neq 0$, substituting \eqref{PoseMeasurement} into which gives
\begin{align}
\mathbf{P}_hg\mathbf{e}_3\neq 0.
\end{align}
The above equation means that the thrust vector $\mathbf{h}$ should not be parallel to the gravitational vector $g\mathbf{e}_3$.
As a result, the target's state is observable as long as it has nonzero horizontal velocity so that $\mathbf{h}$ is not parallel to $g\mathbf{e}_3$. $\mathbf{P}_h a_o^w \neq 0$ is easily satisfied in typical real-world scenarios.
This is significant because the requirement of the motion of the observer can be greatly relaxed. The experiments in Section~\ref{SecMAVExp} verify this conclusion. If the observability is only satisfied at the initial period, the estimator could still work after its convergence.

\begin{corollary}[Observability condition for common objects based on first-order model]
\label{theorem1}
 When the target moves with a constant speed, its motion is observable if there exist at least 3 observations, and the acceleration of the observer is nonzero for at least one time step.
\end{corollary}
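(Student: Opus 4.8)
The plan is to follow the same observation-matrix argument used for Theorem~\ref{theorem4}, but specialized to the first-order model of \eqref{eq_stateTransitionEquation}, dropping the acceleration block and the attitude row; this is exactly why the analysis becomes ``simpler.'' Suppose $N$ observations are collected at $t_1,\dots,t_N$ and the target moves with constant velocity $\mathbf{v}_o^w$. From \eqref{Observation1_w}, at each $t_k$ we have $\mathbf{p}_{o,t_k}^w-\mathbf{p}_{c,t_k}^w=\bar{\mathbf{T}}_{oc,t_k}^w\,\alpha$, while constant velocity gives $\mathbf{p}_{o,t_k}^w-\mathbf{p}_{o,t_1}^w=\mathbf{v}_{o,t_1}^w(t_k-t_1)$. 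Eliminating $\mathbf{p}_{o,t_k}^w$ and stacking over $k=1,\dots,N$ produces a linear system $\mathbf{B}\,\mathbf{x}_{t_1}=\mathbf{b}$ with $\mathbf{x}_{t_1}=[\mathbf{p}_{o,t_1}^w;\,\mathbf{v}_{o,t_1}^w;\,\alpha]\in\R^{7}$ and
\begin{align*}
\mathbf{B}=\begin{bmatrix}
\mathbf{I} & \mathbf{O} & -\bar{\mathbf{T}}_{oc,t_1}^w\\
\mathbf{I} & \Delta t_2\mathbf{I} & -\bar{\mathbf{T}}_{oc,t_2}^w\\
\vdots & \vdots & \vdots\\
\mathbf{I} & \Delta t_N\mathbf{I} & -\bar{\mathbf{T}}_{oc,t_N}^w
\end{bmatrix}\in\R^{3N\times 7},\qquad \Delta t_k=t_k-t_1 .
\end{align*}
The target's state is observable iff this system has a unique solution, i.e. iff $\mathbf{B}$ has full column rank $7$; in particular $3N\ge 7$ already forces $N\ge 3$.

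Next I would row-reduce $\mathbf{B}$ exactly as in \eqref{DiffAll}: subtract the first block row to clear the $\mathbf{p}$-column, then rescale and subtract the new second block row to clear the $\mathbf{v}$-column from every lower row. What survives in rows $k\ge 3$ is a single block, located in the last column, equal to a discrete second difference of $\bar{\mathbf{T}}_{oc,\cdot}^w$ at the nodes $(t_1,t_2,t_k)$ — the first-order analogue of the $\mathbf{\rho}_{t_k}$ of \eqref{rhodef}. Using $\bar{\mathbf{T}}_{oc,t}^w\alpha=\mathbf{p}_{o,t}^w-\mathbf{p}_{c,t}^w$ together with the fact that a constant-velocity target has $\mathbf{p}_{o,\cdot}^w$ affine in $t$ (so its second difference vanishes), every such residual vector equals $\mathbf{a}_{c,t_k}^w/\alpha$ up to a nonzero scalar, where $\mathbf{a}_{c,t_k}^w$ is the observer acceleration evaluated by the corresponding finite difference and $\alpha>0$. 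Hence $\mathbf{B}$ has full column rank iff $N\ge 3$ and at least one of these second-difference vectors is nonzero, which is precisely the statement that the observer's acceleration is nonzero for at least one time step. This gives the ``if'' direction of the corollary (and, read backwards, also its converse). The non-singularity needed to form each $\bar{\mathbf{T}}_{oc,t_k}^w$ via \eqref{distance} is already supplied by Lemma~\ref{lemmat_o}, so nothing extra is required there.

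The step I expect to be most delicate is not the elimination itself but the bookkeeping that links ``some $(t_1,t_2,t_k)$ second difference is nonzero'' to ``the observer has nonzero acceleration at at least one time step'': when $N=3$ there is a single residual row, but for $N>3$ one must argue that $\{\mathbf{p}_{c,t_k}^w\}_{k=1}^N$ is affine in $t$ precisely when \emph{all} of these specific second differences vanish (if so, all residual rows of $\mathbf{B}$ are zero and the system is rank-deficient), and that an affine observer trajectory is exactly the degenerate case excluded by the hypothesis. The vanishing of the target's contribution to these second differences — the one place where the constant-velocity assumption enters — is the companion point that must be stated carefully rather than left implicit.
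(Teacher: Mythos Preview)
Your proposal is correct and follows exactly the approach the paper intends: the paper's own proof of Corollary~\ref{theorem1} is embedded in the proof of Theorem~\ref{theorem4}, where it is explicitly said that the first-order case ``can be derived similarly and more simply, and the proof process is omitted here''; you have simply filled in the omitted details by dropping the acceleration column and the attitude row from $\mathbf{B}_A$ and row-reducing as in \eqref{DiffAll}. Your caveat about identifying the vanishing of all $(t_1,t_2,t_k)$ second differences with an affine observer trajectory is well placed, but the paper treats ``nonzero observer acceleration'' in precisely this discrete-difference sense, so no additional argument is needed beyond what you sketched.
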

The observability conditions in Corollary~\ref{theorem1} can be interpreted as follows. First, the number of observations must be at least three, and the observer must have a non-zero acceleration for at least one time step. Second, lateral motion is not required anymore, which means the target's motion is observable even if the observer moves straight toward it along the bearing direction.

\subsection{nth-Order Polynomial Trajectory}
We now consider a more general case where the trajectory of the target can be represented as an $n$th-order polynomial:
\begin{align}
\mathbf{p}_o^w(t) = \mathbf{b}_0 + \mathbf{b}_1t + \mathbf{b}_2t^2 + ... + \mathbf{b}_nt^n,
\end{align}
where $\mathbf{b}_i \in \mathbb{R}^3$ is the unknown coefficient vector. The target's trajectory can be estimated if and only if $\{\mathbf{b}_i\}_{i=0}^n$ can be estimated.
The constant velocity and acceleration scenarios considered in the last subsection are two special cases of the polynomial scenario if we consider $n=1$ and $n=2$.

Suppose the trajectory of the observer can also be described as a polynomial:
\begin{align}
\mathbf{p}_c^w(t) = \mathbf{c}_0 + \mathbf{c}_1t + \mathbf{c}_2t^2 + ... + \mathbf{c}_nt^n + \mathcal{O}(t^n),
\end{align}
where $\mathcal{O}(t^n)\in\R^3$ represents the motion with order higher than $n$, and $\{\mathbf{c}_i\}_{i=1}^n \in \R^3$ are known.
The relative motion between the target and the observer can be described as
\begin{align}\label{s_t}
\mathbf{s}(t)
= \mathbf{p}_o^w(t)-\mathbf{p}_c^w(t) 
= \mathbf{s}_0 + \mathbf{s}_1t + ... + \mathbf{s}_nt^n + \mathcal{O}(t^n),
\end{align}
where $\mathbf{s}_i = \mathbf{b}_i - \mathbf{c}_i \in \mathbb{R}^3$ with $i = 0, 1, ..., n$.
Substituting $\mathbf{p}_o^w(t)-\mathbf{p}_c^w(t)=\bar{\mathbf{T}}_{oc,t}^w \alpha$ into the above equation gives
\begin{equation}
\mathbf{s}_0 + \mathbf{s}_1t + ... + \mathbf{s}_nt^n + \mathcal{O}(t^n) = \bar{\mathbf{T}}_{oc,t}^w \alpha.
\label{DiffEq}
\end{equation}

Our goal is to determine when the coefficients in \eqref{DiffEq} can be uniquely determined.
We first give the general observability condition by considering the continuous-time case.

\begin{theorem} [Observability condition for MAVs in continuous-time case]\label{theorem5}
The motion of the MAV is observable when $\mathcal{O}(t^n)\neq 0$ or
$\mathbf{P}_{h}(\mathbf{a}_o^w-\mathbf{a}_c^w)\neq 0$ for at least one instant.
\end{theorem}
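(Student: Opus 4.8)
The plan is to mirror the discrete-time argument in Theorem~\ref{theorem4} but in continuous time, where the observability question becomes: given that \eqref{DiffEq} holds for all $t$ in an interval, are the coefficients $\{\mathbf{s}_i\}_{i=0}^n$ and $\alpha$ (equivalently $\{\mathbf{b}_i\}$ and $\alpha$) uniquely determined by the measurements $\bar{\mathbf{T}}_{oc,t}^w$ and, for MAVs, the additional attitude relation \eqref{PoseMeasurement}? The natural device is differentiation: since \eqref{DiffEq} is a polynomial identity (plus the higher-order remainder) equal to $\bar{\mathbf{T}}_{oc,t}^w\alpha$, I would differentiate both sides repeatedly in $t$ and evaluate at a base instant $t_0$. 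Writing $\mathbf{g}(t) \doteq \bar{\mathbf{T}}_{oc,t}^w$, the $k$-th derivative gives $k!\,\mathbf{s}_k + (\text{terms from }\mathcal{O}(t^n)) = \mathbf{g}^{(k)}(t_0)\,\alpha$ for $k=0,\dots,n$, which plays the role of the stacked linear system \eqref{multiobservationmav}; the unknown vector is $[\mathbf{s}_0^\textup{T},\dots,\mathbf{s}_n^\textup{T},\alpha]^\textup{T}\in\R^{3n+4}$ (treating $\mathbf{s}_0$ as recoverable since the $\mathbf{c}_i$ are known), and the attitude equation \eqref{PoseMeasurement} supplies the extra block $\mathbf{P}_h\mathbf{a}_o^w = \mathbf{P}_h g\mathbf{e}_3$.

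Next I would set up the continuous analogue of the row-reduced matrix \eqref{DiffAll}: the identity blocks coming from the $k!\,\mathbf{s}_k$ terms let me eliminate all the $\mathbf{s}_k$ unknowns, leaving a single scalar-times-vector condition for $\alpha$. Exactly as in Theorem~\ref{theorem4}, there are two routes to full column rank. Route one: if $\mathcal{O}(t^n)\neq 0$, the observer genuinely has order higher than the target, and the extra derivative $\mathbf{g}^{(n+1)}(t_0)$ (or equivalently the $(n+1)$-st relative-motion coefficient) is nonzero, which pins down $\alpha$ and hence everything else — this is the generalization of condition~(a). Route two: if instead $\mathbf{P}_h(\mathbf{a}_o^w-\mathbf{a}_c^w)\neq 0$, then after eliminating the $\mathbf{s}_k$'s the attitude block contributes a row proportional to $\mathbf{P}_h\mathbf{g}^{(2)}(t_0)$; using \eqref{eq_popc_tk} to rewrite $\mathbf{g}^{(2)}$ in terms of the relative acceleration (as was done for $\boldsymbol{\rho}_{t_3}$), this row is $\tfrac{1}{\alpha}\mathbf{P}_h(\mathbf{a}_c^w-\mathbf{a}_o^w)$, nonzero by hypothesis, which again determines $\alpha$. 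So the structure is: reduce to a condition on $\alpha$ alone, then show either hypothesis forces that condition to have a unique solution.

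I expect the main obstacle to be handling the remainder term $\mathcal{O}(t^n)$ cleanly. Unlike the constant-acceleration case, here the "extra" motion is not a single jerk coefficient but an entire tail of the Taylor/polynomial expansion, so I must argue carefully that $\mathcal{O}(t^n)\neq 0$ translates into a nonzero higher derivative of $\mathbf{g}(t)\alpha$ at \emph{some} instant (the theorem only claims observability "for at least one instant"), and that differentiating the measurement signal is legitimate — i.e., that $\bar{\mathbf{T}}_{oc,t}^w$ inherits enough smoothness from the assumed polynomial trajectories. A secondary subtlety is that $\mathbf{P}_h$ is constant only under constant acceleration; in the genuinely $n$th-order polynomial setting one should fix the instant $t_0$ at which the condition is evaluated and work with $\mathbf{P}_{h(t_0)}$, noting that the attitude equation \eqref{PoseMeasurement} holds pointwise regardless. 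Once these smoothness/instant issues are dispatched, the linear-algebra core is a direct transcription of the block elimination already carried out for Theorem~\ref{theorem4}.
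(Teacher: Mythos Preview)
Your proposal is correct and follows essentially the same route as the paper: differentiate the relation \eqref{DiffEq} repeatedly to build a block-triangular linear system in $[\mathbf{s}_0^\textup{T},\dots,\mathbf{s}_n^\textup{T},\alpha]^\textup{T}$, append the attitude row from \eqref{PoseMeasurement}, eliminate the $\mathbf{s}_k$'s via the identity blocks, and conclude full column rank from either $(\bar{\mathbf{T}}_{oc,t}^w)^{(n+1)}\neq 0$ (equivalently $\mathcal{O}(t^n)\neq 0$) or $\mathbf{P}_h(\bar{\mathbf{T}}_{oc,t}^w)^{(2)}=\tfrac{1}{\alpha}\mathbf{P}_h(\mathbf{a}_o^w-\mathbf{a}_c^w)\neq 0$. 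Your remark that $\mathbf{P}_h$ is time-varying in the genuinely $n$th-order setting and should be evaluated at the chosen instant is a welcome clarification that the paper handles only implicitly.
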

\begin{proof}
By denoting
\begin{align}
\mathbf{X}=
\left[
\begin{array}{c}
\mathbf{s}_0 \\
\mathbf{s}_1 \\
\vdots \\
\mathbf{s}_n \\
\alpha \\
\end{array}
\right] \in \mathbb{R}^{3n+4},
\end{align}
we can rewrite \eqref{DiffEq} into
\begin{align}\label{eq_MXSystem}
\mathbf{M}\mathbf{X} = \mathcal{O}(t^n),
\end{align}
where
\begin{align}
\mathbf{M} = \left[\mathbf{I}_{3\times 3},t\mathbf{I}_{3\times 3}, \cdots, t^n\mathbf{I}_{3\times 3}, -\bar{\mathbf{T}}_{oc,t}^w\right] \in \mathbb{R}^{3\times (3n+4)}.
\end{align}
Note that \eqref{eq_MXSystem} is an under-determined system that does not have a unique solution of $\mathbf{X}$. However, we can take the $i$th derivative on both sides of \eqref{eq_MXSystem} to obtain
\begin{align} \label{M(X)}
    \mathbf{M}^{(i)}\mathbf{X} = \mathcal{O}^{(i)}(t^n),\quad i = 1,..., N,
\end{align}
where $N \geq n +1$. 

We next establish the relation between $\mathbf{X}$ and the thrust vector $\mathbf{h}$. To do that, taking the second-order derivative on both sides of $\mathbf{s}(t) = \mathbf{p}_o^w(t)-\mathbf{p}_c^w(t)$ yields
\begin{align} \label{p_oAcc}
    \mathbf{p}^{w(2)}_o(t) = \mathbf{p}^{w(2)}_c(t) + \mathbf{s}^{(2)}(t),
\end{align}
where $\mathbf{p}^{w(2)}_o(t)$ and $\mathbf{p}^{w(2)}_c(t)$ are the accelerations of the target and the observer, respectively.
Multiplying both sides of \eqref{p_oAcc} with $\mathbf{P}_h$ gives
\begin{align}\label{Php^2}
\mathbf{P}_h\mathbf{p}^{w(2)}_o(t)=\mathbf{P}_h( \mathbf{p}^{w(2)}_c(t) + s^{(2)}(t)) = \mathbf{P}_hg\mathbf{e}_3.
\end{align}
Substituting \eqref{s_t} into \eqref{Php^2} yields
\begin{align}
\mathbf{P}_h(\mathbf{p}^{w(2)}_c(t)\!+\!2\mathbf{s}_2 \!+\!... \!+\! n(n-1)\mathbf{s}_nt^{n-2} \!+\! \mathcal{O}^{(2)}(t^n))\!=\!\mathbf{P}_hg\mathbf{e}_3.
\end{align}
The above equation can be reorganized as
\begin{align} \label{Phtwoorder}
\mathbf{P}_h(2\mathbf{s}_2\!+\!...\!+\! n(n\!-\!1)\mathbf{s}_nt^{n-2})\!=\!\mathbf{P}_h(g\mathbf{e}_3 \!-\!\mathbf{p}^{w(2)}_c(t) \!-\!\mathcal{O}^{(2)}(t^n)).
\end{align}
Now, the relation between $\mathbf{X}$ and $\mathbf{P}_h$ is established.

\begin{figure*}[!t]
\normalsize
\setcounter{MYtempeqncnt}{\value{equation}}
\begin{align}
\underbrace{\left[\begin{array}{ccccc;{2pt/2pt}c}
\mathbf{I}_{3\times 3} & t\mathbf{I}_{3\times 3} & t^2 \mathbf{I}_{3\times 3} &\cdots&t^n\mathbf{I}_{3\times 3}&-\bar{\mathbf{T}}_{oc,t}^w \\
\mathbf{O}_{3\times 3} & \mathbf{I}_{3\times 3} & 2t\mathbf{I}_{3\times 3} & \cdots & nt^{n-1}\mathbf{I}_{3\times 3}&-(\bar{\mathbf{T}}_{oc,t}^w)^{(1)} \\
\mathbf{O}_{3\times 3} & \mathbf{O}_{3\times 3} & 2\mathbf{I}_{3\times 3} & \cdots & n(n-1)t^{n-2}\mathbf{I}_{3\times 3}&-(\bar{\mathbf{T}}_{oc,t}^w)^{(2)} \\
\vdots & \vdots & \vdots & \ddots & \vdots & \vdots \\
\mathbf{O}_{3\times 3} & \mathbf{O}_{3\times 3}& \mathbf{O}_{3\times 3} & \cdots & n!\mathbf{I}_{3\times 3} & -(\bar{\mathbf{T}}_{oc,t}^w)^{(n)} \\
\hdashline [2pt/2pt]
\mathbf{O}_{3\times 3} & \mathbf{O}_{3\times 3}& \mathbf{O}_{3\times 3} & \cdots & \mathbf{O}_{3\times 3} & -(\bar{\mathbf{T}}_{oc,t}^w)^{(n+1)} \\
\vdots & \vdots & \vdots & \ddots & \vdots & \vdots \\
\mathbf{O}_{3\times 3} & \mathbf{O}_{3\times 3}& \mathbf{O}_{3\times 3} & \cdots & \mathbf{O}_{3\times 3} & -(\bar{\mathbf{T}}_{oc,t}^w)^{(N)} \\
\hdashline [2pt/2pt]
\mathbf{O}_{3\times 3} & \mathbf{O}_{3\times 3}& 2\mathbf{P}_h & \cdots & n(n-1)t^{n-2}\mathbf{P}_h & \mathbf{O}_{3\times 1}
\end{array}\right]}_{\mathcal{C_A}\in \mathbb{R}^{(3N + 3) \times (3n+4)}}
\underbrace{X}_{X \in \mathbb{R}^{(3n+4) \times 1}} =
\underbrace{\begin{bmatrix}
\mathcal{O}(t^n) \\
\mathcal{O}^{(1)}(t^n) \\
\mathcal{O}^{(2)}(t^n) \\
\vdots \\
\mathcal{O}^{(n)}(t^n) \\
\hdashline [2pt/2pt]
\mathcal{O}^{(n+1)}(t^n) \\
\vdots \\
\mathcal{O}^{(N)}(t^n) \\	
\hdashline [2pt/2pt]
\mathbf{P}_h(g\mathbf{e}_3 - \mathbf{p}^{w(2)}_c(t) - \mathcal{O}^{(2)}(t^n))
\end{bmatrix}}_{\mathbf{c}_A\in \mathbb{R}^{(3N +3) \times 1}}.
\label{leastsquarefinal}
\end{align}
\vspace*{4pt}
\hrulefill
\end{figure*}

Combining all the equations of \eqref{M(X)} when $i=1,\dots, N$ with the attitude equation~\eqref{Phtwoorder} gives \eqref{leastsquarefinal} (see the top of the next page). 

The target's motion is observable when $\mathcal{C_A}$ in \eqref{leastsquarefinal} has full column rank. Since the upper-left block of $\mathcal{C_A}$ is an identity matrix, $\mathcal{C_A}$ has full column rank if either of the following two sub-matrices of $\mathcal{C_A}$ has full column rank.

First, consider a sub-matrix of $\mathcal{C_A}$ that excludes the last row of $\mathcal{C_A}$. Here, the sub-matrix represents the observation matrix for \emph{common objects}. The sub-matrix has full column rank as long as $(\bar{\mathbf{T}}_{oc,t}^w)^{(i)} \neq 0$ where $i \geq n+1$. Substituting
\begin{align}
\bar{\mathbf{T}}_{oc,t}^w = \frac{\mathbf{p}_o^w(t)-\mathbf{p}_c^w(t)}{\alpha} 
= \frac{\mathbf{s}_0 + \mathbf{s}_1t + \mathbf{s}_2t^2 + ... + \mathbf{s}_nt^n + \mathcal{O}(t^n)}{\alpha}
\end{align}
into $(\bar{\mathbf{T}}_{oc,t}^w)^{(i)} \neq 0$ gives
\begin{align}
\frac{(\mathbf{s}_0 + \mathbf{s}_1t + \mathbf{s}_2t^2 + ... + \mathbf{s}_nt^n + \mathcal{O}(t^n))^{(i)}}{\alpha} \neq 0.
\end{align}
Since $i\geq n+1$, the above equation is equivalent to
\begin{align}
	\mathcal{O}(t^n)^{(i)} \neq 0,
\end{align}
which holds if and only if
\begin{align}
	\mathcal{O}(t^n)\neq 0
\end{align}
since the order of $\mathcal{O}(t^n)$ is higher than $n$.
Therefore, the first observability condition is that the observer's motion is higher-order than the target's motion. This is also the condition for common objects in Corollary~\ref{theorem2}.

Second, consider a sub-matrix of $\mathcal{C_A}$ that consists of the first $n+1$ rows and the last row of $\mathcal{C_A}$. It can be further transformed into
\begin{align}
\! \left[ \! \begin{array}{ccccc;{2pt/2pt}c}
\mathbf{I} & t\mathbf{I} & t^2\mathbf{I} &\dots & t^n\mathbf{I} & -\bar{\mathbf{T}}_{oc,t}^w \\
\mathbf{O} & \mathbf{I} & 2t\mathbf{I} & \dots & nt^{n-1}\mathbf{I} & -(\bar{\mathbf{T}}_{oc,t}^w)^{(1)} \\
\mathbf{O} & \mathbf{O} & 2\mathbf{I} & \dots & n(n-1)t^{n-2}\mathbf{I} & -(\bar{\mathbf{T}}_{oc,t}^w)^{(2)} \\
\vdots & \vdots & \vdots & \ddots & \vdots & \vdots \\
\mathbf{O} & \mathbf{O} & \mathbf{O} & \dots & n!\mathbf{I} & -(\bar{\mathbf{T}}_{oc,t}^w)^{(n)} \\
\hdashline [2pt/2pt]
\mathbf{O} & \mathbf{O} & \mathbf{O} & \dots  & \mathbf{O} & \mathbf{P}_h(\bar{\mathbf{T}}_{oc,t}^w)^{(2)}
\end{array} \! \right] \!,
\end{align}
according to the transformation approach provided in Appendix~A.
It is evident that $\mathcal{C_A}$ has full column rank if and only if $ \mathbf{P}_h(\bar{\mathbf{T}}_{oc,t}^w)^{(2)} \neq 0$.
Taking the second-order derivative on both sides of $\bar{\mathbf{T}}_{oc}^w\alpha = \mathbf{p}_o^w(t)-\mathbf{p}_c^w(t)$ yields
\begin{align}\label{T_o^w(2)}
(\bar{\mathbf{T}}_{oc}^w)^{(2)}\alpha = \mathbf{p}_o^{w(2)}(t)-\mathbf{p}_c^{w(2)}(t)=\mathbf{a}_o^w-\mathbf{a}_c^w.
\end{align}
Multiplying $\mathbf{P}_h$ on both sides of \eqref{T_o^w(2)} yields
\begin{align}
    \mathbf{P}_h(\bar{\mathbf{T}}_{oc}^w)^{(2)} =\frac{1}{\alpha}\mathbf{P}_h(\mathbf{a}_o^w - \mathbf{a}_c^w) = \frac{1}{\alpha}\mathbf{P}_h(g\mathbf{e}_3 - \mathbf{a}_c^w).
\end{align}
Therefore, $ \mathbf{P}_h(\bar{\mathbf{T}}_{oc,t}^w)^{(2)} \neq 0$ is equivalent with $\mathbf{P}_h(\mathbf{a}_o^w - \mathbf{a}_c^w)\neq 0$.
\end{proof}

Theorem~\ref{theorem5} gives a general observability condition when the target MAV's motion is of $n$th-order. It suggests that the target's state is observable when either the observer has high-order motion or $\mathbf{P}_h(\mathbf{a}_o^w - \mathbf{a}_c^w) \neq 0$ is satisfied. It is consistent with Theorem~\ref{theorem4} when $n=2$.

\begin{corollary}
[Observability condition for common objects in continuous-time case]{\label{theorem2}
The target's motion is observable if and only if there exists $t$ such that
\begin{align}
	\mathcal{O}(t^n)\neq 0.
\end{align}
}
\end{corollary}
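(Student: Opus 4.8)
The plan is to specialize the construction already carried out in the proof of Theorem~\ref{theorem5} to a common object, for which only the bearing-box measurement equation~\eqref{DiffEq} is available, so the attitude (last) block row of~\eqref{leastsquarefinal} is absent while the unknown vector $\mathbf{X}=[\mathbf{s}_0,\dots,\mathbf{s}_n,\alpha]^\textup{T}$ is the same. First I would differentiate $\mathbf{M}\mathbf{X}=\mathcal{O}(t^n)$ in~\eqref{eq_MXSystem} repeatedly and stack the equations $\mathbf{M}^{(i)}\mathbf{X}=\mathcal{O}^{(i)}(t^n)$ for $i=0,1,\dots,N$ with $N\geq n+1$ into one linear system $\mathcal{C}\mathbf{X}=\mathbf{c}$, where $\mathcal{C}$ is exactly the submatrix of $\mathcal{C_A}$ in~\eqref{leastsquarefinal} obtained by deleting the last block row. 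Observability is then equivalent to $\mathcal{C}$ having full column rank, because a solution (the true state) always exists and it is unique precisely when $\mathcal{C}$ has trivial kernel.

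Next I would use the block structure: the first $n+1$ block rows of $\mathcal{C}$ are upper triangular with diagonal blocks $\mathbf{I},\mathbf{I},2\mathbf{I},\dots,n!\,\mathbf{I}$, all nonsingular, so eliminating the leading $3(n+1)$ columns (the transformation of Appendix~A) leaves the rank question entirely on the last column restricted to the rows $i=n+1,\dots,N$, i.e.\ on whether $(\bar{\mathbf{T}}_{oc,t}^w)^{(i)}\neq 0$ for some $i\geq n+1$. Substituting $\bar{\mathbf{T}}_{oc,t}^w=\frac{1}{\alpha}\big(\mathbf{s}_0+\mathbf{s}_1t+\dots+\mathbf{s}_nt^n+\mathcal{O}(t^n)\big)$ and noting that the $i$th derivative of the degree-$n$ polynomial part vanishes for $i\geq n+1$, this reduces to $\mathcal{O}(t^n)^{(i)}\neq 0$; since $\mathcal{O}(t^n)$ gathers exactly the terms of order strictly greater than $n$, such an $i$ exists (take $i$ equal to the top degree of $\mathcal{O}(t^n)$, at which point the derivative is a nonzero constant) if and only if $\mathcal{O}(t^n)\not\equiv 0$. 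This settles the \emph{if} direction.

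For the \emph{only if} direction I would observe that if $\mathcal{O}(t^n)\equiv 0$ then $\bar{\mathbf{T}}_{oc,t}^w$ is itself a polynomial of degree at most $n$, so $\mathbf{c}=0$ and the true state $\mathbf{X}$ solves the \emph{homogeneous} system $\mathcal{C}\mathbf{X}=0$; since $\alpha>0$ this is a nonzero kernel vector, hence $\mathcal{C}$ is column-rank deficient and every rescaling $c\mathbf{X}$ is equally consistent --- the familiar scale ambiguity between the target size and the relative-motion coefficients. Together with the previous paragraph this gives the stated equivalence.

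I expect the rank reduction of the second paragraph to be the only real subtlety: one must check that, after deleting the attitude row and performing the triangular elimination, no column dependence survives other than the one channeled through the last column, so that the whole obstruction to observability is genuinely captured by the single scalar condition $(\bar{\mathbf{T}}_{oc,t}^w)^{(i)}\neq 0$. This follows from the identity/Vandermonde-type form of the leading block (Appendix~A), but it is worth stating explicitly rather than leaving it implicit, since it is exactly where the attitude row in Theorem~\ref{theorem5} would otherwise contribute an extra, independent condition.
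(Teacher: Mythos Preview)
Your proposal is correct and follows essentially the same route as the paper: the paper embeds the proof of Corollary~\ref{theorem2} inside the proof of Theorem~\ref{theorem5} by considering the sub-matrix of $\mathcal{C_A}$ in~\eqref{leastsquarefinal} that excludes the attitude row, uses the upper-triangular block structure to reduce full column rank to $(\bar{\mathbf{T}}_{oc,t}^w)^{(i)}\neq 0$ for some $i\geq n+1$, and then substitutes the expression for $\bar{\mathbf{T}}_{oc,t}^w$ to arrive at $\mathcal{O}(t^n)\neq 0$, exactly as you outline. Your explicit treatment of the \emph{only if} direction via the homogeneous system and the resulting scale ambiguity is a bit more careful than the paper's (which handles it implicitly through the rank equivalence), but the overall approach is the same.
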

Corollary~\ref{theorem2} indicates that when the target's motion is of $n$th-order, the system is observable if and only if there exists a time instance so that the motion of the observer is of at least $(n+1)$th-order. For instance, if the target moves with a constant velocity so that $n=1$, then the observer must have nonzero acceleration for at least one-time instance.  It is consistent with Corollary~\ref{theorem1}. It is notable that the lateral high-order motion of the observer is not required anymore compared to the bearing-only approach.

Next, the discrete-time case answers the question of how many observations are at least needed when the observability condition in the continuous-time case is satisfied.

\begin{theorem}[Observability condition for MAVs in discrete-time case]\label{theorem6}
In the discrete-time case, the target motion can be observed if either of the two conditions is satisfied.

(a) There are at least $n+2$ observations needed when the motion of the observer satisfies $\mathcal{O}(t^n)\neq 0$ for at least one-time step.

(b) There are at least $n+1$ observations needed when $\mathbf{P}_h(\mathbf{a}_{o,t}^w - \mathbf{a}_{c,t}^w) \neq 0$ is satisfied for at least one-time step.

\end{theorem}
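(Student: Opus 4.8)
The plan is to mirror the proof of Theorem~\ref{theorem4} (the special case $n=2$), replacing the explicit second-order model by the $n$th-order polynomial parametrization used in Theorem~\ref{theorem5} and then passing from continuous derivatives to discrete divided differences so that the number of samples can be counted. Since $\{\mathbf{c}_i\}_{i=0}^n$ are known, the unknown vector is $\mathbf{X}=[\mathbf{s}_0^{\mathrm{T}},\dots,\mathbf{s}_n^{\mathrm{T}},\alpha]^{\mathrm{T}}\in\R^{3n+4}$, exactly as in Theorem~\ref{theorem5}. For $N$ samples at $t_1,\dots,t_N$, the relation $\mathbf{s}(t_k)=\bar{\mathbf{T}}_{oc,t_k}^w\alpha$ from \eqref{DiffEq} gives, for each $k$, the block row $[\mathbf{I},t_k\mathbf{I},\dots,t_k^n\mathbf{I},-\bar{\mathbf{T}}_{oc,t_k}^w]\mathbf{X}=-\mathcal{O}(t_k^n)$. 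The attitude relation \eqref{PoseMeasurement}, after substituting $\mathbf{a}_o^w=\mathbf{a}_c^w+\mathbf{s}^{(2)}$ from \eqref{p_oAcc} and expanding $\mathbf{s}^{(2)}$ as a polynomial in $t$, yields at any chosen sample $t$ a block row whose $\mathbf{s}_i$-coefficient is $i(i-1)t^{i-2}\mathbf{P}_h$ (and zero for $i=0,1$ and for the $\alpha$ column) with right-hand side $\mathbf{P}_h(g\mathbf{e}_3-\mathbf{a}_{c,t}^w)$ --- this is the discrete analogue of the last row of \eqref{leastsquarefinal}. The state is observable iff the stacked coefficient matrix has full column rank $3n+4$.

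For part~(a) I would discard the attitude row and analyze the pure position matrix, which is $3N\times(3n+4)$; a unique solution forces $3N\ge 3n+4$, i.e.\ $N\ge n+2$. Performing block Gaussian elimination on the block-Vandermonde part --- equivalently, replacing rows by successive divided differences as in \eqref{rhodef} and \eqref{AveAcc} --- brings the first $n+1$ rows to block-upper-triangular form with identity diagonal blocks and leaves the remaining $N-(n+1)$ rows of the form $[\mathbf{O},\dots,\mathbf{O},\boldsymbol{\delta}_k]$, where $\boldsymbol{\delta}_k$ is an order-$(n+1)$ (or higher) divided difference of $\bar{\mathbf{T}}_{oc}^w$. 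Since $\bar{\mathbf{T}}_{oc}^w=\frac{1}{\alpha}(\text{degree-}n\text{ polynomial}+\mathcal{O}(t^n))$ and divided differences of order $>n$ annihilate degree-$n$ polynomials, some $\boldsymbol{\delta}_k\ne 0$ precisely when $\mathcal{O}(t^n)\ne 0$ on the sample grid; hence full column rank holds iff $N\ge n+2$ and $\mathcal{O}(t^n)\neq 0$ for at least one time step, which is condition~(a). For $n=2$ this recovers the ``$4$ observations, nonzero jerk'' statement of Theorem~\ref{theorem4}(a).

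For part~(b) I would instead keep the first $n+1$ position rows together with one attitude row taken at a sample $t$ where $\mathbf{P}_h(\mathbf{a}_{o,t}^w-\mathbf{a}_{c,t}^w)\neq 0$; this is a $(3n+6)\times(3n+4)$ matrix and requires only $N\ge n+1$ samples. Row-reducing the $n+1$ position rows as above puts them in block-upper-triangular form whose last column becomes, successively, $-\bar{\mathbf{T}}_{oc,t_1}^w$, a first divided difference, a second divided difference $\boldsymbol{\rho}$, and so on. Then, using the polynomial expression of $\mathbf{s}^{(2)}$ to subtract the appropriate multiples of these reduced rows from the attitude row --- exactly the elimination carried out in Appendix~A --- cancels all the $\mathbf{s}_0,\dots,\mathbf{s}_n$ columns and leaves the single row $[\mathbf{O},\dots,\mathbf{O},\mathbf{P}_h\boldsymbol{\rho}]$ with $\mathbf{P}_h\boldsymbol{\rho}=\frac{1}{\alpha}\mathbf{P}_h(\mathbf{a}_{o,t}^w-\mathbf{a}_{c,t}^w)$ by the discrete counterpart of \eqref{T_o^w(2)}. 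Thus the matrix has full column rank iff $N\ge n+1$ and $\mathbf{P}_h(\mathbf{a}_{o,t}^w-\mathbf{a}_{c,t}^w)\neq 0$ for at least one time step, i.e.\ condition~(b); for $n=2$ this is Theorem~\ref{theorem4}(b).

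The main obstacle is the bookkeeping in part~(b): one must verify that eliminating the $\mathbf{s}_i$ columns from the attitude row leaves \emph{exactly} $\mathbf{P}_h$ times the scaled relative acceleration and nothing else, given that the attitude row touches only $\mathbf{s}_2,\dots,\mathbf{s}_n$ (through $\mathbf{s}^{(2)}$) and must be reconciled with the already row-reduced Vandermonde block; this is the step the paper relegates to Appendix~A, and I would invoke that transformation rather than redo it. A secondary point needing care is the equivalence between ``an order-$(n+1)$ divided difference of $\bar{\mathbf{T}}_{oc}^w$ over the samples is nonzero'' and ``$\mathcal{O}(t^n)\ne 0$'': by linearity of divided differences and the annihilation of degree-$n$ polynomials, this difference equals $\frac{1}{\alpha}$ times the corresponding divided difference of $\mathcal{O}(t^n)$, which is the natural discrete reading of the continuous condition in Corollary~\ref{theorem2} and is nonzero whenever the sample grid is not contained in the graph of a degree-$n$ interpolant of the observer's higher-order motion.
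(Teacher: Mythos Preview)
Your proposal follows the same architecture as the paper's proof: stack the $N$ position equations $[\mathbf{I},t_k\mathbf{I},\dots,t_k^n\mathbf{I},-\bar{\mathbf{T}}_{oc,t_k}^w]\mathbf{X}=\mathcal{O}(t_k^n)$ together with attitude rows, row-reduce the Vandermonde block by successive divided differences, and then read off conditions~(a) and~(b) from the last column via Appendix~A. Part~(a) is handled exactly as the paper does.

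The one place you diverge is in how the attitude row is written down. You use the \emph{continuous} coefficients $i(i-1)t^{i-2}\mathbf{P}_h$ from the polynomial expansion of $\mathbf{s}^{(2)}$. The paper instead forms the attitude row with \emph{discrete} second-difference coefficients $\Delta^2(t_k^j)\mathbf{P}_h$ (equation~\eqref{PhX}), obtained by applying $\Delta^2$ to the scalar identity $\mathbf{a}_o^w=\Delta^2(\mathbf{p}_o^w(t_k))+\mathcal{R}(t^2)$ and then to \eqref{s_t}. The payoff is that the paper's attitude row then has \emph{identical} $\mathbf{s}_j$-coefficients (up to $\mathbf{P}_h$ versus $\mathbf{I}$) to the $\Delta^2$-transformed position row at the same $t_k$, so Appendix~A applies in a single subtraction and the residual is literally $\mathbf{P}_{h,t_k}\Delta^2(\bar{\mathbf{T}}_{oc,t_k}^w)$. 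With your continuous coefficients the reduced position rows carry divided-difference entries that do \emph{not} match $i(i-1)t^{i-2}$, so the elimination is no longer the one-shot Appendix~A move; you would have to take the full linear combination of all $n+1$ reduced rows that reproduces $[0,0,2,6t,\dots,n(n-1)t^{n-2}]$ on the Vandermonde block, and the last-column residual is then that same combination of the $\bar{\mathbf{T}}_{oc,t_k}^w$ rather than the single second divided difference $\boldsymbol{\rho}$ you claim. This is fixable and still yields $\tfrac{1}{\alpha}\mathbf{P}_h(\mathbf{a}_{o}^w-\mathbf{a}_{c}^w)$ up to the discretization remainder, but your sentence ``exactly the elimination carried out in Appendix~A'' is not accurate as stated; adopting the paper's $\Delta^2$-coefficient form for the attitude row is the clean way to make Appendix~A apply verbatim.
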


\begin{proof} Suppose we have $N$ observations at $t_1, \dots, t_N$ with time interval as $\tau$. From \eqref{eq_MXSystem} we can obtain $\mathbf{M}(t_i)\mathbf{X} = \mathcal{O}(t_i^n)$ where $i=1,..., N$.

We next establish the relation between $\mathbf{X}$ and $\mathbf{P}_h$ in the discrete-time case.
To this end, we use the Taylor expansion to express the acceleration of the target MAV:
\begin{align}\label{acc-highorder}
\mathbf{a}_o^w = \Delta^2(\mathbf{p}_o^w(t_k)) + \mathcal{R}(t^2)
\end{align}
where the two-order difference of $\mathbf{p}_c^w(t_k)$ is
\begin{align}
\Delta^2(\mathbf{p}_c^w(t_k))=\frac{\Delta(\mathbf{p}_c^w(t_k))-\Delta(\mathbf{p}_c^w(t_{k-1}))}{\tau}
\end{align}
and $\mathcal{R}(t^2)$ is the two-order Taylor remainder.
Then, taking the two-order difference of \eqref{s_t} yields
\begin{align} \label{Delta^2}
\Delta^2(\bar{\mathbf{T}}_{oc,t_k}^w)\alpha& = \Delta^2(\mathbf{p}_o^w(t_k)- \mathbf{p}_c^w(t_k)) \nonumber = \Delta^2(\mathbf{s}(t_k))=\\
&2\mathbf{s}_2 + \Delta^2(t_k^3)\mathbf{s}_3 + \dots + \Delta^2(t_k^n)\mathbf{s}_n + \Delta^2(\mathcal{O}(t_k^n)).
\end{align}
Substituting \eqref{acc-highorder} into \eqref{Delta^2} gives
\begin{multline} \label{acc-s_n}
2\mathbf{s}_2 + \Delta^2(t_k^3)\mathbf{s}_3 + \dots + \Delta^2(t_k^n)\mathbf{s}_n = \\
\mathbf{a}_o^w - \mathcal{R}(t^2) - \Delta^2(\mathbf{p}_c^w(t_k))-\Delta^2(\mathcal{O}(t_k^n)).
\end{multline}
Then, multiplying $\mathbf{P}_h$ on both sides of \eqref{acc-s_n} obtains
\begin{align} \label{PhX}
\begin{bmatrix}
\mathbf{O} & \mathbf{O} &2\mathbf{P}_h & \Delta^2(t_k^3)\mathbf{P}_h & \cdots & \Delta^2(t_k^n)\mathbf{P}_h & \mathbf{O}
\end{bmatrix} \mathbf{X} = \nonumber \\ \mathbf{P}_h(g\mathbf{e}_3 - \Delta^2(\mathbf{p}_c^w(t_k)) - \Delta^2(\mathcal{O}(t_k^n)) -\mathcal{R}(t^2)).
\end{align}
Now the equation between $\mathbf{X}$ and $\mathbf{P}_h$ is obtained.

Combining all the measurement equations of $N$ observations gives \eqref{MAXmA} (see at the top of the next page). We can obtain $N-2$ equations of \eqref{PhX} since $\Delta^2(t_k^n)$ exists when $k \geq 3$. We next derive the condition of a unique $\mathbf{X}$ when $\mathcal{M_A}$ in \eqref{MAXmA} has full column rank. 

\begin{figure*}[!t]
\setlength{\arraycolsep}{2.0pt}
\setcounter{MYtempeqncnt}{\value{equation}}
\begin{align}\label{MAXmA}
\underbrace{\left[\begin{array}{ccccccc}
\mathbf{I} & t_1\mathbf{I} & t_1^2\mathbf{I} & t_1^3\mathbf{I} & \cdots & t_1^n\mathbf{I} & -\bar{\mathbf{T}}_{oc,t_1}^w \\
\mathbf{I} & t_2\mathbf{I} & t_2^2\mathbf{I} & t_2^3\mathbf{I} & \cdots & t_2^n\mathbf{I} & -\bar{\mathbf{T}}_{oc,t_2}^w \\
\mathbf{I} & t_3\mathbf{I} & t_3^2\mathbf{I} & t_3^3\mathbf{I} & \cdots & t_3^n\mathbf{I} & -\bar{\mathbf{T}}_{oc,t_3}^w \\
\mathbf{I} & t_4\mathbf{I} & t_4^2\mathbf{I} & t_4^3\mathbf{I} & \cdots & t_4^n\mathbf{I} & -\bar{\mathbf{T}}_{oc,t_4}^w \\
\vdots&\vdots&\vdots&\vdots& \ddots &\vdots & \vdots \\
\mathbf{I} & t_N\mathbf{I} & t_N^2\mathbf{I} & t_N^3\mathbf{I} & \cdots & t_N^n\mathbf{I} & -\bar{\mathbf{T}}_{oc,t_N}^w \\
\hdashline [2pt/2pt]
\mathbf{O} & \mathbf{O} & 2\mathbf{P}_{h,t_3} & \Delta^2(t_3^3)\mathbf{P}_{h,t_3}& \cdots & \Delta^2(t_3^n)\mathbf{P}_{h,t_3} & \mathbf{O}\\
\mathbf{O} & \mathbf{O} & 2\mathbf{P}_{h,t_4} & \Delta^2(t_4^3)\mathbf{P}_{h,t_4}& \cdots & \Delta^2(t_4^n)\mathbf{P}_{h,t_4} & \mathbf{O}\\
\vdots&\vdots&\vdots&\vdots& \ddots &\vdots & \vdots \\
\mathbf{O} & \mathbf{O} & 2\mathbf{P}_{h,t_N} & \Delta^2(t_N^3)\mathbf{P}_{h,t_N}& \cdots & \Delta^2(t_N^n)\mathbf{P}_{h,t_N} & \mathbf{O}\\
\end{array}\right]}_{\mathcal{M}_A \in \mathbb{R}^{(6N-6) \times (3n +4)}}
\underbrace{\begin{bmatrix}
\mathbf{s}_0 \\
\mathbf{s}_1 \\
\mathbf{s}_2 \\
\mathbf{s}_3 \\
\vdots \\
\mathbf{s}_n \\
\alpha \\
\end{bmatrix}}_{X \in \mathbb{R}^{3n+4}} =
\underbrace{
\begin{bmatrix}
\mathcal{O}(t_1^n) \\
\mathcal{O}(t_2^n) \\
\mathcal{O}(t_3^n) \\
\mathcal{O}(t_4^n) \\
\vdots \\
\mathcal{O}(t_N^n) \\
\hdashline [2pt/2pt]
\mathbf{P}_h(g\mathbf{e}_3 - \Delta^2(\mathbf{p}_c^w(t_3)) - \Delta^2(\mathcal{O}(t_3^n)) - \mathcal{R}(t_3^2)) \\
\mathbf{P}_h(g\mathbf{e}_3 - \Delta^2(\mathbf{p}_c^w(t_4)) - \Delta^2(\mathcal{O}(t_4^n)) - \mathcal{R}(t_4^2)) \\
\vdots \\
\mathbf{P}_h(g\mathbf{e}_3 - \Delta^2(\mathbf{p}_c^w(t_N)) - \Delta^2(\mathcal{O}(t_N^n)) - \mathcal{R}(t_N^2)) \\
\end{bmatrix}}_{m_A \in \mathbb{R}^{(6N-6) \times 1}}
\end{align}
\vspace*{4pt}
\hrulefill
\end{figure*}

Let $\Delta(\cdot)$ be a difference operator:
\begin{align}
	\Delta(\bar{\mathbf{T}}_{oc,t_i}^w) \doteq \frac{\bar{\mathbf{T}}_{oc,t_{i}}^w-\bar{\mathbf{T}}_{oc,t_{i-1}}^w}{\tau},
\end{align}
where $\tau$ is the time interval between $t_i$ and $t_{i-1}$. The $n$th-order difference is defined as
\begin{align}
	\Delta^n(\bar{\mathbf{T}}_{oc,t_i}^w) = \frac{\Delta^{n-1}(\bar{\mathbf{T}}_{oc,t_{i}}^w)-\Delta^{n-1}(\bar{\mathbf{T}}_{oc,t_{i-1}}^w)}{\tau}.
\end{align}
Similarly,
\begin{align}
\Delta^n(t_i^m) = \frac{\Delta^{n-1}(t_i^m) - \Delta^{n-1}(t_{i-1}^m)}{\tau}.
\end{align}

Then, we use it to simplify the bottom block of $\mathcal{M_A}$.

Doing some row transformations on the upper block of  $\mathcal{M_A}$ obtains:
\begin{align} \label{uppertrans}
\setlength{\arraycolsep}{2.0pt}
\! \left[ \! \begin{array}{ccccccc}
\mathbf{I} & t_1\mathbf{I} & t_1^2\mathbf{I} & t_1^3\mathbf{I} & \dots & t_1^n\mathbf{I} & -\bar{\mathbf{T}}_{oc,t_1}^w \\
\mathbf{O} & \mathbf{I} & \Delta(t_2^2)\mathbf{I} & \Delta(t_2^3)\mathbf{I} & \dots & \Delta(t_2^n)\mathbf{I} & -\Delta(\bar{\mathbf{T}}_{oc,t_2}^w) \\
\hdashline  [2pt/2pt]
\mathbf{O} & \mathbf{O} & 2\mathbf{I} & \Delta^2(t_3^3)\mathbf{I} & \dots & \Delta^2(t_3^n)\mathbf{I} & -\Delta^2(\bar{\mathbf{T}}_{oc,t_3}^w) \\
\vdots & \vdots & \vdots & \vdots & \ddots & \vdots & \vdots \\
\mathbf{O} & \mathbf{O} & 2\mathbf{I} & \Delta^2(t_N^3)\mathbf{I} & \dots & \Delta^2(t_N^n)\mathbf{I} & -\Delta^2(\bar{\mathbf{T}}_{oc,t_N}^w) \\
\end{array} \! \right] \!.
\end{align}
We can find that the element of each row in \eqref{uppertrans} is similar to the corresponding row in the bottom block of $\mathcal{M_A}$ except for $\mathbf{P}_{h, t_k}$ when $3\leq k \leq N$.
According to Appendix~A, each row in the bottom block of $\mathcal{M_A}$ can be simplified into
\begin{equation}
\begin{bmatrix}
\mathbf{O} & \mathbf{O} &  \mathbf{O} & \mathbf{O} & \cdots & \mathbf{O} & \mathbf{P}_{h,t_k}\Delta^2(\bar{\mathbf{T}}_{oc,t_k}^w)
\end{bmatrix},
\end{equation}
where $3\leq k \leq N$.
Therefore, $\mathcal{M_A}$ can be transformed into 

\begin{align}
\normalsize
\setlength{\arraycolsep}{1.5pt}
\!\left[\!\begin{array}{cccccc;{2pt/2pt}c}
\mathbf{I}& t_1\mathbf{I}& t_1^2\mathbf{I} & t_1^3\mathbf{I}&\cdots&t_1^n\mathbf{I}&-\bar{\mathbf{T}}_{oc,t_1}^w \\
\mathbf{O}& \mathbf{I} & \Delta(t_2^2)\mathbf{I} & \Delta(t_2^3)\mathbf{I} & \cdots & \Delta(t_2^n)\mathbf{I}& -\Delta(\bar{\mathbf{T}}_{oc,t_2}^w)\\
\mathbf{O}&\mathbf{O} & 2!\mathbf{I} & \Delta^2(t_3^3)\mathbf{I} &\cdots & \Delta^2(t_3^n)\mathbf{I}& -\Delta^2(\bar{\mathbf{T}}_{oc,t_3}^w)\\
\vdots & \vdots & \vdots & \vdots & \ddots &\vdots  & \vdots\\
\mathbf{O}&\mathbf{O} & \mathbf{O} & \mathbf{O} &\cdots & \Delta^{n-1}(t_{n}^n)\mathbf{I}& -\Delta^{n-1}(\bar{\mathbf{T}}_{oc,t_{n}}^w)\\
\mathbf{O}&\mathbf{O} & \mathbf{O} & \mathbf{O} & \cdots & n!\mathbf{I}& -\Delta^{n}(\bar{\mathbf{T}}_{oc,t_{n+1}}^w)\\
\hdashline [2pt/2pt]
\mathbf{O}&\mathbf{O} & \mathbf{O} & \mathbf{O} & \cdots & \mathbf{O} & -\Delta^{n+1}(\bar{\mathbf{T}}_{oc,t_{n+2}}^w)\\
\vdots & \vdots & \vdots & \vdots &\ddots & \vdots & \vdots\\
\mathbf{O}&\mathbf{O} & \mathbf{O} & \mathbf{O} & \cdots & \mathbf{O}& -\Delta^{N-1}(\bar{\mathbf{T}}_{oc,t_N}^w)\\
\hdashline [2pt/2pt]
\mathbf{O} & \mathbf{O} & \mathbf{O} & \mathbf{O}  & \cdots & \mathbf{O} & \mathbf{P}_{h,t_3}\Delta^2(\bar{\mathbf{T}}_{oc,t_3}^w) \\
\vdots & \vdots & \vdots & \vdots &\ddots & \vdots & \vdots\\
\mathbf{O} & \mathbf{O} & \mathbf{O} & \mathbf{O}  & \cdots & \mathbf{O} & \mathbf{P}_{h,t_N}\Delta^2(\bar{\mathbf{T}}_{oc,t_N}^w)
\end{array}\! \right].
\label{DifferenceFinalMAV}
\end{align}

Since the upper-left block of $\mathcal{M}_A$ in \eqref{DifferenceFinalMAV} is an identity matrix, $\mathcal{M}_A$ is of full column rank if either of the following sub-matrices has full column rank.

First, consider the sub-matrix of $\mathcal{M_A}$ in \eqref{DifferenceFinalMAV} that contains the upper and middle blocks. Here, the sub-matrix represents the observation matrix for common objects. We need at least $n+2$ observations to make the sub-matrix a tall matrix. Then, $\mathcal{M_A}$ has full column rank as long as $\Delta^{k-1}(\bar{\mathbf{T}}_{oc,t_k}^w) \neq 0$ where $k\geq n+2$. When $\tau$ is small, the $n$th-order difference can be approximated as the $n$th-order derivative. Therefore, the above condition exists when the condition $\mathcal{O}(t_i^n) \neq 0$ is satisfied in the continuous-time case. This is also the condition for common objects in Corollary~\ref{theorem3}.

Second, consider the sub-matrix of $\mathcal{M_A}$ in \eqref{DifferenceFinalMAV} consisting of the upper and bottom blocks. We need at least $n+1$ observations to make the sub-matrix a tall matrix. The condition for the full column rank of $\mathcal{M_A}$ is that 
\begin{align}\mathbf{P}_{h,t_k}\Delta^2(\bar{\mathbf{T}}_{oc,t_k}^w) \neq 0
\end{align}
where $k \geq n+1$.
From \eqref{Delta^2}, we know that $\Delta^2(\bar{\mathbf{T}}_{oc,t_k}^w)\alpha$ represents the average relative acceleration at $t_k$. When $\tau$ is small, the second-order difference is approximated by the second-order derivative. Thus, $\mathbf{P}_{h,t_k}\Delta^2(\bar{\mathbf{T}}_{oc,t_k}^w)$ does not equal zero when $\mathbf{P}_h(\mathbf{a}_o^w-\mathbf{a}_c^w) \neq 0$.
\end{proof}

Theorem~\ref{theorem6} can be interpreted as follows. First, if only the higher-order motion of the observer is satisfied, the required observations are $n+2$. Second, if $\mathbf{P}_h(\mathbf{a}_o^w-\mathbf{a}_c^w)\neq 0$ satisfies for at least one observation, the required observations reduce to $n+1$. The conclusions demonstrate that the novel attitude measurement equation could accelerate the divergence speed.

\begin{corollary}[Observability condition for common objects in discrete-time case]
\label{theorem3}
In the discrete-time case, the target's state is observable if and only if there are at least $n+2$ observations and $\mathcal{O}(t^n) \neq 0$ for at least a one-time step.
\end{corollary}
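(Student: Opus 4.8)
The plan is to obtain Corollary~\ref{theorem3} as a specialization of the proof of Theorem~\ref{theorem6} to the case \emph{without} the attitude measurement~\eqref{PoseMeasurement}, i.e.\ by deleting the $\mathbf{P}_h$-rows from the system~\eqref{MAXmA}. Concretely, stacking only the bearing-box equations $\mathbf{M}(t_i)\mathbf{X}=\mathcal{O}(t_i^n)$ for $i=1,\dots,N$, where $\mathbf{X}=[\mathbf{s}_0^\textup{T},\dots,\mathbf{s}_n^\textup{T},\alpha]^\textup{T}$ and $\mathbf{s}_i=\mathbf{b}_i-\mathbf{c}_i$, yields a linear system whose coefficient matrix is the upper block of $\mathcal{M}_A$ in \eqref{MAXmA}. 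Since the known observer coefficients $\{\mathbf{c}_i\}$ are absorbed into $\mathbf{X}$, the target trajectory is observable exactly when this system determines $\mathbf{X}$ uniquely, i.e.\ when its coefficient matrix has full column rank $3n+4$; note this matrix has $3N$ rows, so full column rank already requires $N\ge n+2$.

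For sufficiency I would apply the finite-difference row transformations of Appendix~A (the same ones used to pass from \eqref{MAXmA} to \eqref{DifferenceFinalMAV}) to this matrix, bringing it to the block echelon form given by the upper and middle blocks of \eqref{DifferenceFinalMAV}. Its leading $3(n+1)\times 3(n+1)$ principal submatrix is block upper-triangular with nonsingular diagonal blocks $\mathbf{I},\dots,n!\mathbf{I}$, so full column rank is equivalent to the existence of a row below it whose last entry $\Delta^{k-1}(\bar{\mathbf{T}}_{oc,t_k}^w)$ is nonzero for some $k\ge n+2$. From \eqref{DiffEq}, $\bar{\mathbf{T}}_{oc,t}^w\alpha=\mathbf{s}_0+\mathbf{s}_1t+\dots+\mathbf{s}_nt^n+\mathcal{O}(t^n)$, and an evenly-spaced $(k-1)$th finite difference annihilates every monomial of degree $\le n$ once $k-1\ge n+1$; hence for $k\ge n+2$ one has $\Delta^{k-1}(\bar{\mathbf{T}}_{oc,t_k}^w)\,\alpha=\Delta^{k-1}(\mathcal{O}(t_k^n))$. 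Therefore some tail entry is nonzero iff $\mathcal{O}(t^n)\not\equiv 0$ on the sample, which gives the ``if'' direction.

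For necessity I would argue the contrapositive. If $N\le n+1$, the coefficient matrix has $3n+4$ columns but only $3N\le 3n+3$ rows, so it cannot have full column rank. If instead $\mathcal{O}(t^n)\equiv 0$ at every sample, then $\bar{\mathbf{T}}_{oc,t}^w\alpha$ is exactly a degree-$n$ polynomial in $t$, so for any scalar $\lambda\neq 1$ the choice $\alpha'=\lambda\alpha$, $\mathbf{s}_i'=\lambda\mathbf{s}_i$ (equivalently $\mathbf{b}_i'=\lambda\mathbf{s}_i+\mathbf{c}_i$) satisfies all measurement equations and produces a distinct admissible target trajectory; hence the state is not observable. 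This scale ambiguity is precisely the obstruction that forces the observer to have strictly higher-order motion than the target, and it reduces to Corollary~\ref{theorem1} when $n=1$.

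The main obstacle is making the finite-difference echelon reduction airtight: one must verify that the row operations inherited from Appendix~A preserve rank, that the finite differences are taken at uniformly spaced nodes so that the annihilation identity $\Delta^{m}(t^j)\equiv 0$ for $j<m$ holds exactly rather than only approximately, and that ``$\mathcal{O}(t^n)\neq 0$ for at least one time step'' in the discrete statement is genuinely equivalent to $\Delta^{k-1}(\mathcal{O}(t_k^n))\neq 0$ for some $k\ge n+2$ — i.e.\ that a genuine higher-order term in the observer's motion cannot be invisible to all of the relevant finite differences. Everything else is routine bookkeeping already carried out in the proof of Theorem~\ref{theorem6}.
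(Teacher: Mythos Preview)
Your proposal is correct and follows essentially the same route as the paper: the corollary is extracted from the proof of Theorem~\ref{theorem6} by retaining only the bearing-box rows (the upper and middle blocks of \eqref{DifferenceFinalMAV}) and discarding the attitude rows involving $\mathbf{P}_h$. The paper arrives at the rank condition $\Delta^{k-1}(\bar{\mathbf{T}}_{oc,t_k}^w)\neq 0$ for some $k\ge n+2$ and then links it to $\mathcal{O}(t^n)\neq 0$ by the informal remark that, for small $\tau$, the finite difference approximates the derivative; you instead invoke the exact algebraic identity that a $(k-1)$th forward difference at uniform nodes annihilates every polynomial of degree $\le n$ once $k-1\ge n+1$, which is a cleaner and strictly stronger justification. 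You also supply an explicit necessity argument via the scale ambiguity $(\alpha,\mathbf{s}_i)\mapsto(\lambda\alpha,\lambda\mathbf{s}_i)$, which the paper does not spell out for this corollary. The residual concern you flag---whether ``$\mathcal{O}(t^n)\neq 0$ at some sample'' is genuinely equivalent to ``some high-order finite difference of $\mathcal{O}(t^n)$ is nonzero''---is real, and the paper's own treatment is no more careful on this point; in the paper's intended setting $\mathcal{O}(t^n)$ is a higher-degree polynomial tail, in which case the equivalence holds exactly.
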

Corollary~\ref{theorem3} can be interpreted as follows. When the target's trajectory is an $n$th-order polynomial, then its state is observable if there are at least $n+2$ observations and the observer's trajectory is of higher order for at least one time step. This condition is consistent with the one in Corollary~\ref{theorem1}. For example, when the target moves at a constant velocity, we have $n=1$, and hence Corollary~\ref{theorem3} indicates that at least 3 observations are required, and the observer should have nonzero acceleration for at least one time step.

\section{Experiments for Common Objects}
\label{sec:commonexp}
This section presents real-world experiments for non-MAV objects to verify the proposed estimator in Section~\ref{Estimator}. We compare our bearing-box approach with two most representative methods among existing works, the classic bearing-only method \cite{li2022three} and the latest bearing-angle method \cite{ning2024bearing}.
\begin{figure*}[t!]	
\centering
\subfigure[The automatic data collection and labeling system. The detection results of cars are enlarged for better visualization;]{
\centering
\includegraphics[width=0.98\linewidth]{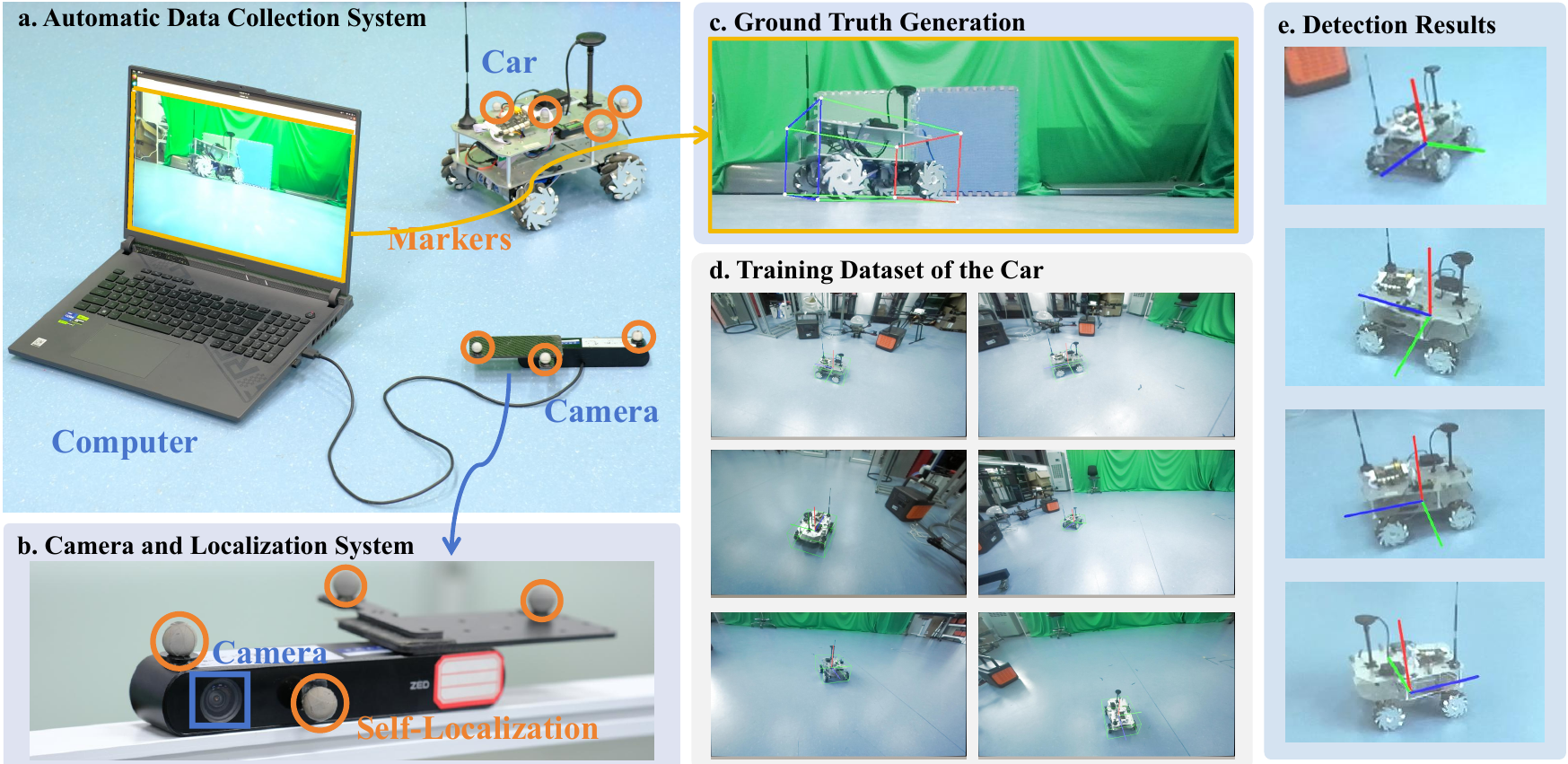}
\label{CarExp}}
\subfigure[Two real-world experiments conducted on a car. The true velocity of the target is provided by the position difference and filtering;]{
\centering
\includegraphics[width=0.98\linewidth]{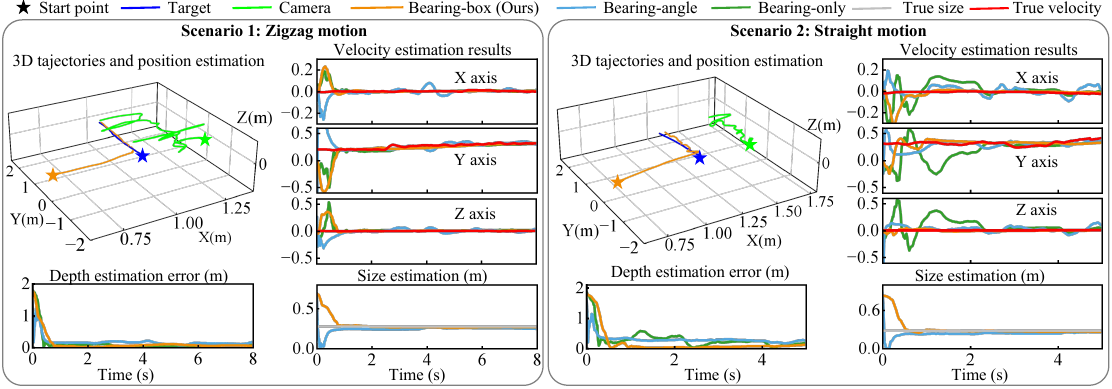}
\label{CarExp1}}

\caption{The automatic labeling system in the indoor environment and the experimental results of Scenarios 1 and 2 in the real world.}
\label{CarRealExp}
\end{figure*}
\subsection{Implementation Details}
Inspired by \cite{griffin2021depth}, we introduce an evaluation metric called \emph{normalized integral depth error} (NIDE) to describe both the convergence rate and convergence accuracy. In particular, NIDE is defined as the ratio between the area under the error curve and the time duration:

\begin{align}
    NIDE = \frac{1}{N}\sum_{i=1}^N\frac{||\hat{d}_i-d_i||}{d_i},
\end{align}
where $d_i$ represents the depth at time step $i$. It should be noted that NIDE can also quantify the case when the estimation diverges, which is the case for the bearing-only and bearing-angle methods in many scenarios.

\begin{figure*}[t!]
\centering\includegraphics[width=1\textwidth]{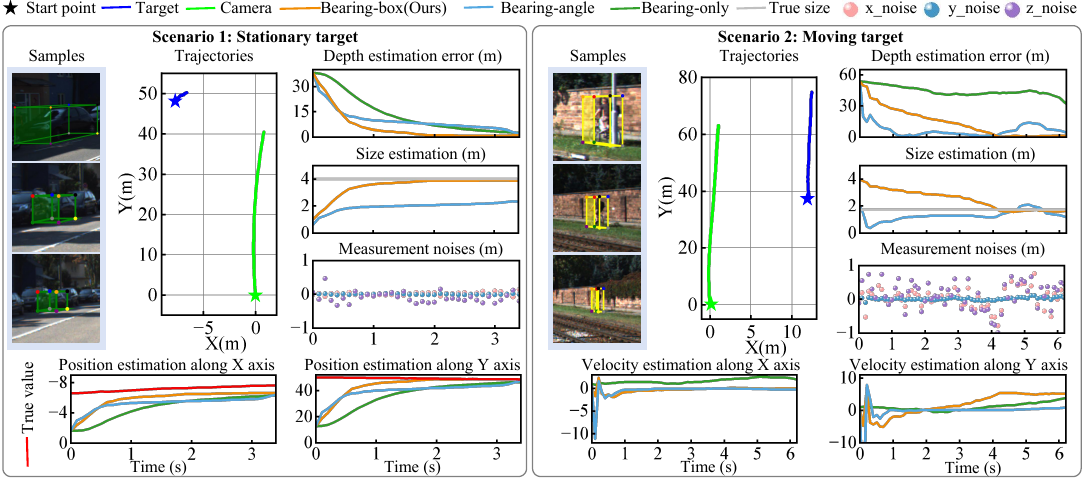}
\caption{Two experiments in the KITTI dataset. Some samples of the 3D detection results are shown in the left position of each sub-figure.}
\label{KITTI}
\end{figure*}
\subsection{Real-World Experiments} \label{sec:car}

We conducted experiments on a real-world ground vehicle as shown in Fig.~\ref{CarRealExp}.
Compared to simulation, real-world experiments involve various sources of noise.

We first built an automatic data collection system to collect and label images efficiently. As shown in Fig.~\ref{CarExp}, this system consists of a motion capture system, a monocular camera, a computer, and a target vehicle. The parameters of the system are listed in Table~\ref{hardware}. The camera's intrinsic parameters and eye-in-hand parameters \cite{furrer2018evaluation} were calibrated in advance. The distortion coefficients of the camera were calibrated before the real-world experiments, and these coefficients were used to undistort the images during the processing. Accurate measurements of the position and rotation of the target can be provided by the motion capture system. Then, the true values of the rotation and translation from the target to the camera can be automatically calculated, greatly reducing the time for human labeling. We collected 4,938 images to train the 3D detection network WDRNet \cite{hu2021wide}. Some sample images and detection results are shown in Fig.~\ref{CarExp}.

\begin{table}[htbp]
\centering
\begin{tabular}{c|ccc}
\hline \hline
\textbf{Hardware} & \textbf{Parameters} & \textbf{Values} & \textbf{Unit} \\ \hline
\multirow{2}{*}{Vicon} & Precision & 1 & mm \\
               & Frequency & 100 & Hz \\ \hline
\multirow{2}{*}{Camera} & Frequency   & 60  & Hz      \\
& Resolution & 1280 * 720 & Pixel\\ \hline
\multirow{2}{*}{Car}    & $\ell_1, \ell_2, \ell_3$   &  0.28, 0.24, 0.14  & m     \\
          & Max speed & 1 & m/s \\ \hline
\multirow{2}{*}{MAV}    & $\ell_1, \ell_2, \ell_3$   &  0.25, 0.32, 0.085   & m     \\
          & Max speed & 5 & m/s \\ \hline\hline
\end{tabular}
\caption{Key parameters of the real-world hardware system}
\label{hardware}
\end{table}

Two typical scenarios were considered. The parameter settings are as follows. $\sigma_{\bar{T}}^2$ is set to $0.2^2$. The initial $\mathbf{P}$ is set to $10\mathbf{I}$. The values of $\sigma_p^2$, $\sigma_v^2$ and $\sigma_\alpha^2$ in the process matrix $\mathbf{W}$ are set to $0$, $0.001^2$, and $0.0001^2$, respectively. The initial position state and velocity state are set to $[1, 2, 0]^\textup{T}$ and $[0, 0, 0]^{\textup{T}}$, respectively.  The variances $\sigma_\lambda^2$ and $\sigma_\theta^2$ of measurement noises in the bearing-only and bearing-angle methods are set to $0.01^2$. The settings are primarily informed by the empirical error characteristics.

\subsubsection{Scenario 1 - Zigzag Motion (With Lateral Motion)}

In the first scenario, the target moves along a straight line, while the observer follows it but keeps maneuvering in the lateral direction.
The experimental results are shown in Fig.~\ref{CarExp1} and summarized as follows. 1) The proposed bearing-box approach works effectively in this scenario since the motion and size of the target can be accurately estimated.
2) The bearing-only method also works effectively because the observer moving left and right strengthens the observability.
3) The bearing-angle method also works well, though its performance is worse than the bearing-box approach. That is because although the assumption of a sphere-shaped target is not strictly satisfied, it is approximately satisfied since the target's sizes in different directions do not vary significantly.

\subsubsection{Scenario 2 - Straight Motion (Without Lateral Motion)}

In the second scenario, the target also moves along a straight line, but the observer follows the target without left or right motion.
The experimental results are shown in Fig.~\ref{CarExp1} and summarized as follows. 1) The proposed bearing-box approach still works effectively. 2) The performance of the bearing-only method is downgraded due to weak observability. 3) The bearing-angle method still performs worse than the proposed bearing-box approach.

The quantitative results of these two experiments conducted on the car are shown in Table~\ref{NIDE}. Our method achieves the best performance among these three methods.

\subsection{Experiments on Public Datasets}

\begin{table}[t!]\color{black}
\centering
\small
\setlength{\tabcolsep}{1mm}
\renewcommand{\arraystretch}{1.3}
\begin{tabular}{c|c|c|c}
\hline \hline
Estimator & KITTI & Cars & MAVs  \\ \hline
Bearing-only \cite{li2022three} & 54.5\% & 17.1\% & 90.9\%   \\ \hline
Bearing-angle \cite{ning2024bearing} & 41.6\% & 19.9\% & 96.5\% \\\hline
Bearing-box (Ours) & \textbf{34.6\%}  & \textbf{13.5\%} & \textbf{15.2\%} \\ \hline\hline
\end{tabular}
\caption{Quantitative evaluation results of the three bearing-based methods based on NIDE. Here, NIDE refers to normalized integral depth error.}
\label{NIDE}
\end{table}

Considering that KITTI \cite{geiger2013vision} is one of the most popular datasets for 3D object detection tasks \cite{zhang2021objects}, we tested the bearing-box, bearing-only, and bearing-angle methods on 204 sequences in KITTI. Each sequence has more than 30 frames and satisfies the higher-order motion requirement for bearing-based methods. MonoFlex \cite{zhang2021objects} is used to generate 3D detection results (without depth).

The qualitative result is shown in Fig.~\ref{KITTI}, which shows two representative scenarios. Due to the lack of the actual speed of the target, we only provide the position and velocity estimation results. 1) As can be seen, the proposed bearing-box method converges quickly and is more stable than the other two methods. 2) In the second scenario, although the bearing-angle method converges faster initially, its final estimate is less accurate due to the invalid isotropic assumption. 3) The bearing-only method does not converge in the second scenario because the bearing varies little.

The quantitative results of 204 sequences are shown in Table~\ref{NIDE}, concluding the NIDE of the error curve in Fig.~7. As can be seen, the proposed bearing-box method shows the best performance among these three methods.

In summary, the proposed bearing-box approach works effectively across different real-world experiments. It can handle more complex target shapes and observer motions than the bearing-only and bearing-angle methods.
\begin{figure*}[t!]
\centering\includegraphics[width=1\textwidth]{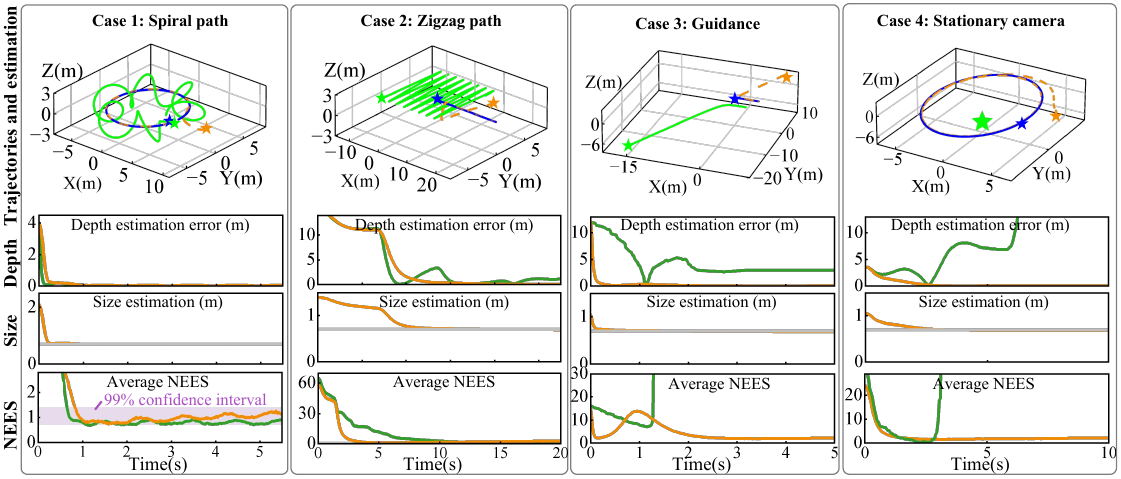}
\caption{Four numerical simulation experiments to prove the convergence.}
\label{Matlab}
\end{figure*}
\section{Experiments for MAVs}
The simulation and real-world experiments are conducted to verify the proposed estimator in Section~\ref{MAVEstimator}.
\begin{figure*}[t!]
\subfigure[The automatic system based on AirSim and Unreal Engine;]{
\centering
\label{AirsimDataCollect}
\includegraphics[width=0.54\linewidth]{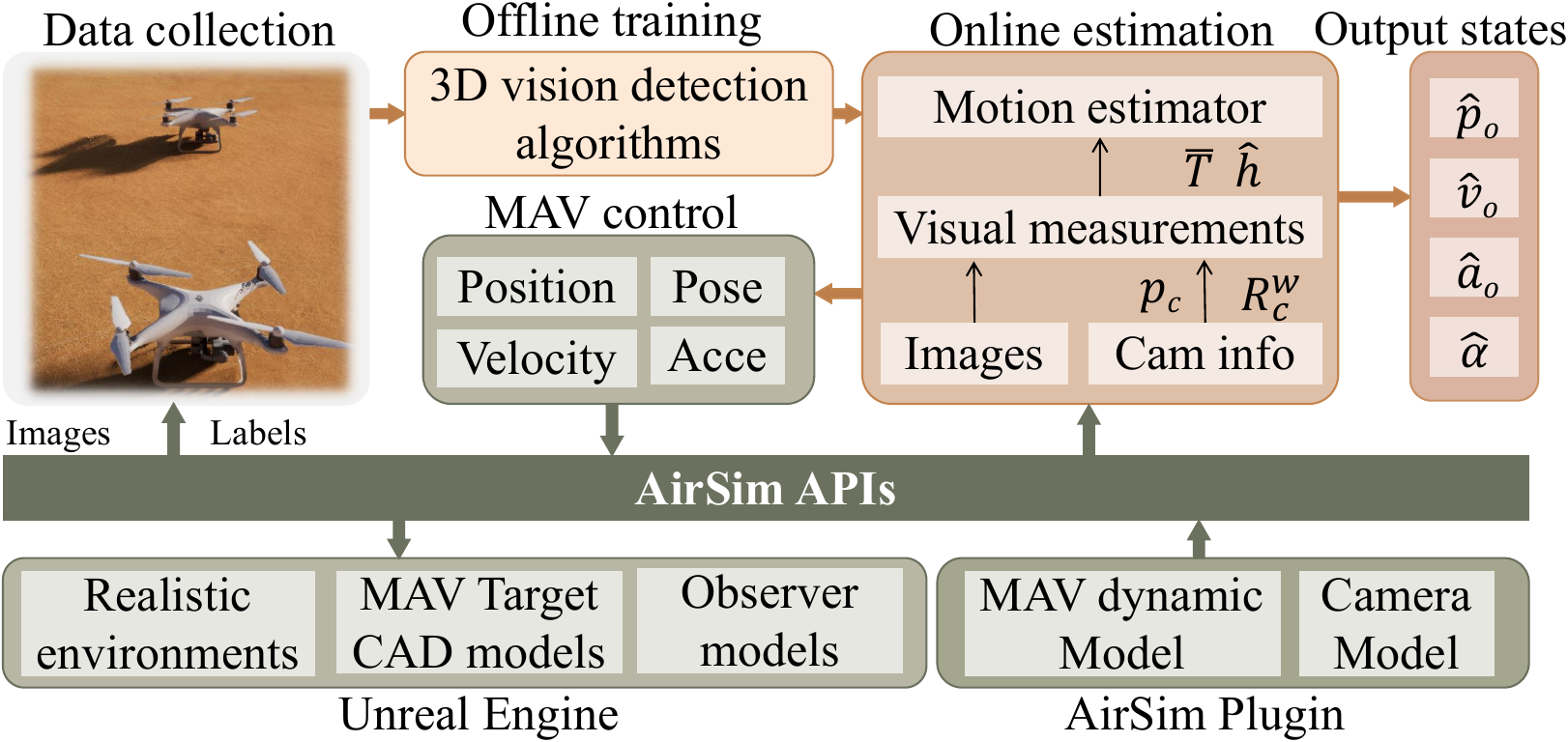}}
\subfigure[Images with ground truth collected in the Unreal Engine;]{
\centering
\label{AirsimSample}
\includegraphics[width=0.45\linewidth]{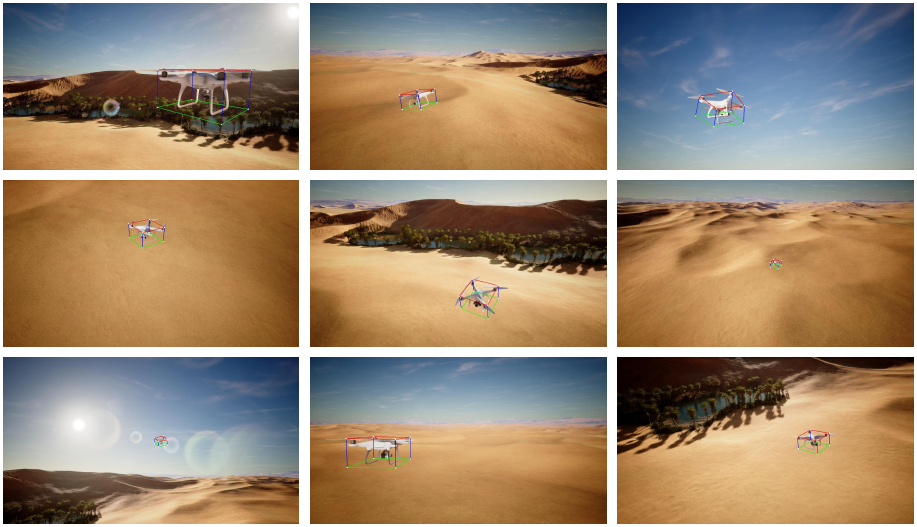}}
\subfigure[The simulation experimental result of a stationary observer;]{
\centering
\label{Airsim-Exp1}
\includegraphics[width=1\linewidth]{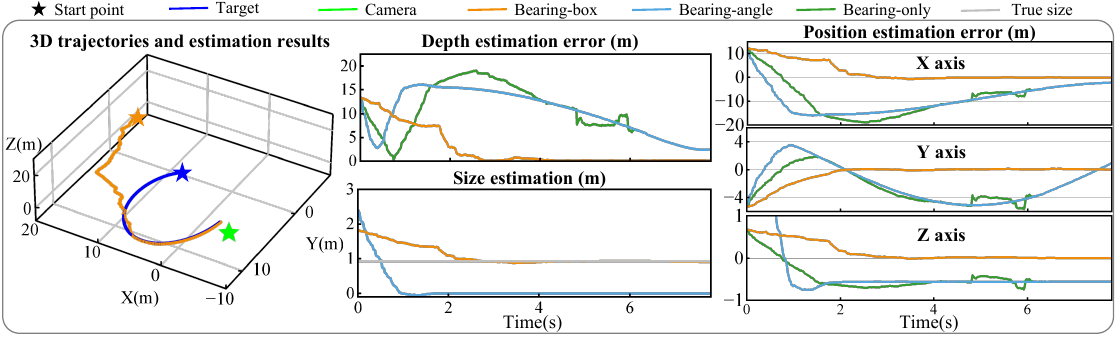}}

\caption{The automatic system and simulation experimental results in the AirSim environment.}
\label{AirsimExp}
\end{figure*}
\begin{figure*}[t!]
\subfigure[The target MAV;]{
\centering
\label{TargetMAV}
\includegraphics[width=0.21\linewidth]{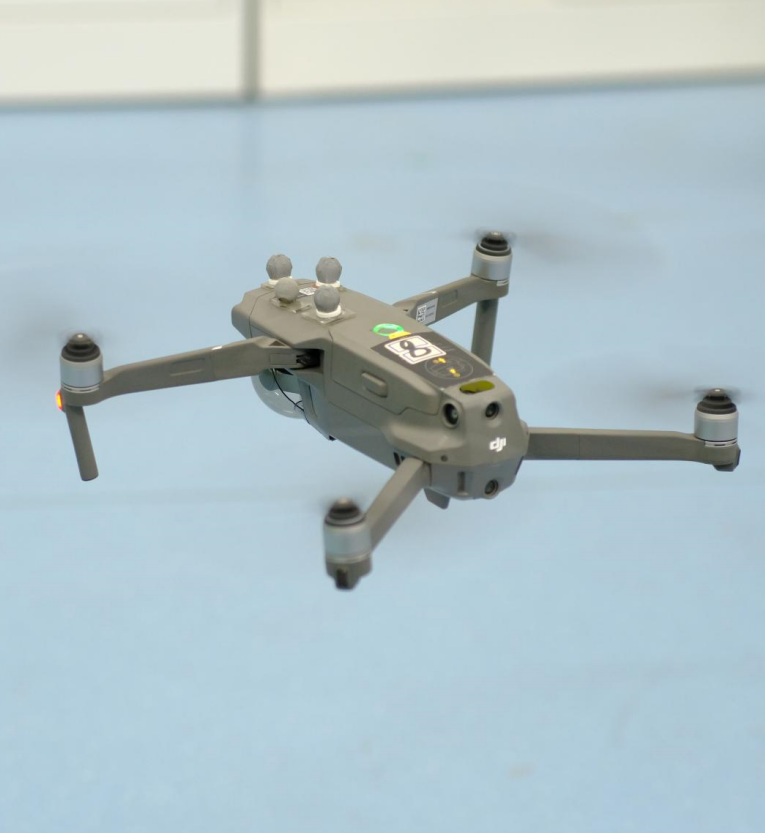}}
\subfigure[Some enlarged samples of the MAV dataset collected by the system. The red line represents the direction of the thrust;]{
\centering
\label{ViconDroneData}
\includegraphics[width=0.78\linewidth]{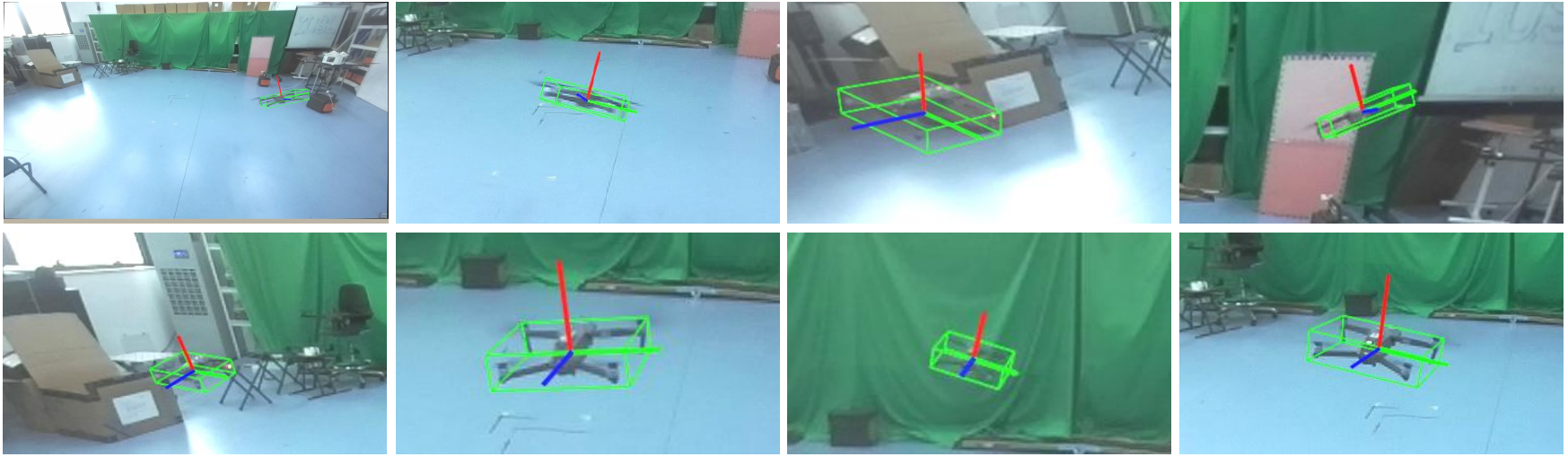}}
\subfigure[Two real-world experiments conducted on the MAV;]{
\centering
\label{Traj-1}
\includegraphics[width=0.99\linewidth]{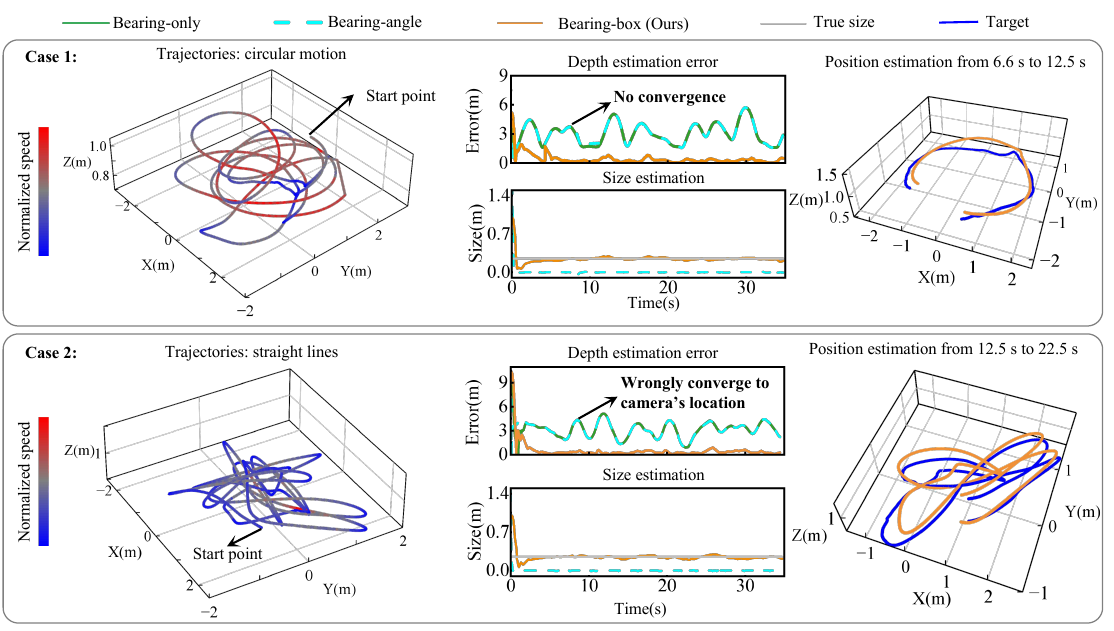}}

\caption{The MAV's experimental results of Cases 1 and 2 in the real world. The bearing-only and bearing-angle methods fail to converge.}
\end{figure*}
\label{SecMAVExp}


\subsection{Convergence Proofs based on Numerical Experiments}

To verify the convergence of our estimator, we conducted NEES (nearest estimation error squared) analysis \cite{bar2004estimation}. The value of NEES is calculated as
\begin{align}
    e = \frac{1}{n_x}(\mathbf{x}-\hat{\mathbf{x}})^\textup{T}\mathbf{P}^{-1}(\mathbf{x}-\hat{\mathbf{x}}),
\end{align}
where $\mathbf{x}$ is the true state value, $\hat{\mathbf{x}}$ is its estimation, $n_x$ is the dimension of the state vector,  and $\mathbf{P}$ is the covariance matrix.

In this section, we present a set of numerical simulation results to show the effectiveness and convergence of the proposed bearing-box approach. The values of $\sigma_{h}$ and $\sigma_{g}$ are both set to $0.01$. We assume that the target remains within the camera's field of view throughout the simulation.

\subsubsection{Case 1: Camera along Spiral Lines} 
In the first case, the target moves in a circle, while the camera moves along spiral lines. This scenario has strong observability because the observer has higher-order motion than the target. As shown in Fig.~\ref{Matlab}, both the proposed bearing-box method and the bearing-only method perform well.

\subsubsection{Case 2: Camera along a Zigzag Path}
In the second case, the camera moves along the zigzag path while the target moves with a constant speed. The experimental results are shown in Fig.~\ref{Matlab}. Even though the camera's motion is still in a higher order than that of the target, the observability is lower than in the first case because the camera's speed is constant along each line. Therefore, the bearing-only method converges more slowly than our bearing-box method.

\subsubsection{Case 3: Camera in a Guidance Path}
In the third case, the camera moves in a guidance path while the target has a constant speed. The camera has a weaker observability than the second case. When the camera keeps a constant distance from the target, the bearing-only method begins to diverge, while our method already converges to a stable and accurate result.
\subsubsection{ Case 4: Stationary Camera}
In the fourth case, the camera is stationary while the target moves along a complex trajectory. This scenario is important because the camera's motion is lower-order than the target. As shown in Fig.~\ref{Matlab}, the proposed method performs well, while the bearing-only method fails to converge due to the lack of observability.

In summary, as the order of the camera's motion becomes lower, the bearing-only method converges more slowly until it cannot converge when the camera has no higher-order motion. The reason is that existing bearing-based methods still require the higher-order motion of the camera. However, our method no longer requires this condition. The target's motion is still observable even when the camera is stationary. The following shows more results on this challenging scenario.
\subsection{Simulation Experiments based on AirSim}

The simulation experiments were based on the AirSim and Unreal Engine platforms, which can provide high-fidelity simulated vision and dynamics.
The framework of the simulation system is shown in Fig.~\ref{AirsimDataCollect}.
We selected a desert environment that has realistic lighting, plants, and textures (Fig.~\ref{AirsimSample}). Two MAVs are considered. One is used as the target, while the other is the observer. The true values of ${\ell}_1$, ${\ell}_2$, and ${\ell}_3$ of the target MAV are $0.92$ m, $0.92$ m, and $0.55$ m, respectively.
WDRNet \cite{hu2021wide} was used to detect the 3D boxes of the target MAV. To train WDRNet, we collected 10,668 images in the desert environment.

We conducted one experiment where the observer was always stationary, the fourth and most challenging experiment in the last section. This is a representative scenario because the observer's motion is lower-order than the target's. The parameter settings are as follows. $\sigma_h^2$ and $\sigma_{\bar{T}}^2$ of the measurement noises are set to $0.02^2$ and $0.2^2$. The initial $\mathbf{P}$ is set to $2\mathbf{I}$. The variables $\sigma_v^2$, $\sigma_a^2$, and $\sigma_\alpha^2$ in $\mathbf{W}$ are set to
$0.0001^2$, $0.001$, and $0.0001^2$, respectively.

As shown in Fig.~\ref{Airsim-Exp1}, the observer is stationary while the target MAV moves along a circle whose radius is $4$~m and velocity is $4$~m/s. The data sampling frequency is $50$~Hz. The observations of the experimental results are summarized as follows.
1) This scenario is extremely challenging for the bearing-only and bearing-angle methods since the target's motion is not observable due to the stationary observer.
Without surprise, the bearing-only and bearing-angle methods fail to work effectively
as shown in Fig.~\ref{Airsim-Exp1}. The depth estimation errors of bearing-only and bearing-angle methods come to a small value at the end because the distance between the target and the observer is small, instead of the method's convergence. Fig.~\ref{Airsim-Exp1} contains the position estimation errors of these methods along three axes. It gives us a clear explanation that these two methods do not converge. 2) By contrast, our bearing-box approach can accurately estimate the size and motion of the target even if the observer is stationary. The fundamental reason is that the attitude information is related to the high-order acceleration information of the target. 3) The reason for the overlap of the two error curves of bearing-only and bearing-angle methods is that both incorrectly converge to the camera's position. 

\subsection{Real-World Experiments in Indoor Environments}

We conducted real-world experiments on a target MAV, DJI Mavic, as shown in Fig.~\ref{TargetMAV}. A professional pilot controlled the MAV to achieve complex trajectories. Table~\ref{hardware} shows the parameters of the target MAV. We also adopted the automatic labeling system in Section~\ref{sec:car} to collect the data and ground truth. Some samples are shown in Fig.~\ref{ViconDroneData}. We collected 11,659 images to train the 3D vision detection network WDRNet \cite{hu2021wide}.

The experiments were also conducted when the observer was stationary, the same scenario as the simulation experiment. To make the task more challenging, the target MAVs had more complex and highly maneuverable motions.
This is a representative and challenging scenario because the observer's motion is lower-order than the target's. Due to the unobservability, bearing-only and bearing-angle methods would fail to work in this scenario.
The parameters are as follows. The variance $\sigma_h^2$ and $\sigma_{\bar{T}}^2$ are set to $0.03^2$ and $0.3^2$, respectively. The variances of the noises in bearing-only and bearing-angle estimators are all set to $0.01^2$. The values are based on the ground truth. The initial matrix $\mathbf{P}$ is set to $10\mathbf{I}$. As for the process noise matrix $\mathbf{W}$, the variances for the noises of velocity and acceleration are set to $0.001^2$ and $0.0005$, respectively. The initial state $\alpha$ is set to $1$.

\begin{figure*}[t!]	
\subfigure[The 3D detection results;]{
\centering
\includegraphics[width=0.34\linewidth]{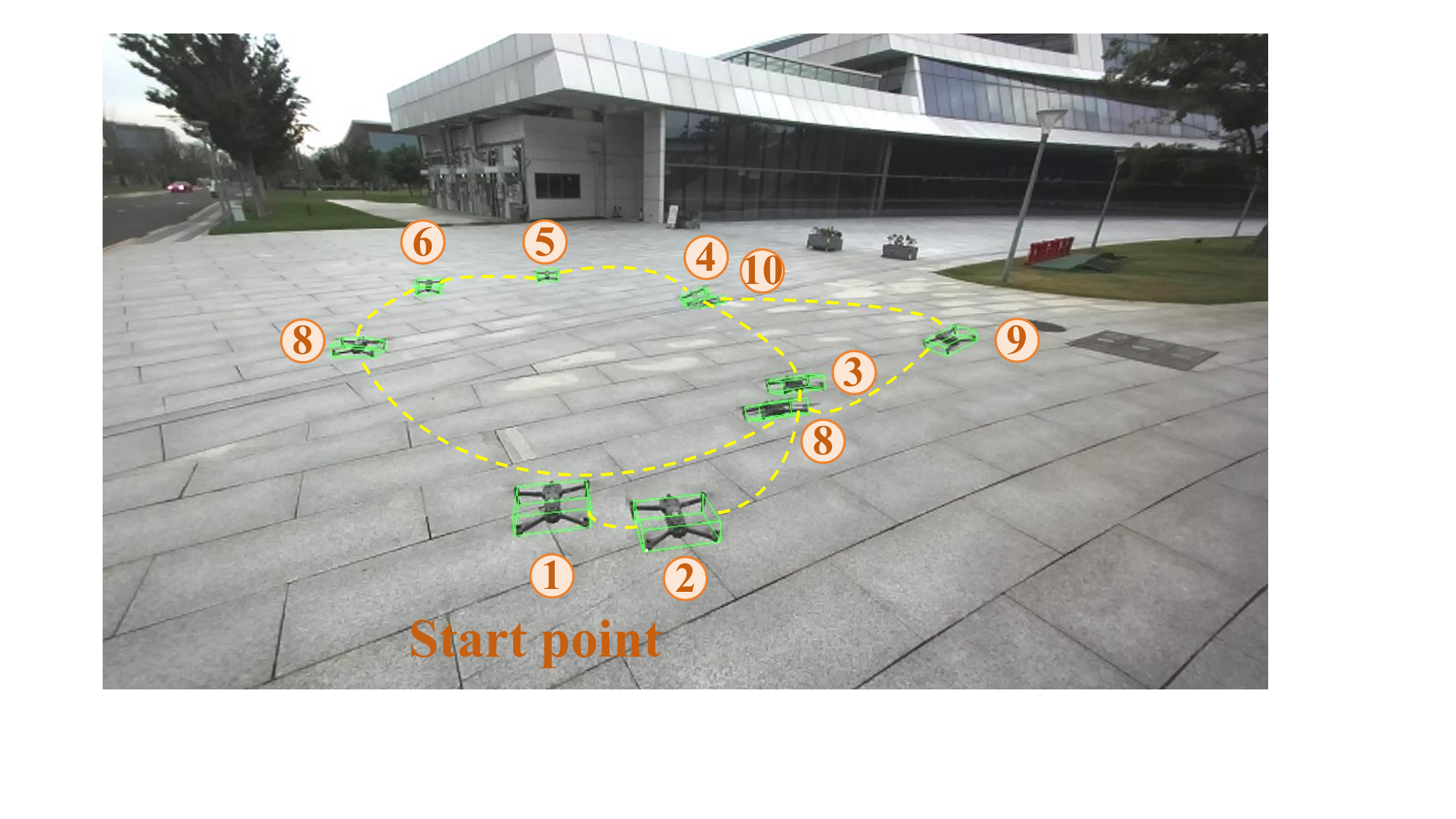}
\label{OutVis}}
\subfigure[The outdoor experiment results conducted on a MAV;]{
\centering
\includegraphics[width=0.64\linewidth]{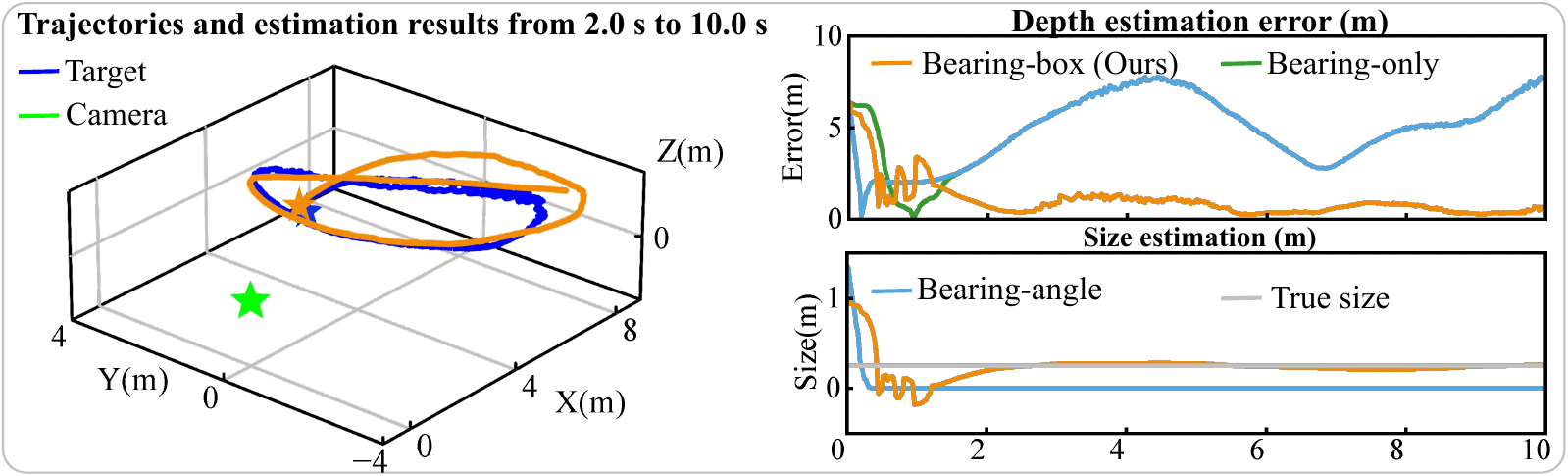}
\label{OutExp}}
\caption{The experimental results in the outdoor environment. The symbol of the star represents the start position.}
\label{OutRealExp}
\end{figure*}

\subsubsection{Scenario 1 - Target in Circular Motion}

In the first scenario,  the target MAV mostly moves in circles, as shown in Fig.~\ref{Traj-1}.
The motion of the MAV is much more complex than the motion in the AirSim environment.

The experimental results are summarized as follows. 1) Our method works effectively since it can estimate the target's size and localize it.  Even though the distance error does not converge to zero, the precision of the estimation result is acceptable considering the complex motion of the target and the noise of measurements in real-world experiments. 2) By contrast, the bearing-only and bearing-angle estimators do not work effectively under this condition. The bearing-only and bearing-angle estimators inaccurately converge to the camera’s position. The depth estimation errors of these two methods do not diverge because the distances between the target and the observer are not large. 3) The bearing-angle method wrongly estimates the size of the target as zero ($\ell=0$) because this is the solution to make the measurement equation  $\mathbf{p}_c^w = \mathbf{p}_o^w - \frac{\mathbf{g}}{\theta}\ell$ valid in \cite{ning2024bearing}.

\subsubsection{Scenario 2 - Target along Straight Lines}
In the second scenario, the target MAV moves along random straight lines while the observer remains still. As shown in Fig.~\ref{Traj-1}, the trajectory of the MAV is complex.

The experimental results are summarized as follows. 1) Our method can successfully localize the target and outputs stable estimation results. 2) Without surprise, the bearing-only and bearing-angle estimators still cannot work under this scenario because the observer does not have a higher-order maneuver than the target, and their outputs are the positions of the camera. Their depth estimation results do not diverge to infinity because their outputs are the position of the observer. 

The quantitative results of the target MAV are shown in Table~\ref{NIDE}, where our method achieves the best performance.
In summary, our bearing-box approach works effectively in real-world experiments. It can handle highly maneuverable MAVs even when the observer is stationary, whereas the bearing-only and bearing-angle methods fail to work.

\subsection{Real-World Experiments in Outdoor Environments}

The outdoor experiment is also based on a fixed camera, which is shown in Fig.~\ref{OutRealExp}. We collected 4,386 images and manually labeled the ground truth based on keypoints. We adopted Yolov11-Pose as the pose estimation network for training.
The experiment was conducted when the target MAV was moving in complex and circular motion. The 3D detection results are shown in Fig.~\ref{OutVis} every 1~s.
The experimental results show that our method could successfully localize the target MAV in the outdoor environment, while the other two methods could not converge under this scenario.

\subsection{Discussion}

This section discusses the influence of 3D detection networks on the performance of the proposed method. Our method is based on a Kalman filter, which already utilizes temporal filtering to effectively handle noisy measurements and detection failures. If detection fails, such as when the object is occluded or moves out of view, the Kalman filter still performs well by leveraging prior estimates and the system model. If detection is noisy, the estimator can still filter out noise by tuning the process noise and measurement noise covariances.
The thrust of the multi-copter MAVs is its component of the attitude along one axis. Therefore, it is reasonable to use the detection results to describe the thrust. 

\section{Conclusion}

This paper proposes a novel bearing-box approach to estimate the motion of target objects from visual measurements. The core novelty of this approach lies in the fact that it can significantly enhance observability and avoid restricted assumptions in the existing methods by exploring the 3D detection measurements that are already widely available but have not been fully exploited so far. In particular, compared to the classic bearing-only approach, it can avoid the restricted observability condition that the observer must have high-order lateral motion. Compared to the latest bearing-angle approach, it can avoid the restricted isotropic shape assumption. More importantly, when extended to MAVs, it is still effective even when the observer’s motion is lower order than the target’s and thereby has supreme performance when dealing with maneuverable MAVs, while existing estimators fail. In the future, the bearing-box approach can be combined with self-localization systems such as SLAM to achieve the simultaneous estimation of ego motion and target motion. 
When the cameras are high-maneuverable, event cameras will play a more important role in this scenario.

\section{Appendix}
\subsection{Matrix Transformation of $\mathbf{P}_h$}
\label{appendix-trans}
In this section, we add more details of the matrix transformation that exists in
\begin{align}
    \begin{bmatrix}
\mathbf{I}_{3 \times 3} & \mathbf{u}_{3 \times 1} \\
\mathbf{P}_h & \mathbf{O}_{3 \times 1} \\
\end{bmatrix}\longrightarrow \begin{bmatrix}
\mathbf{I}_{3 \times 3} & \mathbf{u}_{3 \times 1} \\
\mathbf{O}_{3 \times 3} & \mathbf{P}_h\mathbf{u} \\
\end{bmatrix},
\end{align}
where $\mathbf{P}_h= \mathbf{I}-\mathbf{h}\mathbf{h}^\textup{T}$, $\mathbf{h}$ is a unit vector and $\mathbf{u}$ is a normal vector. This transformation occurs in many derivations.

First, subtracting the first row from the second row yields
\begin{align} \label{Phtran}
\begin{bmatrix}
\mathbf{I}_{3 \times 3} & \mathbf{u}_{3 \times 1} \\
\mathbf{I}-\mathbf{h}\mathbf{h}^\textup{T} & \mathbf{O}_{3 \times 1} \\
\end{bmatrix}\longrightarrow
\begin{bmatrix}
1 & 0 & 0 & u_1 \\
0 & 1 & 0 & u_2 \\
0 &0 & 1 & u_3 \\
h_1^2 & h_1h_2 & h_1h_3 & u_1 \\
h_1h_2 & h_2^2 & h_2h_3 & u_2 \\
h_1h_3 & h_2h_3 & h_3^2 & u_3 \\
\end{bmatrix}
\end{align}
where $\mathbf{h} = [h_1, h_2, h_3]^\textup{T}$ and $\mathbf{u} = [u_1, u_2, u_3]^\textup{T}$. Cause $\mathbf{h}$ is a unit vector, its components have the relation:
\begin{align}
h_1^2 + h_2^2 + h_3^2 = 1.
\end{align}
Besides, $\mathbf{P}_h$ can be represented as:
\begin{align}
\mathbf{P}_h =
\begin{bmatrix}
1-h_1^2 & -h_1h_2 & -h_1h_3 \\
-h_1h_2 & 1 - h_2^2 & -h_2h_3 \\
-h_1h_3 & -h_2h_3 & 1 - h_3^2
\end{bmatrix}.
\end{align}

Then, the matrix in \eqref{Phtran} can be transformed into:
\begin{align}
\left[\begin{array}{ccc;{2pt/2pt}c}
1 & 0 & 0 & u_1 \\
0 & 1 & 0 & u_2 \\
0 & 0 & 1 & u_3 \\
\hdashline [2pt/2pt]
0 & 0 & 0 & (1 - h_1^2)u_1 - h_1h_2u_2 - h_1h_3u_3 \\
0 & 0 & 0 & -h_1h_2u_1 -(1 - h_2^2)u_2 - h_2h_3u_3 \\
0 & 0 & 0 & -h_1h_3u_1 - h_2h_3u_2 - (1 - h_3^2)u_3 \\
\end{array} \right].
\end{align}
Therefore, the right bottom block can be represented as $\mathbf{P}_h\mathbf{u}$.

\section{Acknowledgements}
Many thanks for the help from our colleagues Shiliang Guo, Jiachen Liang, Jiahao Shen, Lufeng Xu, Hongyi Wang, Mengyu Ji, and Wenkang Ji during the experiments.

\bibliography{zsyReferenceAll} 
\bibliographystyle{ieeetr}

\begin{IEEEbiography}[{\includegraphics[width=1in,height=1.25in,clip,keepaspectratio]{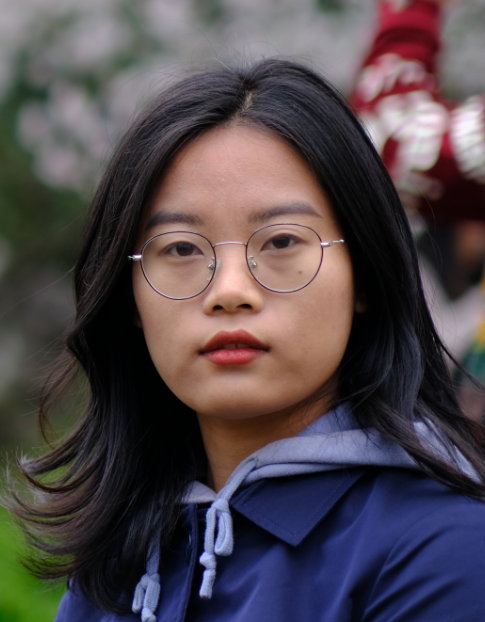}}]
{Yin Zhang} received the B.Sc. degree in measurement and control technology and instrumentation from Tianjin University, Tianjin, China, in 2017, and an M.Sc. degree in instrument science and technology from BUAA University, Beijing, China, in 2020. She received the Ph.D. degree in computer science and technology from the joint Ph.D. program of Zhejiang University and Westlake University, Hangzhou, China, in 2025.
Her research interests include motion estimation, robotic perception, and humanoid robots.	
\end{IEEEbiography}
\begin{IEEEbiography}[{\includegraphics[width=1in,height=1.25in,clip,keepaspectratio]{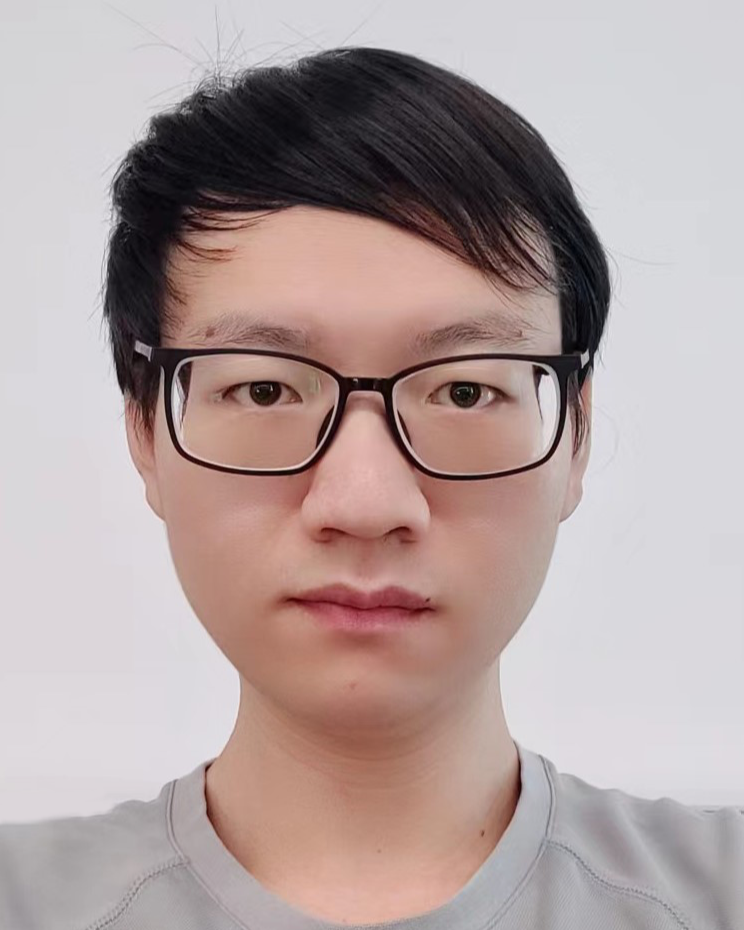}}]
{Zian Ning} received the B.S. degree in electrical engineering from Northwestern Polytechnical University, Xi’an, China, in 2015, and the M.S. degree in aeronautical engineering from Shanghai Jiao Tong University, Shanghai, China, in 2018. He received the Ph.D. degree in computer science and technology from the joint Ph.D. program of Zhejiang University and Westlake University, Hangzhou, China, in 2024. His research interests include high-maneuverable and autonomous micro aerial vehicles.
\end{IEEEbiography}

\begin{IEEEbiography}[{\includegraphics[width=1in,height=1.25in,clip,keepaspectratio]{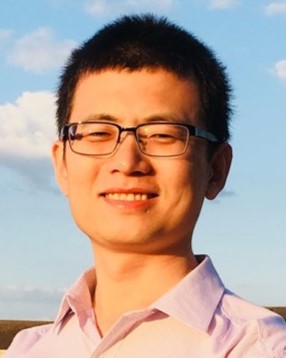}}]
{Shiyu~Zhao}(Senior Member, IEEE) received the B.Eng. and M.Eng. degrees from BUAA University, Beijing, China, in 2006 and 2009, respectively, and the Ph.D. degree from the National University of Singapore, Singapore, in 2014, all in electrical engineering.

From 2014 to 2016, he was a Postdoctoral Researcher with the Technion-Israel Institute of Technology, Haifa, Israel, and the University of California at Riverside, Riverside, CA, USA. From
2016 to 2018, he was a Lecturer with the Department of Automatic Control and Systems Engineering, University of Sheffield, Sheffield, U.K. He is currently an Associate Professor with the School of Engineering, Westlake University, Hangzhou, China. His research focuses on theories and applications of robotic systems.

\end{IEEEbiography}

\end{document}